\definecolor{navyblue}{rgb}{0.0,0.0,0.5}
\crefname{prop}{Proposition}{Propositions}
\crefname{thm}{Theorem}{Theorems}
\crefname{lem}{Lemma}{Lemmas}
\crefname{algorithm}{Algorithm}{Algorithms}
\newtheorem{thm}{Theorem}
\newtheorem{corollary}{Corollary}
\newcommand{\figwidthtwo}{0.48\textwidth}
\newcommand{\figwidththree}{0.31\textwidth}
\newcommand{\figwidththreeplus}{0.36\textwidth}
\newcommand{\figwidthfour}{0.23\textwidth}
\newcommand{\trainset}{\mathcal{S}}
\newcommand{\xspace}{\mathcal{X}}
\newcommand{\yspace}{\mathcal{Y}}
\newcommand{\uset}{\mathcal{U}}
\newcommand{\vset}{\mathcal{V}}
\newcommand{\defeq}{\mathrel{\overset{\makebox[0pt]{\mbox{\normalfont\tiny\sffamily def}}}{=}}}
\newcommand{\zerovec}{\mathbf{0}}
\newcommand{\EE}{\mathbb{E}}
\newcommand{\PP}{\mathbb{P}}
\newcommand{\RR}{\mathbb{R}}
\DeclareMathOperator*{\argmax}{arg\,max}
\DeclareMathOperator*{\argmin}{arg\,min}
\newcommand{\nummodes}{{m_x}}
\newcommand{\predictglobal}{{S_{global}}}
\newcommand{\predictlocal}{{S_{local}}}
\newif\ifconsiderlater
	\newcommand{\todo}[1]{{\color{cyan} \textbf{XXX [#1] XXX}}}
\newcommand{\todo}[1]{}
\newif\ifSupp
\newcommand{\XX}{{\mathcal{X}}}
\newcommand{\YY}{{\mathcal{Y}}}
\title{An implicit function learning approach for parametric modal regression}
\author{%
	Yangchen Pan, Ehsan Imani 
	\\
	Univ. of Alberta \& Amii\\
	{\small \texttt{\{pan6,imani\}@ualberta.ca}} 
	\And
	Amir-massoud Farahmand \\
	Vector Institute \& Univ. of Toronto \\
	{\small \texttt{farahmand@vectorinstitute.ai} }
	\And
	Martha White\\
	Univ. of Alberta\\
	{\small \texttt{whitem@ualberta.ca}} 
}
\begin{document}
\maketitle
\begin{abstract}
For multi-valued functions---such as when the conditional distribution on targets given the inputs is multi-modal---standard regression approaches are not always desirable because they provide the conditional mean. Modal regression algorithms address this issue by instead finding the conditional mode(s). Most, however, are nonparametric approaches and so can be difficult to scale. Further, parametric approximators, like neural networks, facilitate learning complex relationships between inputs and targets. In this work, we propose a parametric modal regression algorithm. We use the implicit function theorem to develop an objective, for learning a joint function over inputs and targets.
We empirically demonstrate on several synthetic problems that our method (i) can learn multi-valued functions and produce the conditional modes, (ii) scales well to high-dimensional inputs, and (iii) can even be more effective for certain uni-modal problems, particularly for high-frequency functions. We demonstrate that our method is competitive in a real-world modal regression problem and two regular regression datasets. 
\end{abstract}
	
\section{Introduction}

The goal in regression is to find the relationship between the input (observation) variable $X \in \XX$ and the output (response) $Y \in \yspace$ variable, given samples of $(X,Y)$.
%
The underlying premise is that there exists an unknown underlying function $g^\ast: \XX \mapsto \YY$ that maps the input space $\XX$ to the output space $\YY$. We only observe a noise-contaminated value of that function: sample $(x, y)$ has $y = g^\ast(x) + \eta$ for some noise $\eta$. If the goal is to minimize expected squared error, it is well known that $\EE[Y | x]$ is the optimal predictor~\citep{Bishop06}. It is common to use Generalized Linear Models~\citep{nelder1972glm}, which attempt to estimate $\EE[Y | x]$ for different uni-modal distribution choices for $p(y | x)$, such as Gaussian ($l_2$ regression) and Poisson (Poisson regression). For multi-modal distributions, however, predicting $\EE[Y | x]$ may not be desirable, as it may correspond to rarely observed $y$ that simply fall between two modes. Further, this predictor does not provide any useful information about the multiple modes.   

Modal regression is designed for this problem, and though not widely studied in the general machine learning community, has been actively studied in statistics. Most of the methods are non-parametric, and assume a single mode \citep{lee1989mode,lee1998mode,kemp2012mode,yu2012mode,yao2014modalreg,lv2014robust,feng2017modal}. The basic idea is to adjust target values towards their closest empirical conditional modes, based on a kernel density estimator. These methods rely on the chosen kernel and may have issues scaling to high-dimensional data due to issues in computing similarities in high-dimensional spaces. There is some recent work using quantile regression to estimate conditional modes~\citep{ota2018qr}, and though promising for a parametric approach, is restricted to linear quantile regression.  

A parametric approach for modal regression would enable these estimators to benefit from the advances in learning functions with neural networks. The most straightforward way to do so is to learn a mixture distribution, such as with conditional mixture models with parameters learned by a neural network~\citep{powell1987rbf,bishop1994mixture,williams1996nnconditionalprob,dirk1997nnconditionalprob,dirk1998nnconditionalprob,zen2014mdn,ellefsen2019mdn}. The conditional modes can typically be extracted from such models. Such a strategy, however, might be trying to solve a harder problem than is strictly needed. The actual goal is to simply identify the conditional modes, without accurately representing the full conditional distribution. Training procedures for the conditional distribution can be more complex. Methods like EM can be slow~\citep{vlassis1999conditionalEM} and some approaches have opted to avoid this altogether by discretizing the target and learning a discrete distribution~\citep{weigend1995predconditionalnn,Feindt2004ANB}. Further, the mixture requires particular sensitive probabilistic choices to be made such as the number of components. 

In this paper, we propose a new parametric modal regression approach, by developing an objective to learn a parameterized function $f(x,y)$ on both input feature and target/output. We use the Implicit Function Theorem~\citep{munkres1991manifold}, which states that if we know the input-output relation in the form of an implicit function, then a general multi-valued function, under certain gradient conditions, can locally be converted to a single-valued function. We learn a function $f(x,y)$ that approximates such local functions, by enforcing the gradient conditions. We introduce a regularization, that enables the number of discovered modes to be controlled, and also prevents spurious modes.
We empirically demonstrate that our method can effectively learn the conditional modes on several synthetic problems, and that it scales well when the input is made high-dimensional. We also show an interesting benefit that the joint representation learned over $x$ and $y$ appears to improve prediction performance even for uni-modal problems, for high frequency functions where the function values change quickly between nearby $x$. To test the approach on real data, we propose a novel strategy to use a regular regression dataset to test modal regression algorithms. We demonstrate the utility of our method on this real world modal regression dataset, as well two regular regression datasets. 
%

\section{Problem Setting}

We consider a standard learning setting where we observe a dataset of $n$ samples, $\trainset = \{(x_i, y_i)\}_{i=1}^n$. Instead of the standard regression problem, however, we tackle the modal regression problem. The goal in modal regression is to find the set of conditional modes 
\begin{equation*}
M(x) = \left\{ y: \frac{\partial p(x, y)}{\partial y} = 0, \ \frac{\partial^2 p(x, y)}{\partial y^2} < 0\right\}
\end{equation*}
where in general $M(x)$ is a multi-valued function. 
Consider the example in Figure~\ref{fig:circle_problem_setting}. For $x = 0$, the two conditional modes are $y_1 = -1.0$ and $y_2 = 1.0$.

The standard approaches to find conditional modes involve learning $p(y |x)$ or using non-parametric methods to directly estimate the conditional modes. For example, for a conditional Gaussian Mixture Model, a relatively effective approximation of these modes are the means of the conditional Gaussians. More generally, to get precise estimates, non-parametric algorithms are used, like the mean-shift algorithm~\citep{cheng1995meanshift}. These algorithms attempt to cluster points based on $x$ and $y$, to find these conditional modes. We refer readers to~\cite{chen2018modalkde,chen2014modalreg} for a detailed review. 

\begin{figure}[!thbp]
	\vspace{-0.5cm}	
	\centering
	\subfigure[The Circle dataset]{
		\includegraphics[width=\figwidththree]{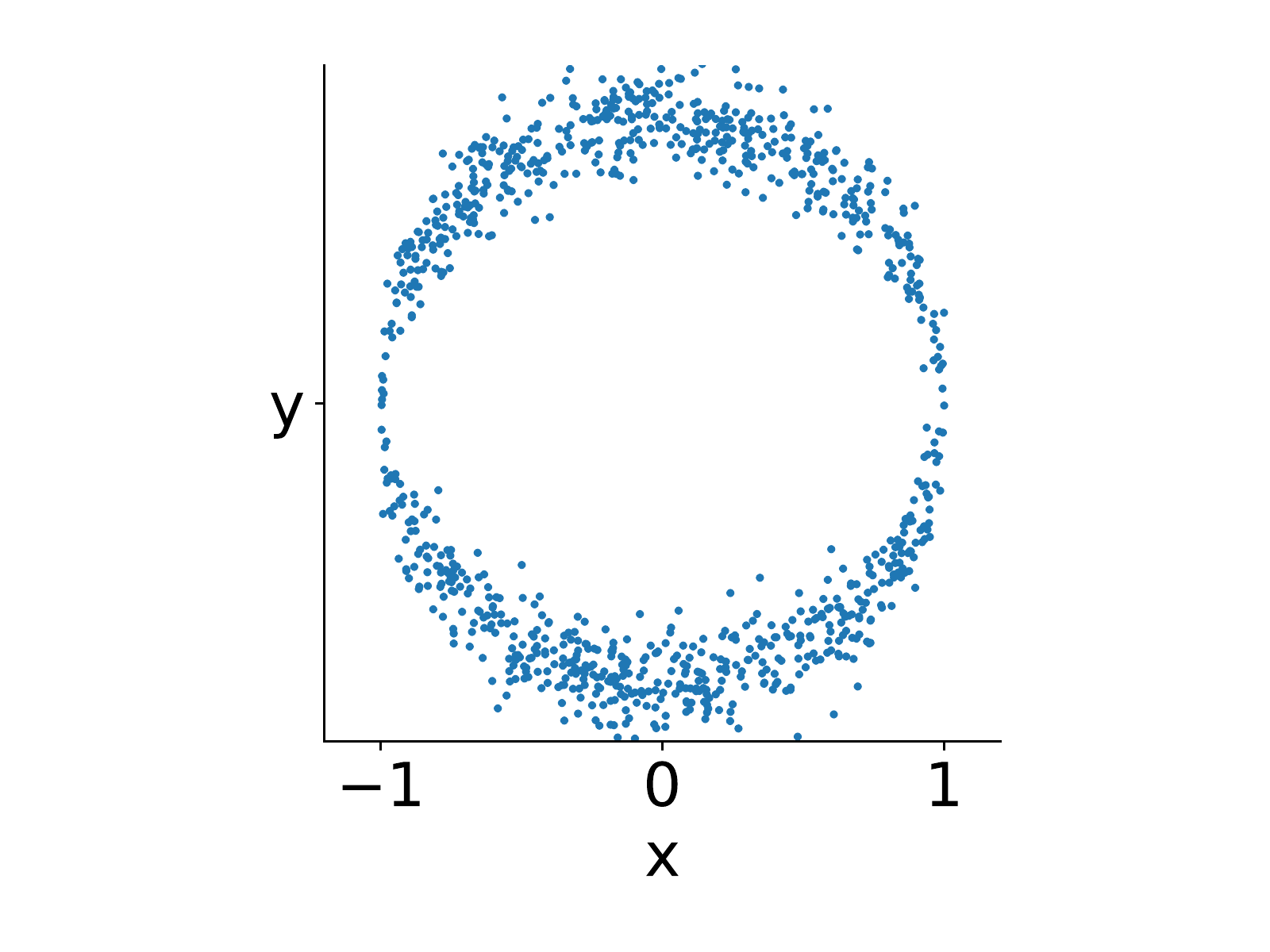}}
	\subfigure[Distribution $p(y|x=0)$]{
		\includegraphics[width=\figwidththree]{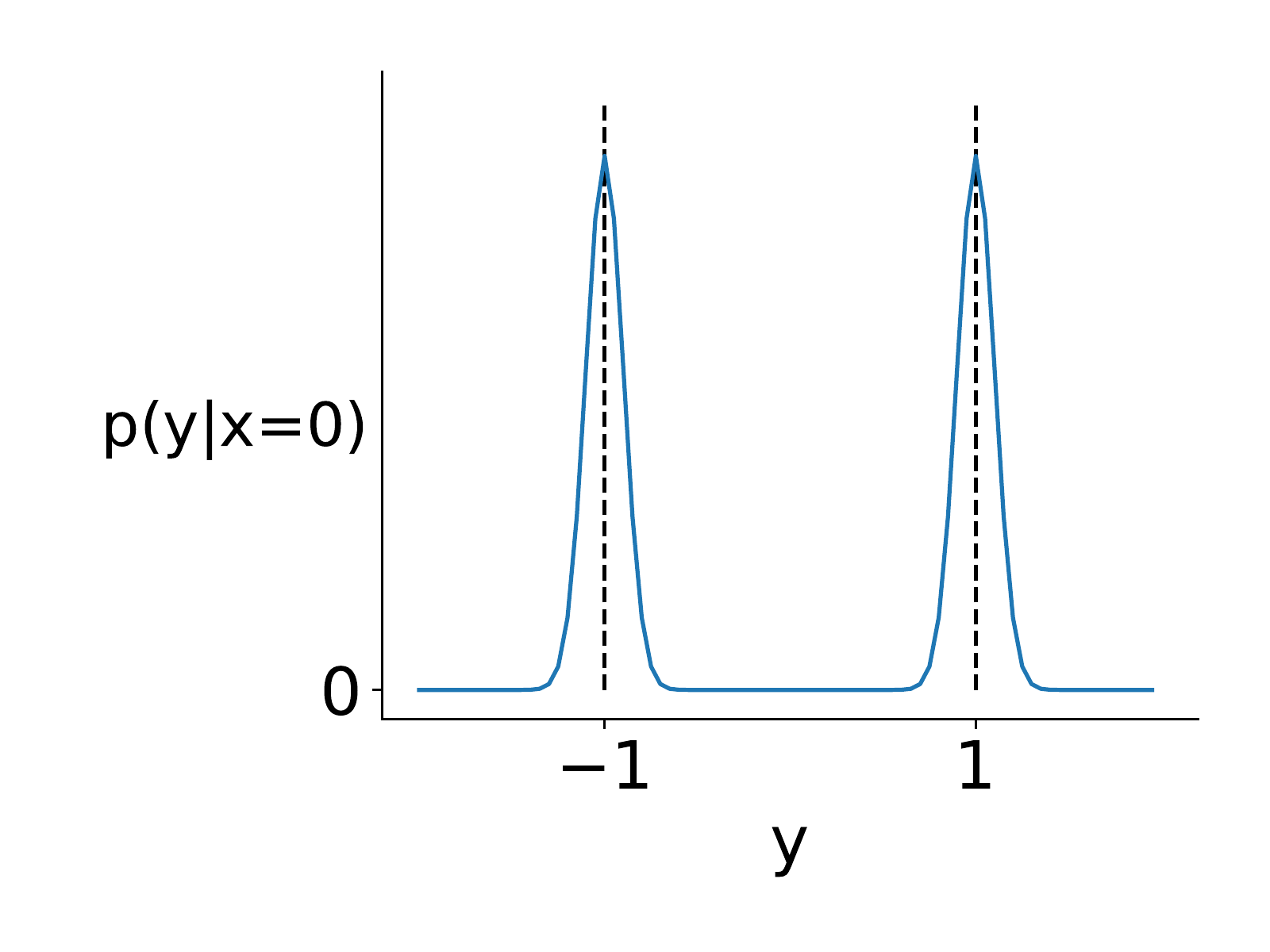}}
\begin{minipage}{\figwidththree}	
			\vspace{-3.0cm}	
	\caption{
		(a) The Circle ataset is a synthetic dataset generated by uniformly sampling $x\in (-1, 1)$, and then sampling $y$ from $0.5\mathcal{N}(\sqrt{1-x^2}, 0.1^2)+0.5\mathcal{N}(-\sqrt{1-x^2}, 0.1^2)$. (b) The bimodal conditional distribution over $y$, for $x=0$.
	}\label{fig:circle_problem_setting}
\end{minipage}
	\vspace{-0.5cm}	
\end{figure}
%

\section{An implicit function learning approach}

In this section, we develop an objective to facilitate learning parametric functions for modal regression. The idea is to directly learn a parameterized function $f(x,y)$ instead of a function taking only $x$ as input. 
The approach allows for a variable number of conditional modes for each $x$. Further, it allows us to take advantage of general parametric function approximators, like neural networks, to identify these modal manifolds that capture the  relationship between the conditional modes and $x$. 

\subsection{Implicit Function Learning Objective}

Consider learning an $f(x,y)$ such that $f(x,y) = 0$ for all conditional modes and non-zero otherwise. 
Such a strategy---finding $f(x,y) = 0$ for all conditional modes---is flexible in that it allows for a different number of conditional modes for each $x$. The difficulty with learning such an $f$, particularly under noisy data, is constraining it to be zero for conditional modes $y_j$ and non-zero otherwise. 
To obtain meaningful conditional modes $y_1, \ldots, y_\nummodes$ for $x$, the $y$ around each $y_j$ should be described by the same mapping $g_j(x)$. The existence of such $g_j$ is guaranteed by the Implicit Function Theorem~\citep{munkres1991manifold}\footnote{The initial version of the theorem was originally proposed by Augustin-Louis Cauchy (1789–1857) and the generalization to functions of any number of real-variables was done by Ulisse Dini (1845–1918).} as described below, under one condition on $f$.

\textbf{Implicit Function Theorem:} Let $f: \RR^d \times \RR^k \mapsto \RR^k$ be a continuously differentiable function. Fix a point $(a, b) \in \RR^d \times \RR^k$ such that $f(a, b) = \zerovec$, for $\zerovec \in \RR^k$. If the Jacobian matrix $J$, where the element in the i$th$ row and $j$th column is $J_{[ij]} = \frac{\partial f(a, b)[i]}{\partial y[j]}$, has nonzero determinant, then there exists open sets $\uset_a, \vset_b$ containing $a, b$ respectively, s.t. $\forall x\in \uset_a$, $\exists$ an unique $y \in \vset_b$ satisfying $f(x, y) = \zerovec$. That is, $\exists g: \uset_a \mapsto \vset_b$. Furthermore, such a function $g(\cdot)$ is continuously differentiable and its derivative can be found by differentiating $f(x, g(x)) = \zerovec$. 

We focus on the one dimensional case (i.e. $k=1$) in this work. The theorem states that if we know the relationship between independent variable $x$ and dependent variable $y$ in the form of implicit function $f(x, y) = 0$, then under certain conditions, we can guarantee the existence of some function defined locally to express $y$ given $x$. For example, a circle on two dimensional plane can be expressed as $\{(x, y) | x^2 + y^2 = 1\}$, but there is no definite expression (single-valued function) for $y$ in terms of $x$. However, given a specific point on the circle $(x_0, y_0)$ ($y_0\neq0$), there exists an explicit function defined locally around $(x_0, y_0)$ to express $y$ in terms of $x$. For example, at $x_0 = 0$, we have two local functions $g_1$ and $g_2$: for the positive quadrant we have $y_0 = g_1(x_0) = \sqrt{1-x_0^2}$ and for the negative quadrant we have $y_0 = g_2(x_0) = -\sqrt{1-x_0^2}$. Notice that at $y_0 = 0$, the condition required by the implicit function theorem is not satisfied: $\frac{\partial (x^2 + y^2 - 1)}{\partial y} = 2y =0$ at $y_0 = 0$, and so is not invertible. 

Obtaining such smooth local functions $g$ enables us to find these smooth modal manifolds. The conditional modes $g_1, \ldots, g_\nummodes$ satisfy $f(x, g_j(x)) = 0$ and $\frac{\partial f(x,g_j(x))}{\partial y} \neq 0$, where $m_x \in \mathbb{N}$ is the number of conditional modes for $x$. When training $f$, we can attempt to satisfy both conditions to ensure existence of the $g_j$. The gradient condition ensures that for $y$ locally around $f(x, g_j(x))$, we have $f(x, y) \neq 0$. This encourages the other requirement that $f(x,y)$ be non-zero for the $y$ that are not conditional modes at least within the neighborhood of the true mode. We include more discussion about avoiding spurious modes in Section~\ref{sec-theoretical-insight}. 

In summary, given a training set $\{(x_i, y_i)\}_{i=1:n}$, we want to push $f(x_i, y_i) = 0$ and $\frac{\partial f(x_i, y_i)}{\partial y} \neq 0$. This naturally leads to the below modal regression objective: 
\begin{align}\label{implicit-loss}
L(\theta) &\defeq \sum_{i = 1}^{n} l_\theta(x_i, y_i) && \text{where } \ \ \ \ \  l_\theta(x, y) \defeq f_\theta(x, y)^2 + \left(\frac{\partial f_\theta(x, y)}{\partial y} +1 \right)^2 
\end{align}
We could add hyperparameters for the term constraining the derivative, both in terms of how much we weight it in the objective as well as the constant used to push $\frac{\partial f_\theta(x, y)}{\partial y}$ away from zero. We opt for this simpler choice, with no such hyperparameters. Using a probabilistic perspective, we justify the choice of using square for $f_\theta(x, y)$ and for pushing the partial derivative to $-1$. 

\newcommand{\indf}{j}

\textbf{Probabilistic interpretation.} Conventional statistical learning takes the perspective that the training samples are realized random variables drawn from some unknown probability distribution $(X, Y)\sim \PP: \xspace\times \yspace \mapsto \RR$. For example, we can assume the conditional distribution $p(y|x)$ is Gaussian, resulting in the conventional regression objective where we minimize the mean squared error. Our objective can also been seen as having an underlying probabilistic assumption, but with a weaker assumption. 

Consider the following data generation process. We sample $x$ from some unknown marginal, $p(x)$, then sample a mixture component $j$ from $p(j|x)$, and then sample the target $y$ from $p(y|x, j)$. 
To formally derive our objective, we introduce an oracle: $\indf: \xspace \times \yspace \mapsto [m_x]$. This function takes a sample $(x, y)$ as input, and returns the corresponding component from which the $y$ value got sampled: $\indf(x,y)$ is the index of the corresponding Gaussian component. Such a function is not accessible in practice and can only be used for our interpretation here.  

We assume that the noise around each conditional mode is Gaussian. More precisely, define $\epsilon(X,Y) \defeq g_{\indf(X, Y)}(X) - Y$. Our goal is to approximate $\epsilon(x,y)$ with parameterized function $f_\theta(x,y)$ for parameters $\theta$. We \textbf{assume}
\begin{equation}
\epsilon(X,Y) \sim \mathcal{N}(\mu = 0, \sigma^2) = \frac{1}{\sigma \sqrt{2\pi}} \exp{\left(-\frac{\epsilon(X, Y)^2}{2\sigma^2}\right)}
\end{equation}
Note that this is a \textbf{weaker assumption} than assuming $p(y|x)$ is Gaussian. For example, $p(y|x)$ can be a mixture of Gaussians. Under our probabilistic assumption, we can estimate $\theta$ by maximum likelihood estimation, giving the objective  
\begin{equation*}
\argmin_{\theta} -\sum_{i=1}^n \ln\left[\frac{1}{\sigma \sqrt{2\pi}} \exp{\left(-\frac{f_\theta(x_i, y_i)^2}{2\sigma^2}\right)}\right]
= \argmin_{\theta} \sum_{i=1}^n f_\theta(x_i, y_i)^2
.
\end{equation*}
Additionally, we then want to satisfy the constraint on the derivative of $f_\theta$. For an observed $(x,y)$, we have that for some $j\in [m_x]$, 
\begin{equation}
\vspace{-0.1cm}
\frac{\partial \epsilon(x, y)}{\partial y} =  \frac{\partial (g_j(x) - y)}{\partial y} = -1
.
\end{equation}
Therefore, when learning $f$, we encourage $\frac{\partial f_\theta(x, y)}{\partial y} = -1$ for all observed $y$ --- as required by the implicit function theorem.
%
In summary, our goal is to minimize the negative log likelihood of $f_\theta$, which approximates a zero-mean Gaussian random variable, under this constraint which we encourage with a quadratic penalty term. This gives the above objective~\ref{implicit-loss}. 

%
 
\textbf{Predicting modes.} In modal regression, it is common to either want to find the highest likelihood mode, or to extract all the modes to get a set of predictions. Most existing methods focus only on finding the single most likely mode since they typically have the assumption that the mode is unique~\citep{wang2017regmodalreg,feng2017modal}. The extension to the parametric setting, where we learn a surface characterizing the multi-modal structure, enables us to more obviously handle both cases.  

Assume we are given test point $x^\ast$. We define two sets
\begin{equation*}
\predictglobal(x^\ast) \!=\! \{y \ | \ y = \argmin_y l_\theta(x^\ast, y)\} \hspace{0.15cm} \text{and} \hspace{0.15cm} \predictlocal(x^\ast) \!=\!\left\{y \ \Big| \frac{\partial l_\theta(x^\ast, y)}{\partial y}  \!=\! 0,  \frac{\partial^2 l_\theta(x^\ast, y)}{\partial^2 y} > 0\right\}
\end{equation*}
For the first case, where we are only interested in the modes with highest likelihood, we need to find the set $\predictglobal(x^\ast)$.
For the second case, where the goal is to extract all conditional modes, we need find the set $ \predictlocal(x^\ast)$. To understand why these sets are appropriate, notice that we use $l_\theta$ instead of trying simply to find $y$ such that $f_\theta(x^\ast, y) = 0$. There are two reasons for this: $l_\theta(x^\ast,y)$ only selects for $y$ that satisfy the implicit function conditions and it also reflects the likelihood of $y$. For the first point, we need to ensure there is an implicit function at the testing point which can map from the input space to the target space. 

The second point is more nuanced. Notice that minimizing $l_\theta$ could return all the modes, not just the most likely ones, and so $\predictlocal(x)$ and $\predictglobal(x)$ should presumably return similar sets. We know that $\predictglobal(x^\ast) \subseteq \predictlocal(x^\ast)$, but ideally we want $\predictglobal$ to only consist of the most likely mode. 
Hence we further constrain the surface with another regularization term, which we introduce in the next section~\ref{sec-theoretical-insight}. In that section, we discuss how, under this regularization, the training points with lower likelihood should have a larger residual, leading to different local loss values. Consequently, $\argmin_y l_\theta(x^\ast, y)$ should be unique, or at least should consist of a smaller set of more likely modes. 

Finding either of these sets requires a search, or an optimization. In all our experiments, we opted for the simple strategy of searching over $200$ evenly spaced values in the range of $y$ to get the two set of predictions. That is, we implement 
\begin{align*}
\predictglobal(x^\ast) &\approx \{y_j | |l_\theta(x^\ast, y_j) - \min_{i\in[200]} l_\theta(x^\ast, y_i)| < 10^{-5}, j\in[200]\}\\ 
\predictlocal(x^\ast) &\approx\{y_j | l_\theta(x^\ast, y_j) < l_\theta(x^\ast, y_{j+1}), l_\theta(x^\ast, y_j) < l_\theta(x^\ast, y_{j-1}), j\in \{2,...,199\} \}
.
\end{align*}

\textbf{Connection to energy-based models.} It is worth mentioning that an alternative formulation is possible by combining energy-based models with a margin loss \citep{lecun2006tutorial}. Such model predicts an energy for a pair $(x,y)$. The training process pulls down the energy of pairs of $(x,y)$ in the dataset while the margin constraint avoids the trivial solution of predicting low energy all across the input space. In our method, it is the gradient condition in the implicit function theorem that prevents the trivial solution of predicting a mode everywhere. We may still want to ensure that $l_\theta(x^\ast, y)$ does not become small where there is no mode, as otherwise our method may predict spurious modes. We address this issue in the next section, with a regularization that we motivate theoretically and empirically.



\subsection{Higher Order Regularizers to Control the Number of Modes}\label{sec-theoretical-insight}

We first introduce Theorem~\ref{thm-unique-mode}, which motivates our regularization method for avoiding spurious modes. See Appendix~\ref{Sec:theory} for the proof.

\begin{thm}\label{thm-unique-mode}
 Let $f(x)$ be a continuously differentiable function s.t. $|f''(x)| \le u$ for some $u>0$. Let $a$ be such that $f(a) = \epsilon, f'(a) = k$. Then, $\nexists b$ such that $ 0 < |b - a| < \frac{2|k|}{u}, f(b)=\epsilon, f'(b) = k$.
\end{thm}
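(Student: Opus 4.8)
The plan is to argue by contradiction using a second-order Taylor expansion. Suppose, contrary to the claim, that there exists $b$ with $0 < |b-a| < \frac{2|k|}{u}$, $f(b) = \epsilon$, and $f'(b) = k$; write $d = |b-a| > 0$. First I would dispose of the degenerate case: if $k = 0$ the admissible interval $0 < |b-a| < 0$ is empty and the statement holds vacuously, so I may assume $k \neq 0$.

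The key step is to expand $f$ about $a$ and evaluate at $b$. Since $f$ is continuously differentiable and $f''$ exists everywhere (this is implicit in the hypothesis $|f''(x)| \le u$), Taylor's theorem with the Lagrange remainder gives, for some $\xi$ strictly between $a$ and $b$,
\begin{equation*}
f(b) = f(a) + f'(a)(b-a) + \tfrac{1}{2} f''(\xi)(b-a)^2.
\end{equation*}
Substituting $f(a) = f(b) = \epsilon$ and $f'(a) = k$, the value terms cancel, leaving $0 = k(b-a) + \tfrac{1}{2} f''(\xi)(b-a)^2$; dividing through by $(b-a) \neq 0$ yields $k = -\tfrac{1}{2} f''(\xi)(b-a)$.

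Taking absolute values and applying the bound $|f''(\xi)| \le u$ then gives $|k| \le \tfrac{1}{2} u\,|b-a| = \tfrac{1}{2} u d$, i.e. $d \ge \frac{2|k|}{u}$. This directly contradicts the assumption $d < \frac{2|k|}{u}$, which completes the proof.

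I expect the argument to be essentially routine; the only points needing care are verifying that the Lagrange remainder form of Taylor's theorem is justified under the stated smoothness (twice differentiability of $f$ on the open interval, with $f$ and $f'$ continuous), and treating the case $k=0$ separately. It is worth remarking that the expansion above uses only $f(a)=f(b)=\epsilon$ together with $f'(a)=k$; the hypothesis $f'(b)=k$ is never invoked, and a symmetric expansion about $b$ (using $f'(b)=k$ instead) produces the identical bound, so the conclusion is in fact slightly stronger than the stated theorem.
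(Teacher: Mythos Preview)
Your proof is correct and follows essentially the same approach as the paper: a contradiction argument via the second-order Taylor expansion with Lagrange remainder, yielding $|b-a|\ge \frac{2|k|}{u}$. Your additional remarks (handling $k=0$ and noting that the hypothesis $f'(b)=k$ is unused) are valid refinements that the paper's proof omits.
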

We can consider $a$ in this theorem as a true mode and hence $\epsilon=0, k = 1$. This theorem is telling us that within $\frac{2}{u}$ to $a$, there is no other point $b$ can make $f_\theta(x^\ast, b)=0, \frac{\partial f_\theta(x_i, y_i)}{\partial y} =-1$ if $|\frac{\partial^2 f_\theta(a, y)}{\partial^2 y}| \le u$ for any $y$ in the target space. The smaller the $u$ is, the farther away the spurious mode has to be. Hence, this theorem tells that the spurious mode should be mostly eliminated by small enough $u$. This suggests a regularization to penalize the second order derivative magnitude: 
\begin{equation}
	L_\eta(\theta) \defeq \sum_{i = 1}^{n} f_\theta(x_i, y_i)^2 + \left(\frac{\partial f_\theta(x_i, y_i)}{\partial y} +1 \right)^2 + \eta \left(\frac{\partial^2 f_\theta(x_i, y_i)}{\partial^2 y}\right)^2
\end{equation}
%
As we increase the weight $\eta$ for the regularizer, the optimization is restricted in the number of modes it can identify. The lowest error choice according to the objective is to find the modes with highest likelihood. Note that this regularizer is not the second order derivative wrt the parameters; it is wrt to the targets. Therefore, it does not introduce significantly more computational cost to compute, and has the same order of computation as the original objective. 

The original objective (with $\eta = 0$) actually already has some of the same regularizing affects, due to the regularizer on the gradient w.r.t. $y$. We provide some theoretical support that even with $\eta = 0$, the function is likely to avoid spurious modes in Appendix~\ref{Sec:theory}. However, the original objective does not allow for a mechanism to control this effect, whereas $\eta$ with this second order regularizer provides a tunable knob to more stringently enforce this to be the case. Further, it also allows the user to find a surface that learns only the highest likelihood modes, by picking larger $\eta$. In some cases, even if a mode exists in a multimodal distribution---and so it is not spurious---it may be preferred to ignore it if it has low probability of occurring. 

We empirically test the utility of this regularizer for identifying the modes with higher likelihood, and for removing any spurious modes. We generate a training set by uniformly sampling $x\in [-1, 1]$ and training target $y$ by sampling $y\sim 0.8\mathcal{N}(\sqrt{1-x^2}, 0.1^2)+0.2\mathcal{N}(-\sqrt{1-x^2}, 0.1^2)$ if $x<0$ and $y\sim 0.2\mathcal{N}(\sqrt{1-x^2}, 0.1^2)+0.8\mathcal{N}(-\sqrt{1-x^2}, 0.1^2)$ if $x\ge0$. That is, the highest likelihood mode is positive if $x < 0$ and is negative if $x>0$. We generate $4$k such points for training and another $1$k such points without adding noise to the target for testing. 

In Figure~\ref{fig:biased_circle_lc}, we visualize the learned function $l_\theta(0.5, y)$ by training our implicit learning objective with and without regularization respectively. For most points $x^\ast$, both functions return an accurate set of modes. But, at the more difficult location, in-between when the likelihoods of the modes switches, $x^\ast=0.5$, there are more clear differences. At $x^\ast=0.5$, the two true conditional modes with high and low likelihood are $\approx -0.87, 0.87$ respectively. Figure~\ref{fig:biased_circle_lc}(b) shows that without the regularization, $\argmin_y l_\theta(0.5, y)$ possibly returns spurious modes, with larger flat regions; this means we cannot distinguish between the modes with high and low likelihood. In contrast, Figure~\ref{fig:biased_circle_lc}(c) shows that with $\eta = 1$ we identify both the high likelihood mode and the low one by looking at the two local minima at around $\pm 0.87$ of the loss function $l_\theta$ trained with the above regularization. Further, the highest likelihood mode has a smaller $l_\theta(0.5, y)$, and would indeed be the only point in $\predictglobal(x^\ast)$ and correctly identified as the highest likelihood mode. We additionally show the global and local predictions with and without the regularization in Figure~\ref{fig:biased_circle_prediction}. 

\begin{figure}[!thbp]
	\vspace{-0.5cm}
	\centering
	\subfigure[Training Set]{
	\includegraphics[width=\figwidthfour]{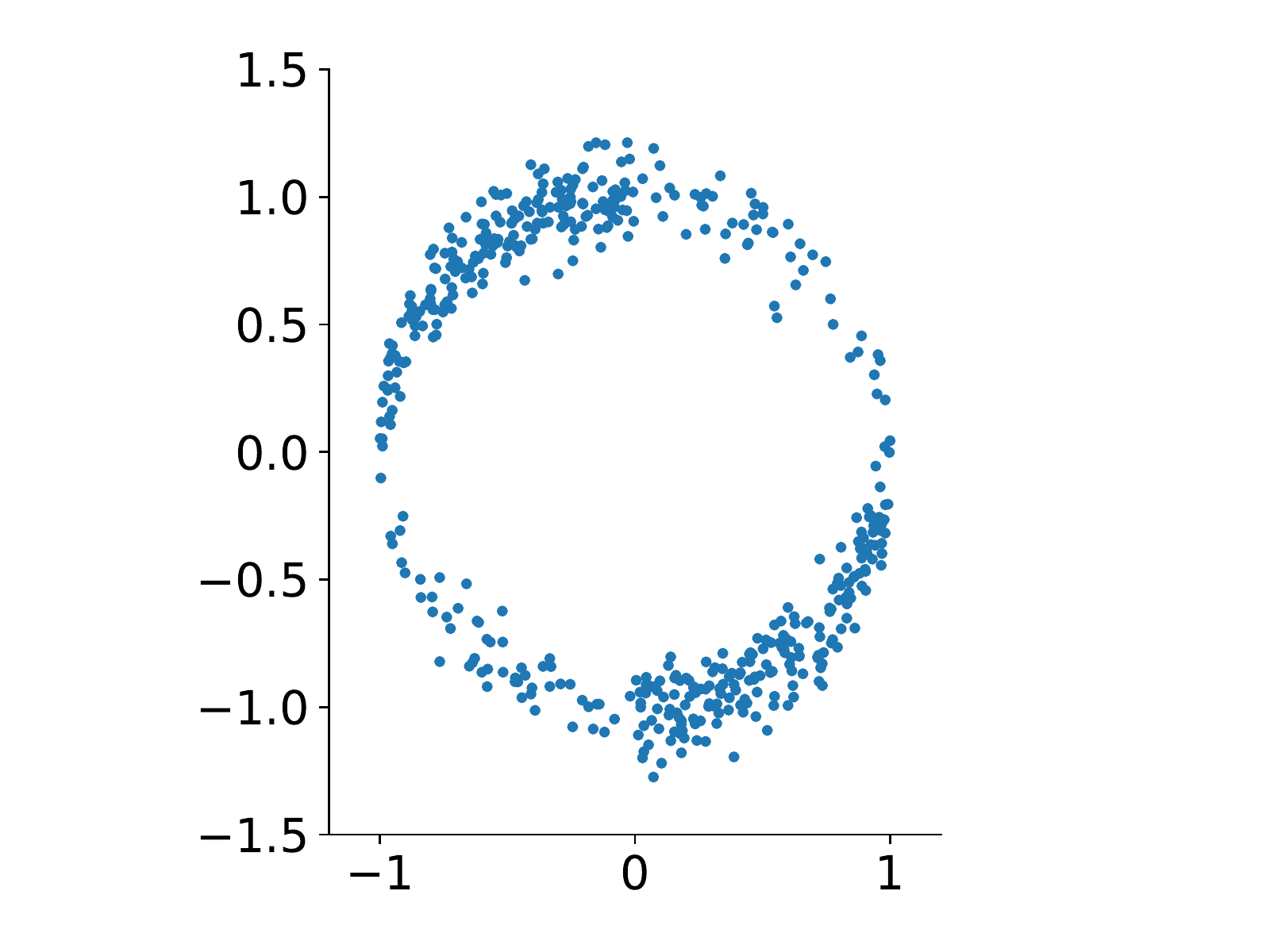}}
	\subfigure[$\eta=0$]{
	\includegraphics[width=\figwidthfour]{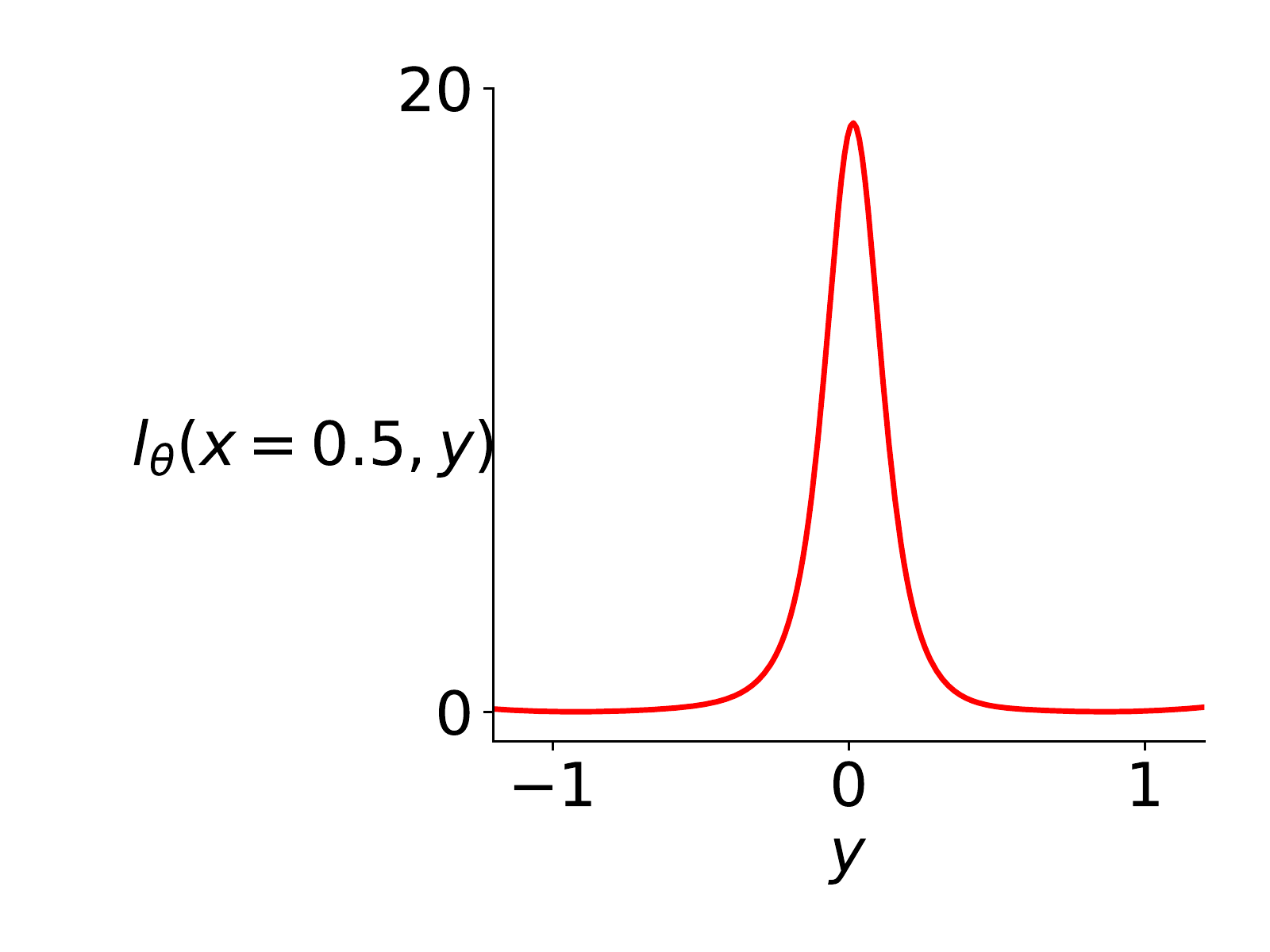}}
	\subfigure[$\eta=1$]{
	\includegraphics[width=\figwidthfour]{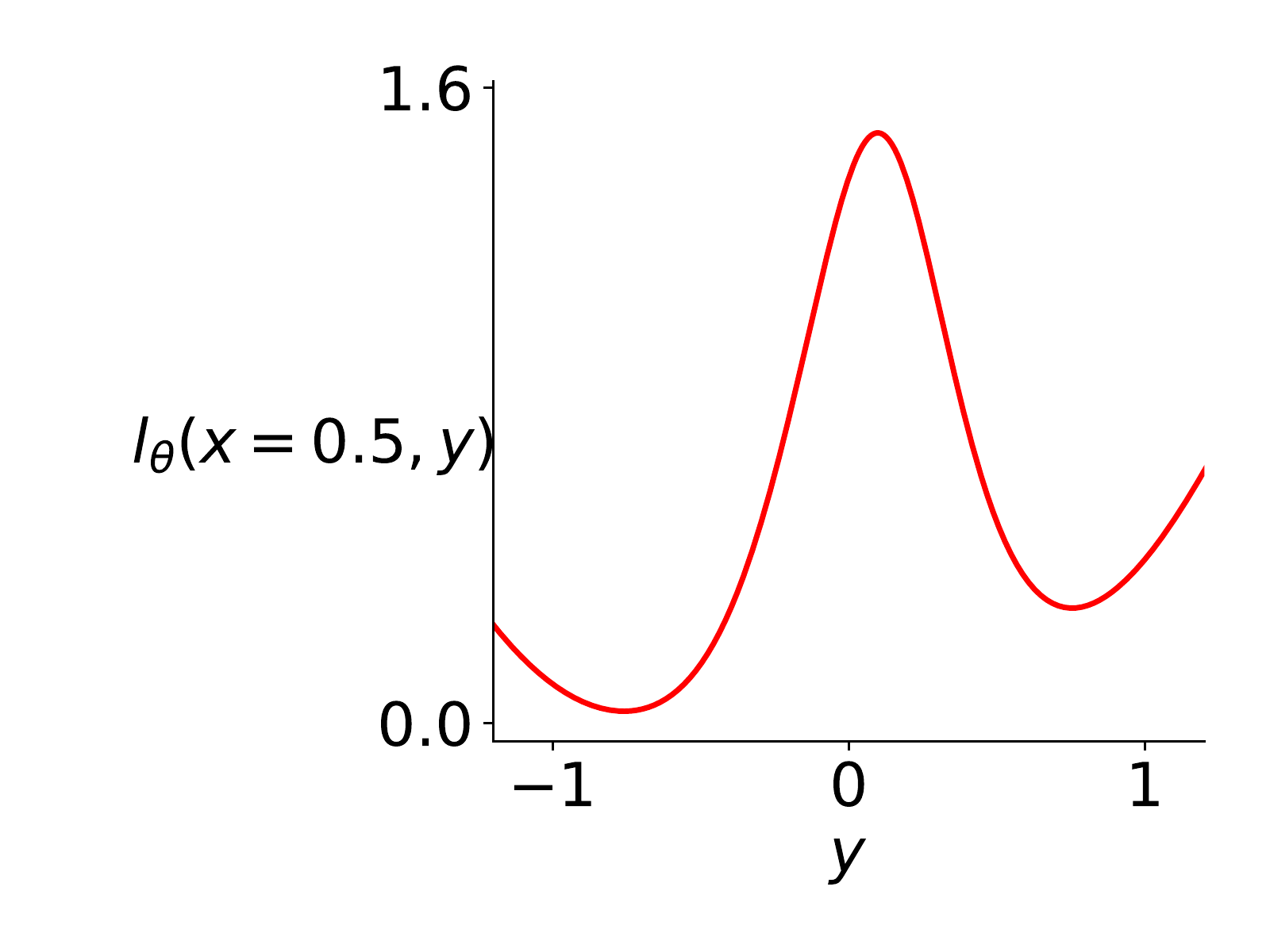}}
\subfigure[learning curves]{
	\includegraphics[width=\figwidthfour]{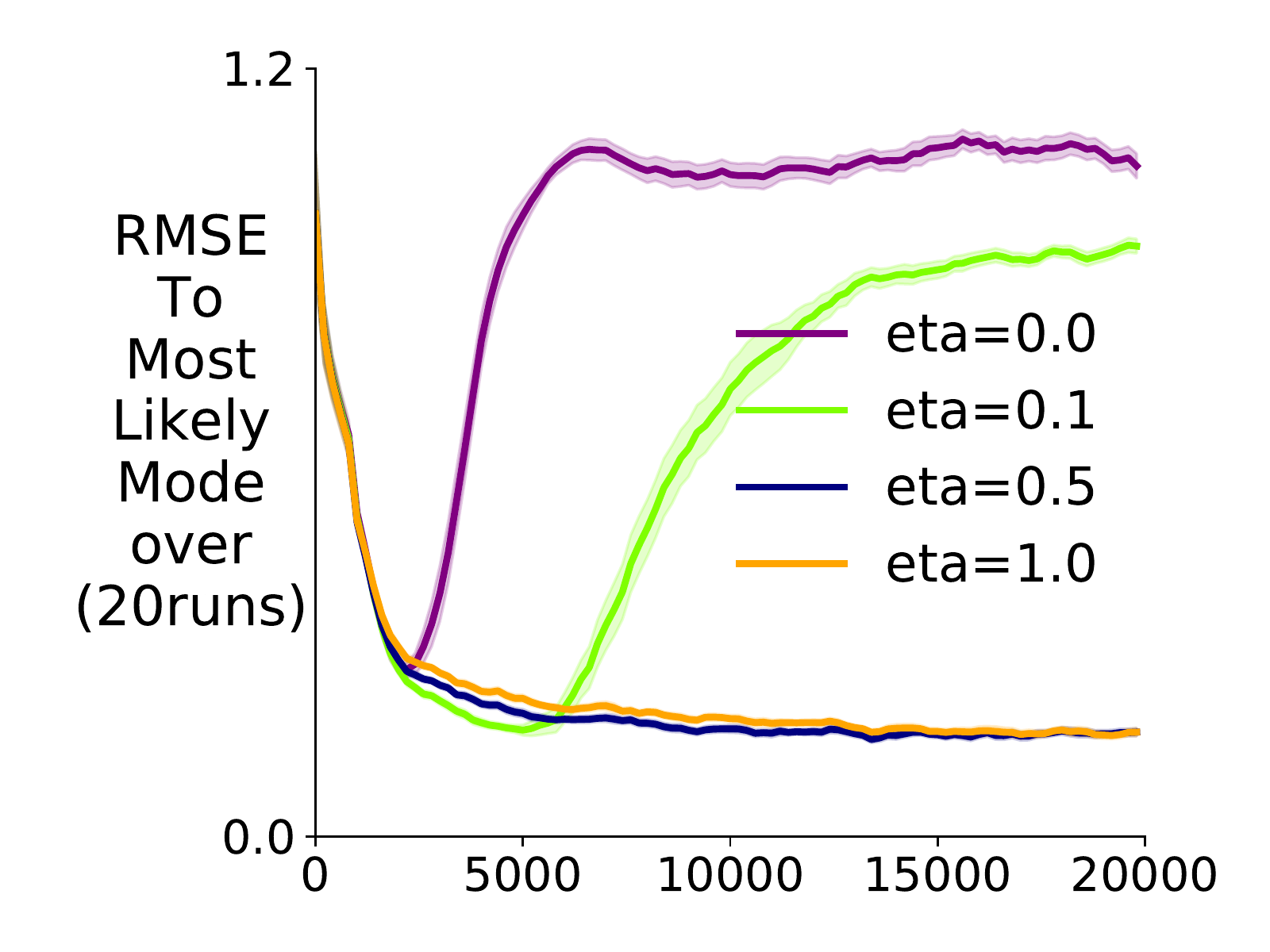}}
	\caption{
		\small
		(a) shows the training set; (b) shows the loss function $l_\theta(x^\ast=0.5, y)$ trained without regularization; (c) shows the one trained with $\eta=1$. (d) shows the learning curves for $\eta\in\{0.0, 0.1, 0.5, 1.0\}$ in terms of the root mean squared error (RMSE) as a function of number of mini-batch updates on the testing set. The RMSE is computed between randomly picked mode from $\predictglobal(\cdot)$ and the corresponding true  most likely mode. 
	}\label{fig:biased_circle_lc}
	\vspace{-0.5cm}
\end{figure}

\begin{figure}[!thbp]
		\vspace{-0.1cm}
	\centering
	\subfigure[$\eta=0, \predictglobal$]{
	\includegraphics[width=\figwidthfour]{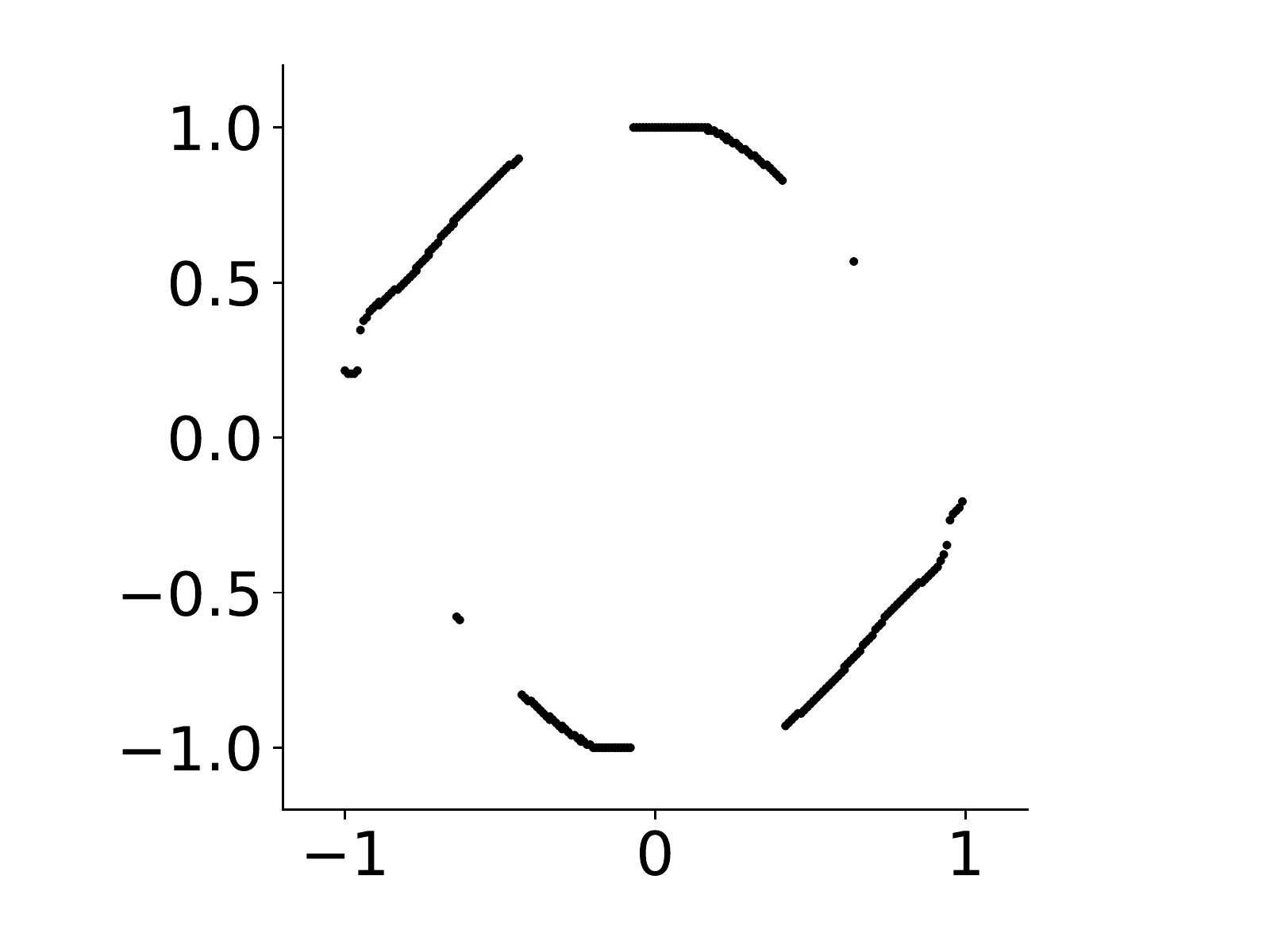}}
	\subfigure[$\eta=0, \predictlocal$]{
		\includegraphics[width=\figwidthfour]{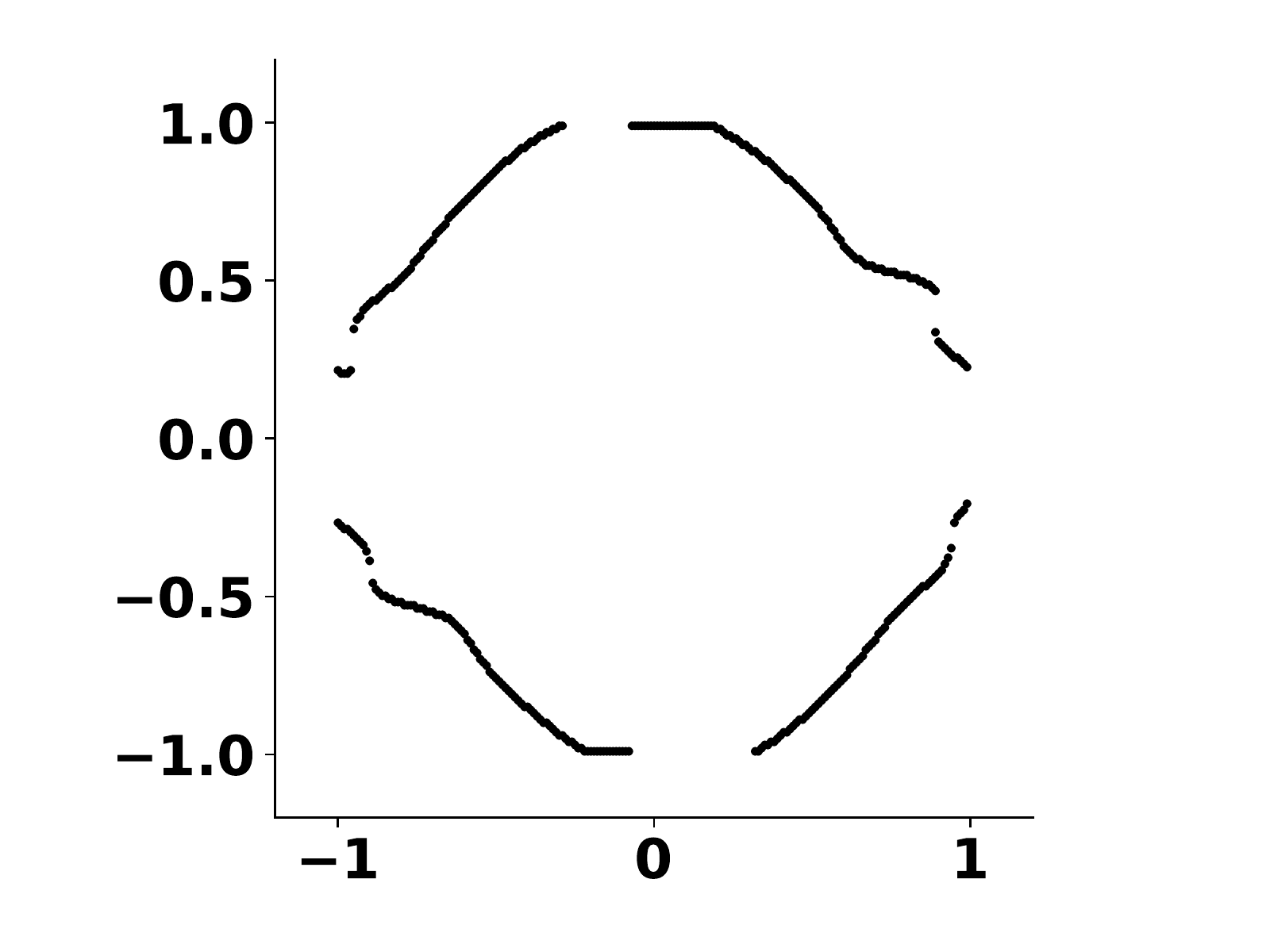}}
	\subfigure[$\eta=0.5, \predictglobal$]{
		\includegraphics[width=\figwidthfour]{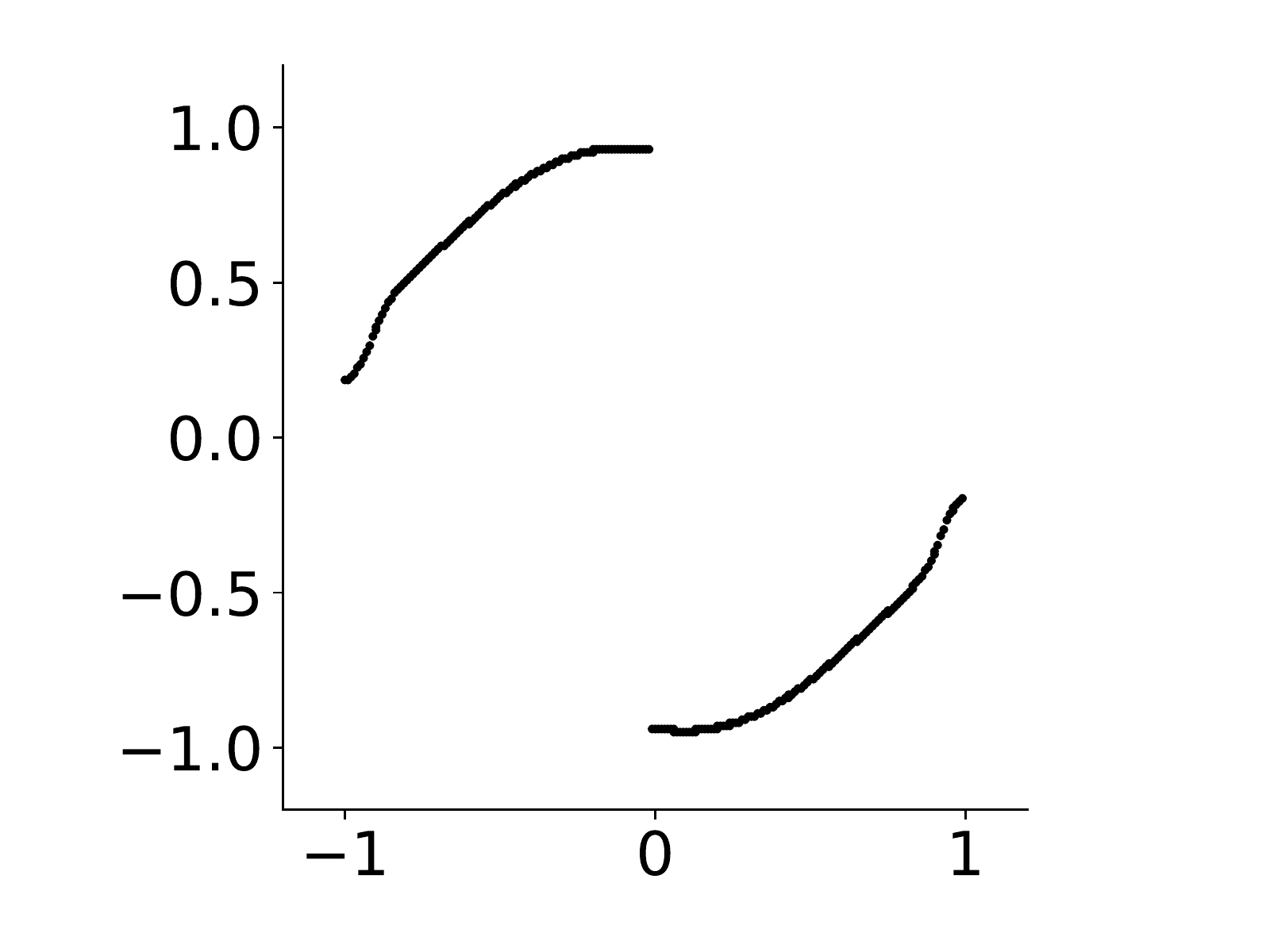}}
	\subfigure[$\eta=0.5, \predictlocal$]{
	\includegraphics[width=\figwidthfour]{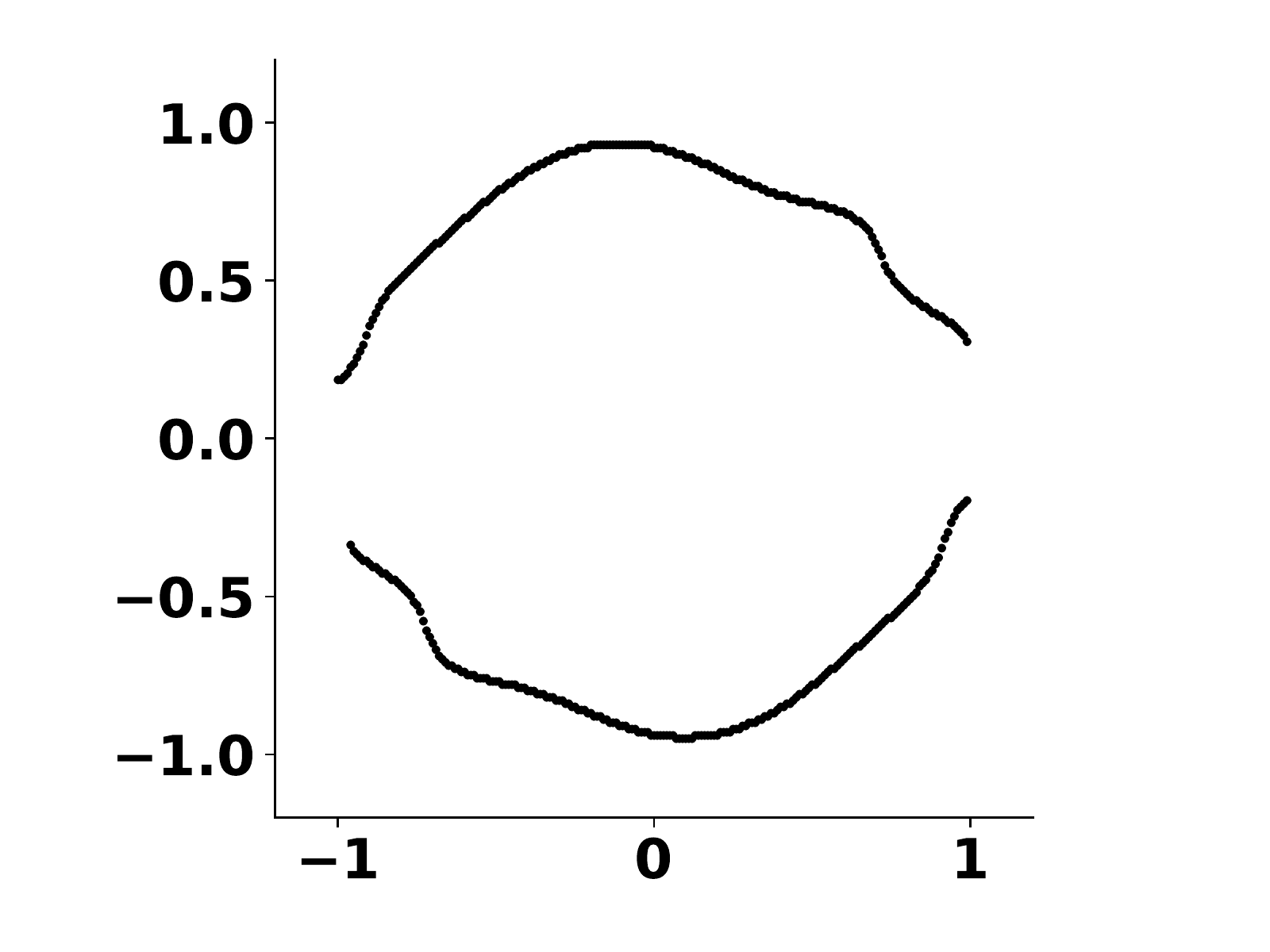}}
	\caption{
		\small
		(a) and (b) shows the predictions of evenly spaced points $x\in \{-1, -0.99, -0.98, ..., 0.99, 1.0\}$ from $\predictglobal(\cdot), \predictlocal(\cdot)$ learned without using regularization. (c) and (d) shows those points learned with $\eta=0.5$. (c) shows that the regularizer clearly enables prediction of the most likely mode, whereas (a) has many spurious modes. Looking at both (c) and (d), a decision maker using these predictions could both know the highest likelihood mode (given in (c)) as well as the full set of modes (given in (d)).
	}\label{fig:biased_circle_prediction}
	\vspace{-0.3cm}
\end{figure}

Despite some of the issues highlighted in Figure~\ref{fig:biased_circle_lc} for $\eta = 0$, visualizing the actual predictions made under the original objective reflects a more positive outcome. In Figure~\ref{fig:biased_circle_prediction}(a)(b), one can see that the original objective~\ref{implicit-loss} does  provide useful predictions even without the regularization; it only fails to identify the highest likelihood mode. The simpler algorithm with $\eta = 0$ looks like a promising choice, if multiple modes are desired, whereas  we suggest the use of regularization if the goal is to predict a unique highest likelihood mode. We empirically observe that in many cases, our algorithm is able to achieve competitive performance with other baselines even without using the regularization. We assume $\eta=0$ for the remainder of this work.

%

\section{The properties of implicit function learning}\label{Sec:utility}

In this section, we empirically investigate the properties of our learning objective. First, on several synthetic Circle datasets, we show the utility of implicit function learning for handling the case where the highest likelihood mode is not unique. Note it is common to assume the existence of a unique mode, to avoid the difficulties of this setting. Second, we demonstrate that our objective allows us to leverage the representational power of neural networks, by testing it on high-frequency data.

\subsection{Comparison on Simple Synthetic Datasets} \label{Sec:circle-example}
\textbf{Datasets.} We empirically study the performance of our algorithm on the following datasets. \textbf{Single-circle:} 
The training set contains $4$k samples and is generated by the process described in the caption of Figure~\ref{fig:circle_problem_setting}.
\textbf{Double-circle:} The same number of training points, 4000, are randomly sampled from two circles (i.e. $\{(x, y)|x^2 + y^2 = 1\}, \{(x, y)|x^2 + y^2 = 4\}$) and the targets are contaminated by the same Gaussian noise as in the single-circle dataset. This is a challenging dataset where $p(y|x)$ can be considered as a \emph{piece-wise} mixture of Gaussian: there are four components on $x \in (-1, 1)$ and two components on $x\in (-2, -1) \cup (1, 2)$. The purpose of using this dataset is to examine how the performances of different algorithms change when we increase the number of modes. \textbf{High dimensional double-circle.} We further increase the difficulty of learning by projecting the one dimensional feature value to $128$ dimensional binary feature through tile coding: $\phi: [-2, 2]\mapsto \{0, 1\}^{128}$. The purpose of using this dataset is to examine how different algorithms can scale to high dimensional inputs. 

\textbf{Algorithm Evaluation.} We compute RMSE between a randomly picked mode from $\predictglobal(\cdot)$ and the closest true mode to it. This evaluation method reflects a practical setting where any of the most likely modes would be acceptable. 

\textbf{Algorithm Description.} We compare to Kernel Density Estimation (\textbf{KDE}) and Mixture Density Networks (\textbf{MDN}). 
KDE is a non-parametric approach to learn a distribution, which has strong theoretical guarantees for representing distributions. It can, however, be quite expensive in terms of both computation and storage. 
We use KDE because several existing nonparametric modal regression models are proved to converge to the corresponding KDE model~\citep{yao2014modalreg,wang2017regmodalreg,feng2017modal}; it therefore acts as an idealized competitor. 
MDN~\citep{bishop1994mixture} learns a conditional Gaussian mixture, with neural networks, by maximizing likelihood. For both methods, 
we use a fine grid search---$200$ evenly spaced values in the target space---to find a mode given by the KDE distribution or MDN distribution: $\hat{y} = \argmax_y \hat{p}(y | x)$.
For both our algorithm  (\textbf{Implicit}) and MDN, we use a $16\times 16$ tanh units NN and train by stochastic gradient descent with mini-batch size $128$. We optimize both algorithms by sweeping the learning rate from $\{0.01, 0.001, 0.0001\}$. 
\begin{figure*}[!htpb]
			\vspace{-0.3cm}
	\centering
	\subfigure[Single-circle]{
		\includegraphics[width=\figwidthfour]{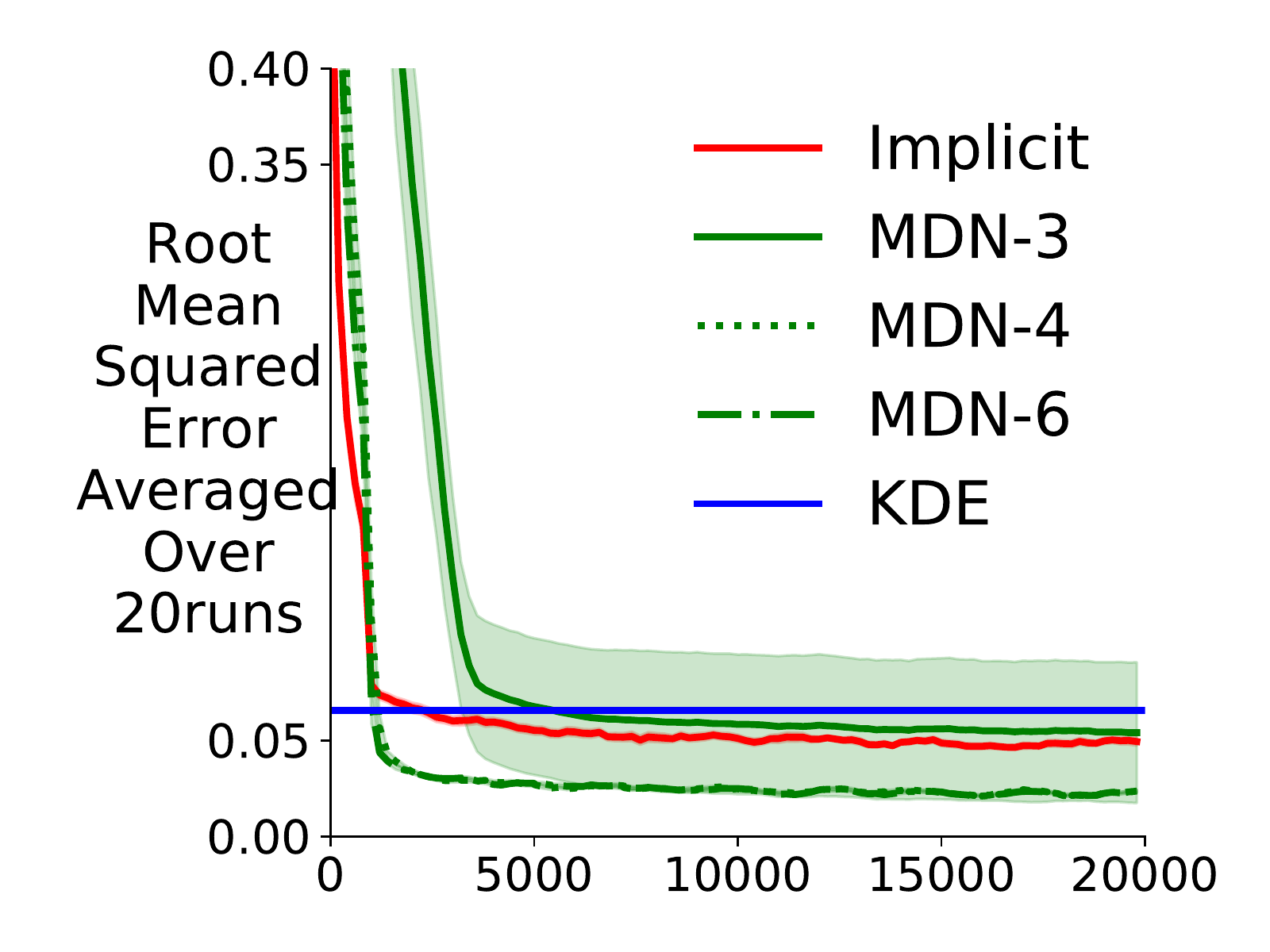}}
	\subfigure[Double-circle]{
		\includegraphics[width=\figwidthfour]{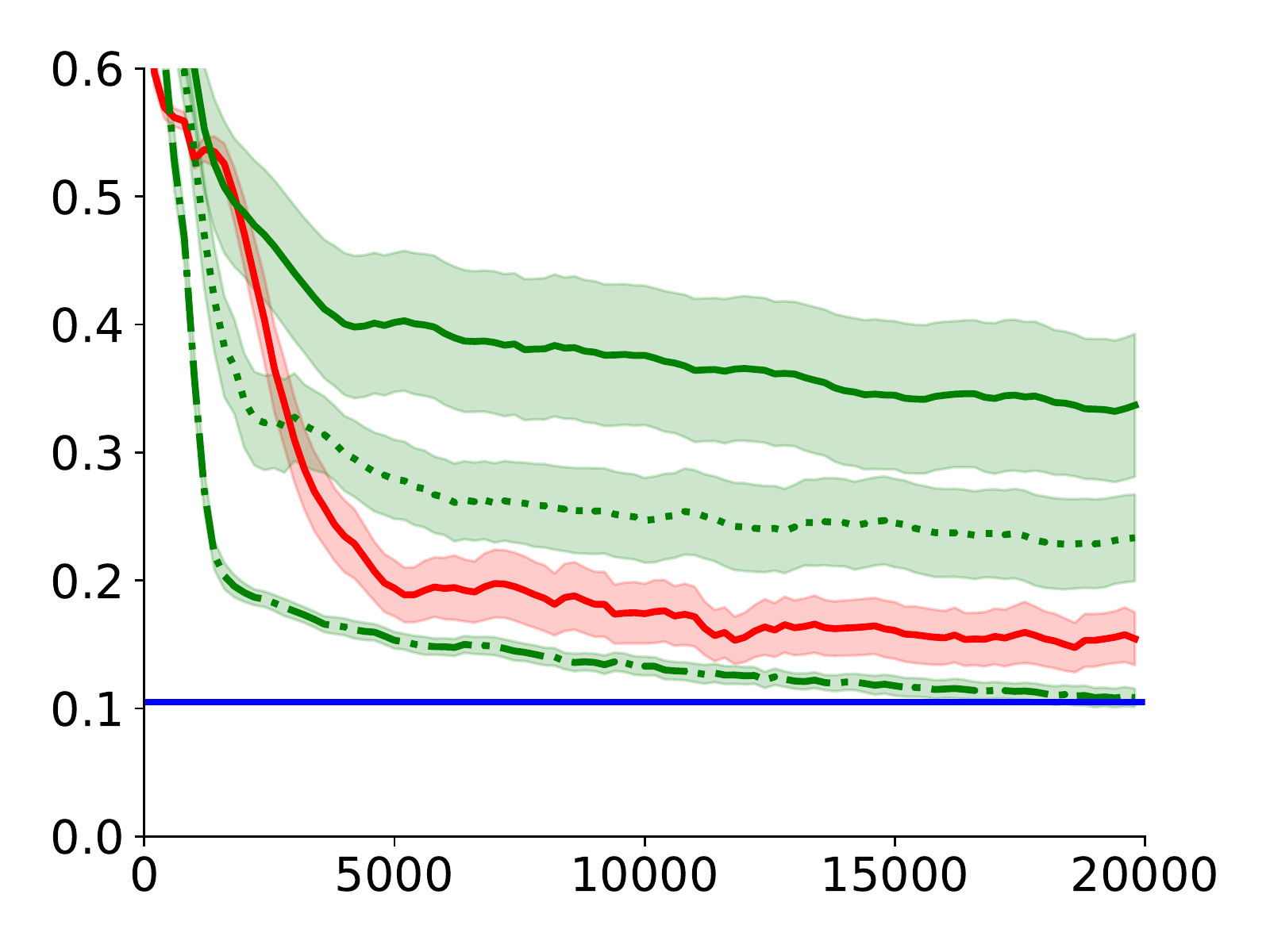}}
	\subfigure[High-dimensional Double-circle]{
		\includegraphics[width=\figwidthfour]{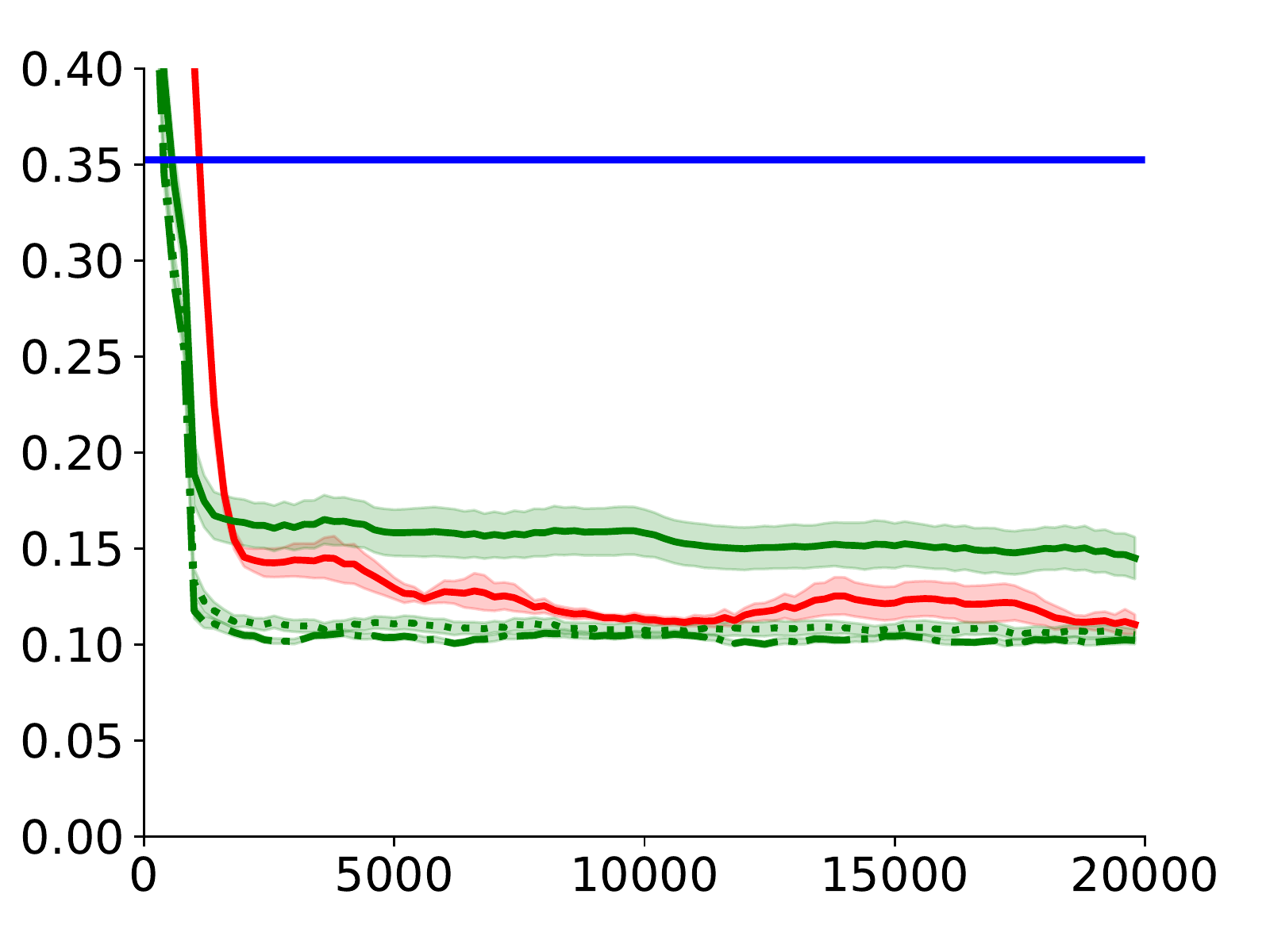}}
	\begin{minipage}{\figwidthfour}
		\vspace{-2cm}
		\caption{
		\small
		(a)(b)(c) show testing RMSE as a function of training steps. MDN-3 indicates MDN trained with $3$ components. All results are averaged over $20$ random seeds and the shaded area indicates standard error. 
	}\label{fig:circle_lc}
	\end{minipage}
		\vspace{-0.3cm}
\end{figure*}


The learning curves on the above three datasets are shown in Figure~\ref{fig:circle_lc}. We plot the RMSE as a function of number of mini-batch updates for the two parametric methods MDN and Implicit. KDE is shown as a constant line since it directly uses the whole training set as input. MDN performs poorly with only two components, even when the true number of modes is two; therefore, we include 3, 4 and 6 mixture components. From Figure~\ref{fig:circle_lc}, one can see that: 1) although (a) and (b) have low dimensional feature, MDN with only $3$ and $4$ components degrades significantly when we increase the number of modes of the training data from two to four. Furthermore, MDN shows significantly larger variance across runs, which we observe frequently across several experiments in our work. 2) KDE scales poorly with both the number of modes and input feature dimension. We also found that it is quite sensitive to the kernel type and bandwidth parameter. 3) Our algorithm Implicit achieves stable performances across all the datasets and performs as well as MDN on the high dimensional double-circle dataset. In contrast, MDN seems to no longer have advantage with a larger number of mixture components than $3$ for the high dimensional binary features. We visualize the predictions of our algorithms on the single-circle and double-circle datasets in the Appendix~\ref{Sec:additional-experiments}.

\begin{figure*}[!thpb]
	\vspace{-0.4cm}
	\centering
	\subfigure[No noise]{
		\includegraphics[width=\figwidththree]{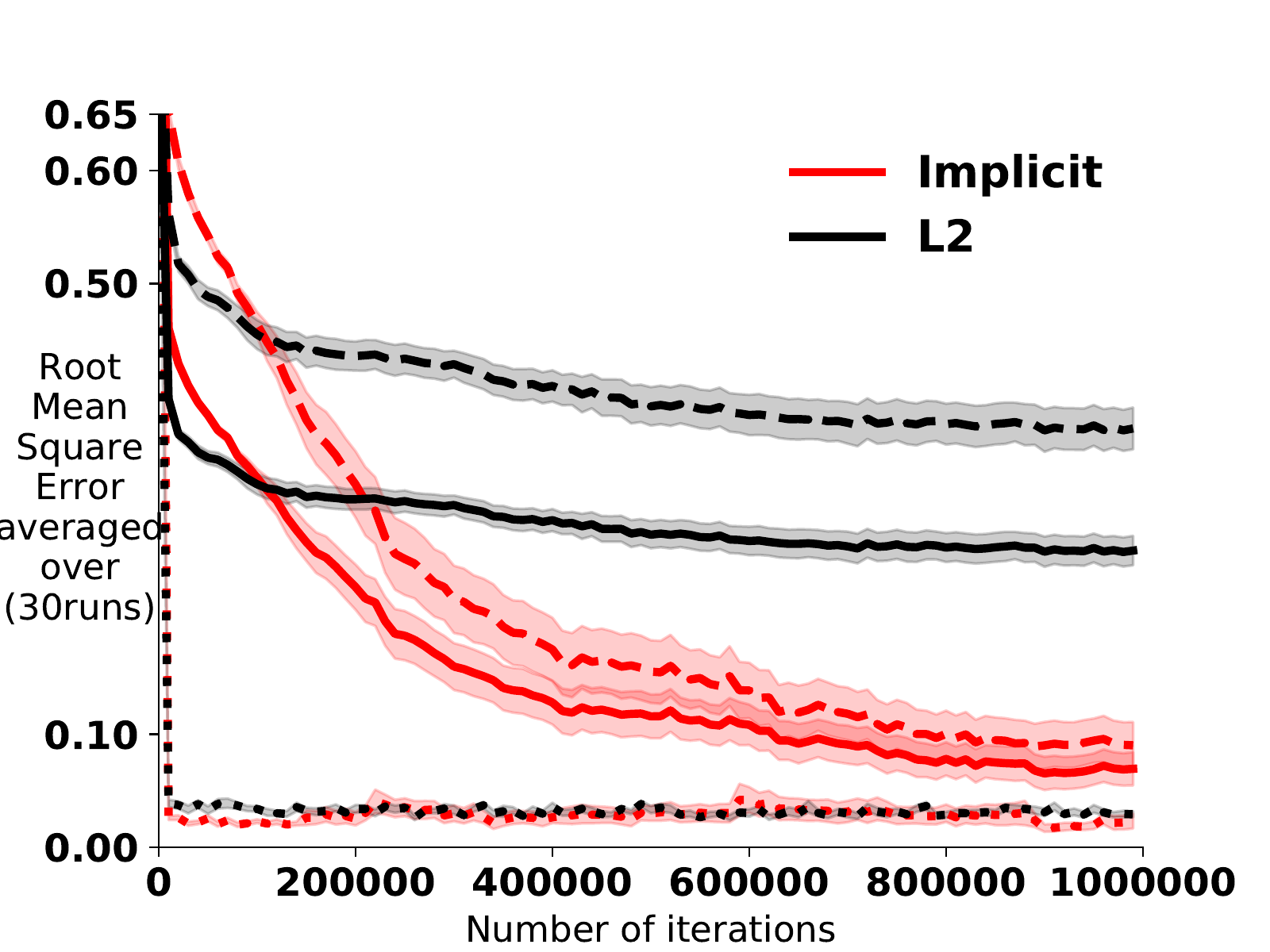}}\label{fig:toynoisep0}
	\subfigure[$\sigma = 0.1$]{
		\includegraphics[width=\figwidththree]{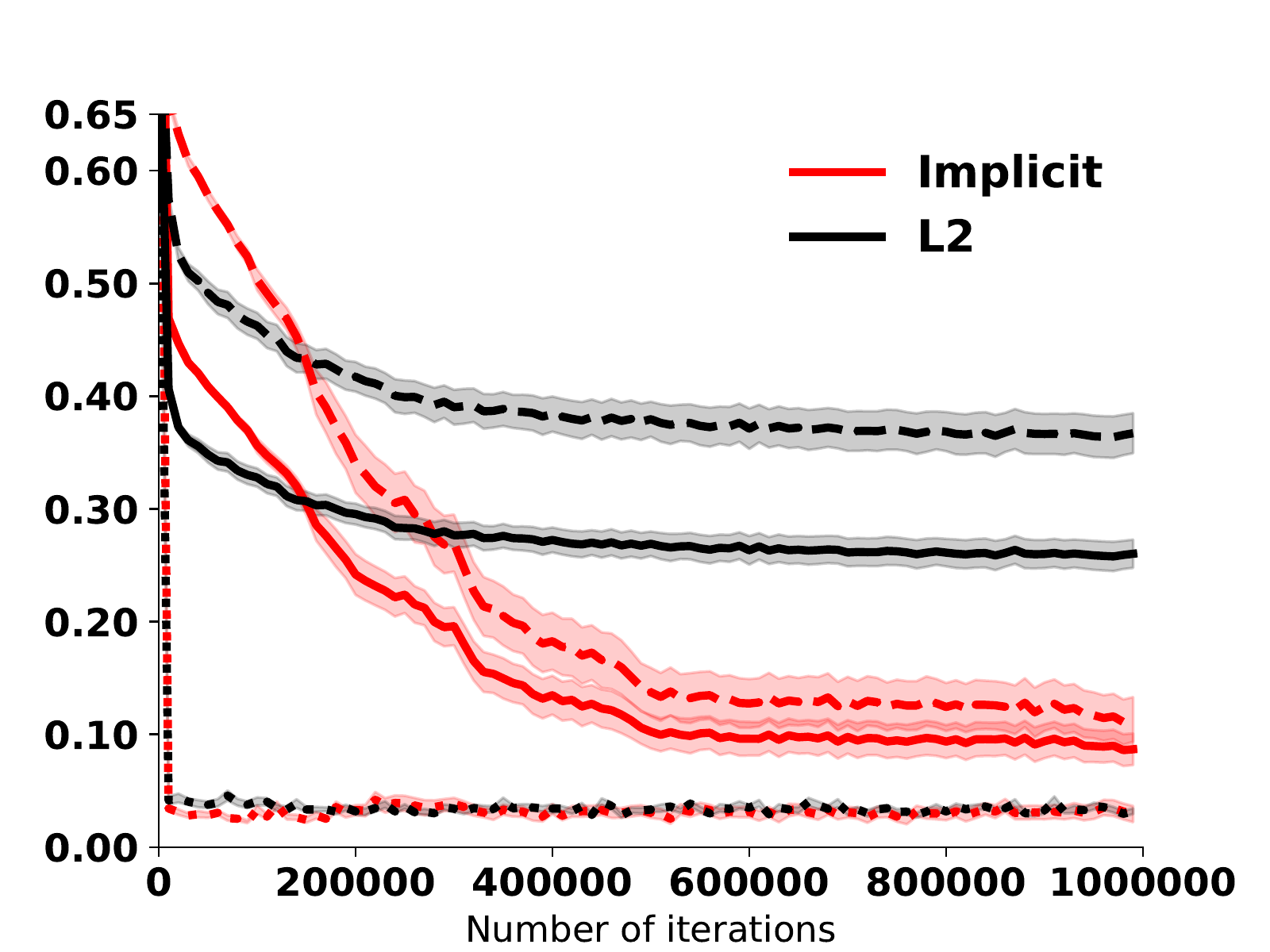}} \label{fig:toynoisep1}
	\subfigure[$\sigma = 0.2$]{
		\includegraphics[width=\figwidththree]{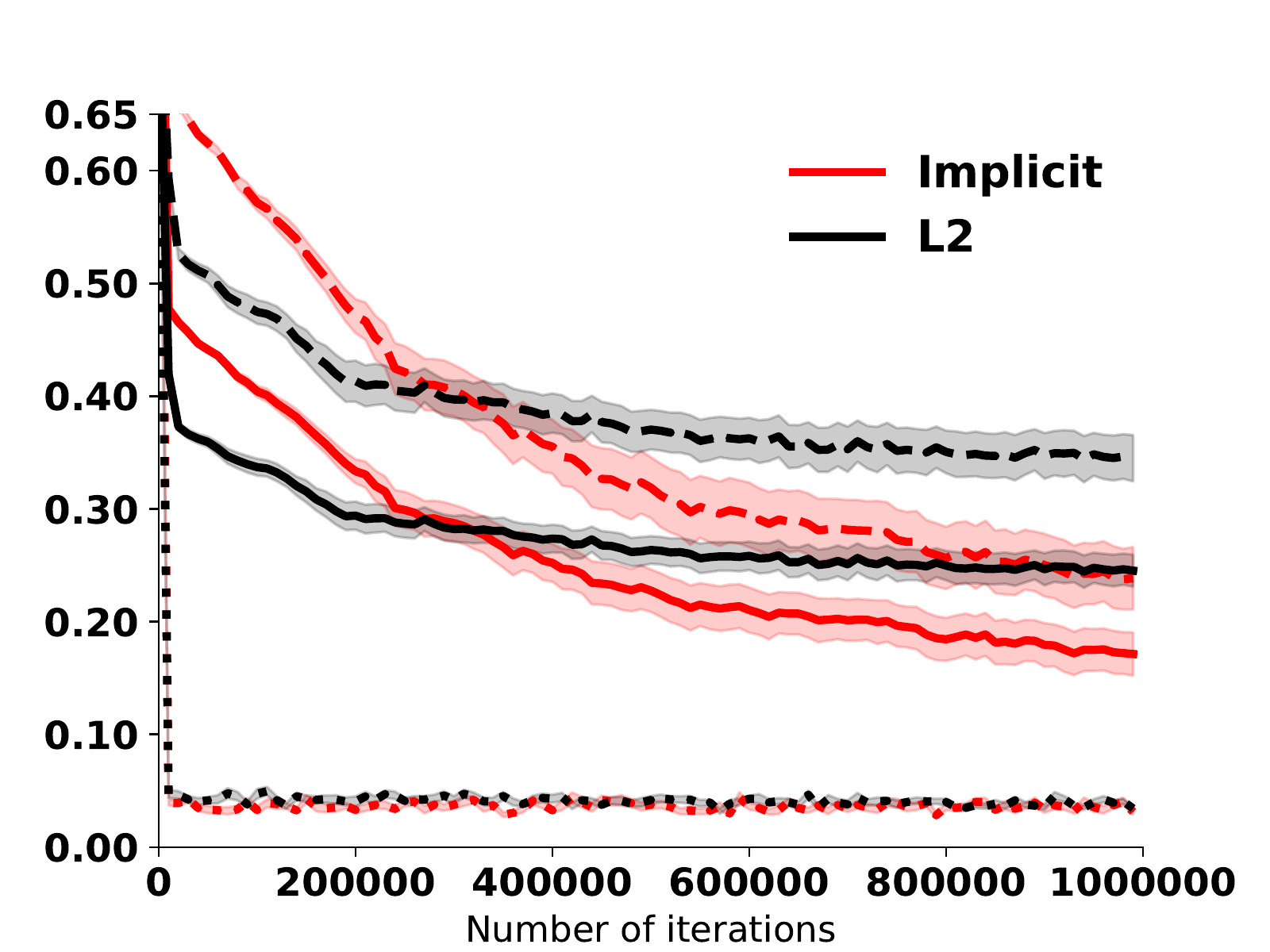}} \label{fig:toynoisep2}
		\vspace{-0.1cm}
	\caption{
		\small
		Figure (a)(b)(c) show performances of \textcolor{red}{\textbf{Implicit (red)}} and $l_2$ regression \textcolor{black}{\textbf{L2 (black)}} objective as we increase the Gaussian noise variance in the High Frequency dataset. We show the testing error measured by RMSE on entire testing set (\textbf{solid line}), on high frequency region (i.e. $x \in [-2.5, 0.0)$, \textbf{dashed line}) and on low frequency region ($x \in [0.0, 2.5]$, \textbf{dotted line}). The results are averaged over $30$ random seeds. The targets in the testing set are true targets and hence are not noise-contaminated. We run $1$ million iterations to make sure each algorithm is sufficiently trained and both early and late learning behaviour can be examined. 
	}\label{fig:highfreq-sin}
	\vspace{-0.4cm}
\end{figure*}

\subsection{Robustness to high frequency data}\label{Sec:high-freq}
The above circle example can be thought of as an extreme case where the underlying true function has extremely high frequency (i.e. when the input changes a little, there is a sharp change for the true target).
In this section, we investigate if our modal regression algorithm does provide an advantage to handle such datasets. We do so by testing it on a uni-modal high-frequency dataset, and comparing the solution found by Implicit to standard $\ell_2$ regression. We generate a synthetic dataset by uniformly sampling $x \in [-2.5, 2.5]$ and compute the targets by $y = 
\sin(8\pi x) + \xi, x \in [-2.5, 0)$ and $y = \sin(0.5\pi x) + \xi, x \in [0, 2.5]$, where $\xi$ is zero-mean Gaussian noise with variance $\sigma^2$. This function has relatively high frequency when $x \in [-2.5, 0)$ and has a relatively low frequency when $x \in [0, 2.5]$ (see~\ref{Sec:additional-experiments} for more on this dataset). We use a small NN with $16 \times 16$ tanh units. The only difference to the NN for $\ell_2$ is that {Implicit} has one more input unit, i.e. the target $y$. We perform extensive parameter sweeps to optimize its performance. Please see~\ref{Sec:reproduce} for any missing details. 

Figure~\ref{fig:highfreq-sin}(a-c) shows the evaluation curve on testing set for the above two algorithms as the noise variance increases. Notice that, when noise $\xi \equiv 0$, our implicit function learning approach achieves a much lower error (at the order of $10^{-2}$) than the $l_2$ regression does (at the order of $10^{-1}$). As noise increases, the targets become less informative and hence our algorithm's performance decreases to be closer to the $l_2$ regression. 
Unsurprisingly, for both algorithms, the high frequency area is much more difficult to learn and is a dominant source of the testing error. After sufficient training, our algorithm can finally reduce the error of both the high and low frequency regions to a similar level. We include additional rseults in Appendix~\ref{Sec:additional-experiments}, visualizing the learned networks.
%

\section{Results on Real World Datasets}\label{general-usage}

In this section, we first conduct a modal regression case study on a real world dataset. Additionally, because our method incorporates information from the target space, we hypothesize it should be beneficial even for regular regression tasks. We therefore also test its utility on two standard regression dataset. Appendix~\ref{Sec:reproduce} includes details for reproducing the experiments and dataset information.

\subsection{Modal Regression: Predicting Insurance Cost}
To the best of our knowledge, the modal regression literature rarely uses real world datasets. This is likely because it is difficult to evaluate the predictions from modal regression algorithms. 
We propose a simple way to construct a datatset from real data that is suitable for testing modal regression algorithms. The idea is to take a dataset with categorical features, that are related to the target, and permute the categorical variables to acquire new target values. Afterwards, we remove those categorical features from the dataset, so that each instance can have multiple possible target values. 

We construct a modal regression dataset from the \emph{Medical Cost Personal Dataset}~\citep{brett2013mlr} using the following steps. 1) We determined that the \verb+smoker+ categorical variable is significantly relevant to the insurance charge. This can be seen from our Boxplot~\ref{fig:insurance_exp}(a). 2) An $\ell_2$ regression model is trained to predict the insurance charges. 3) We flip the \verb+smoker+ variable of all examples and query the trained $\ell_2$ model to acquire a new target for each instance in the dataset. 4) We augment the original dataset with these new generated samples. 5) We remove the \verb+smoker+ variable from the dataset to form the modal regression dataset. 

Each instance has two possible targets, and $\ell_2$ regression can only produce one output; we therefore compute the error between the predicted value and the closest of the two modes. For Implicit and MDN, we compute the error by randomly picking a mode from $\predictglobal(\cdot)$ as described in Section~\ref{Sec:circle-example}. We report both root mean squared error (RMSE) and mean absolute error (MAE). Figure~\ref{fig:insurance_exp}(b)(c) shows the learning curves of different algorithms. It can be seen that: 1) both $\ell_2$ and Huber regression perform significantly worse than our algorithm. This suggests that neither conditional mean nor median is a good predictor on this dataset, and provides evidence that our dataset generation strategy was meaningful. 2) The performance of MDN and KDE are inconsistent across the two error measures. 3) Our algorithm, Implicit, consistently achieves good performance, though it does take longer to learn the relationship. 
\begin{figure*}[!htpb]
	\subfigure[smoke v.s. charges]{
		\includegraphics[width=\figwidthfour]{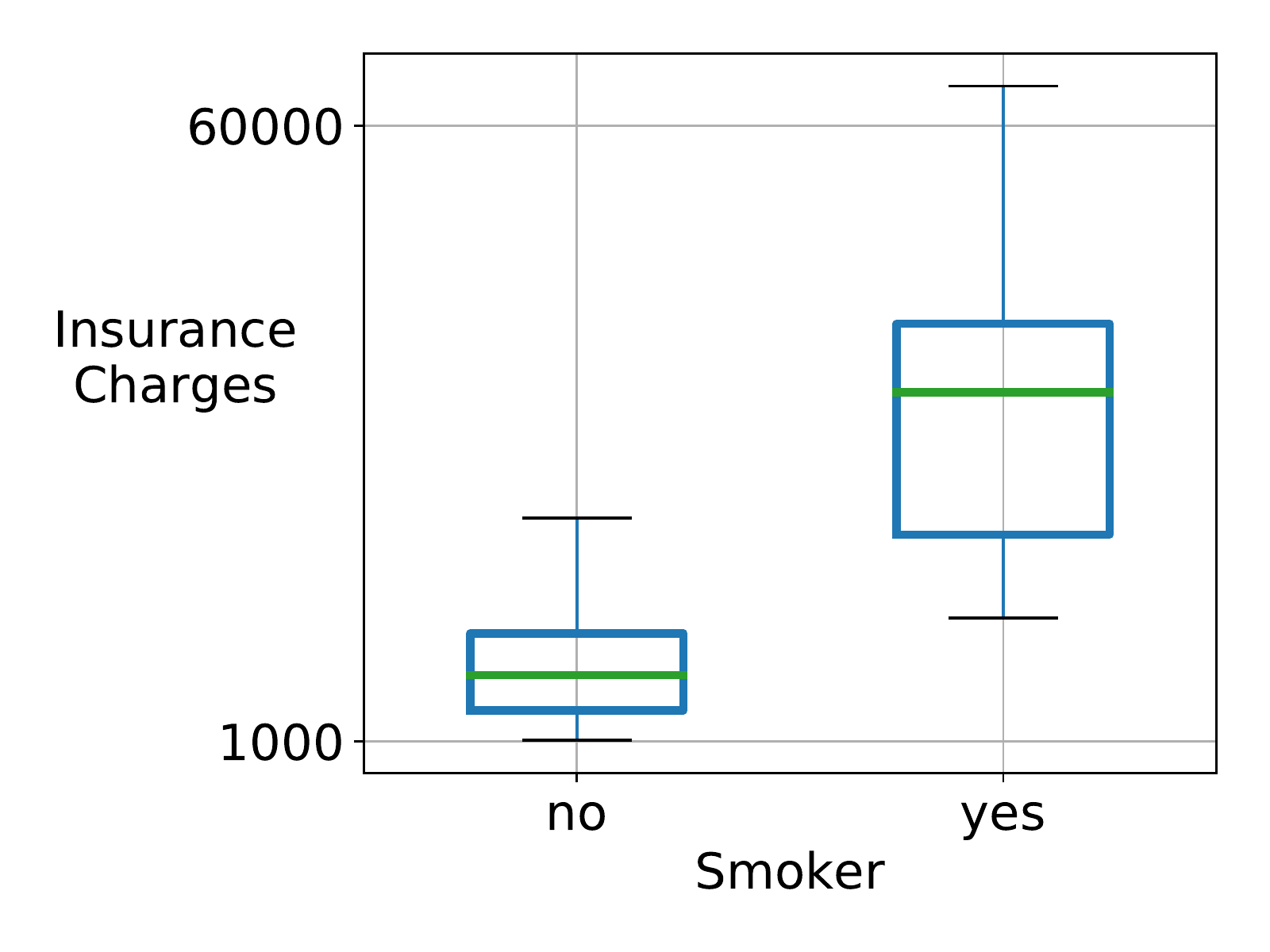}}
	\subfigure[RMSE]{
		\includegraphics[width=\figwidthfour]{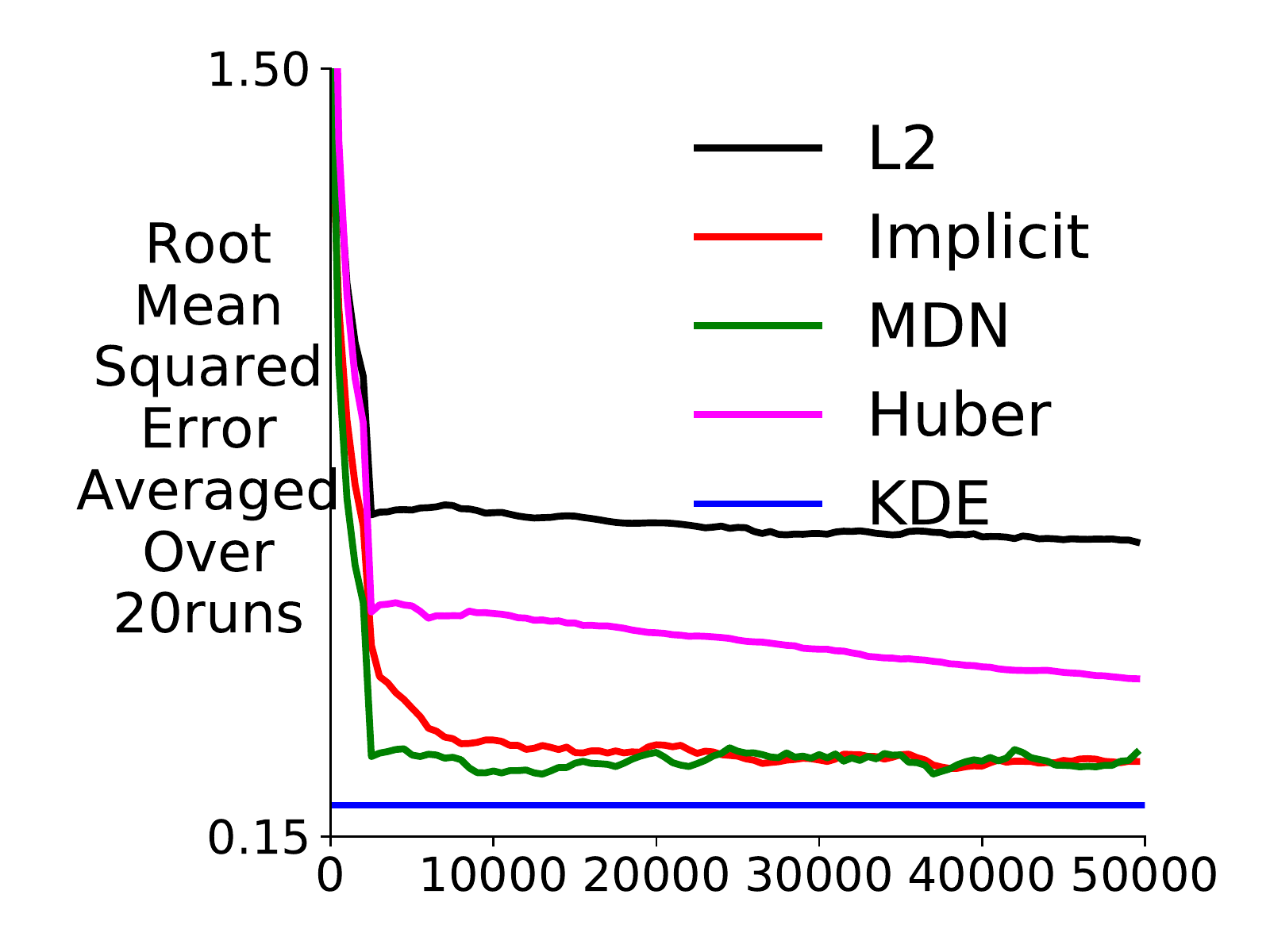}}
	\subfigure[MAE]{
		\includegraphics[width=\figwidthfour]{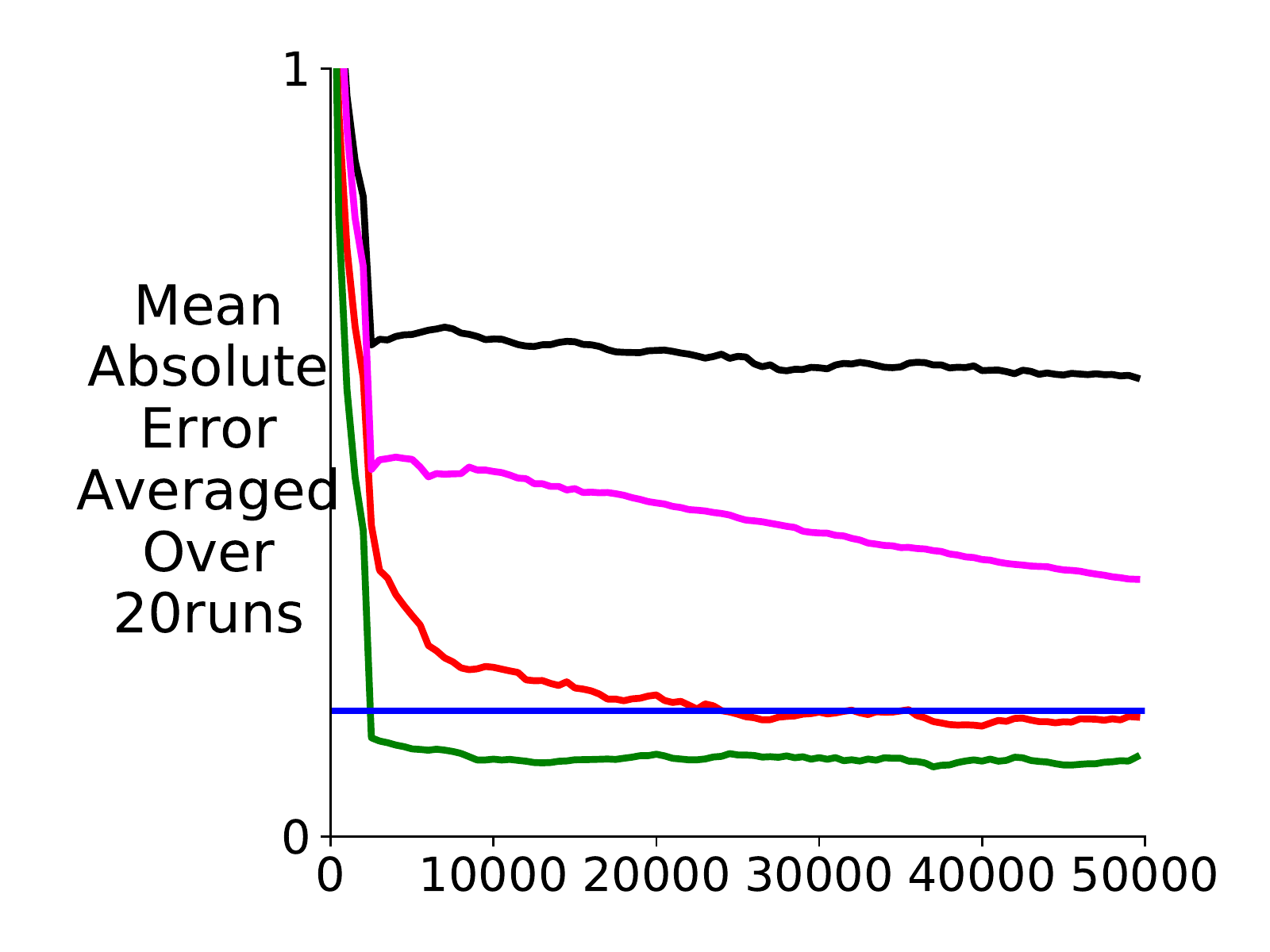}}
	\begin{minipage}{0.25\textwidth}
		\vspace{-2cm}
			\caption{
			\small
			(a) shows the boxplot of smoking v.s. insurance cost. (b)(c) shows testing error as a function of number of training steps. 
			All results are averaged over $20$ random seeds and the standard error is low. 
		}\label{fig:insurance_exp}
	\end{minipage}
\end{figure*}

\subsection{Standard Regression Tasks}
We finally test if our method can achieve comparable performance in standard regression problems. A natural choice for standard regression is to predict the unique mode, with $\predictglobal$, as this provides one approach to robust regression. We include the Huber loss as a comparator to see if Implicit has such an effect. We additionally gauge a worst-case scenario, where we predict multiple modes with $\predictlocal$, but are evaluated based on the furthest distance to the observed target in the dataset (even if in the true underlying data there might be a closer mode). 

\textbf{Algorithm evaluation.} 
We report both RMSE and MAE on training and testing set respectively. All of our results are averaged over $5$ runs and for each run, the data is randomly split into training and testing sets. For our algorithm, we use $64\times 64$ tanh units NN. For the $\ell_2$ and Huber regression, we use the same size NN and optimize over unit type ReLu and tanh. Huber regression is used as it is known to be more robust to some skewed or heavy-tailed data distributions. 

Baselines and datasets are as follows. 
\textbf{LinearReg:} Ordinary linear regression, where the prediction is linear in term of input features. We use this algorithm as a weak baseline. \textbf{LinearPoisson:} The mean of the Poisson is parameterized by a linear function in term of input feature. \textbf{NNPoisson:} The mean of the Poisson is parameterized by a neural network~\citep{fallah2009nnpoisson}.

Figure~\ref{fig:standard_reg} compares the test root mean squared error of different methods. The exact numbers and other error measures are presented in Appendix~\ref{Sec:additional-experiments}. We show results on the 
\textbf{Bike sharing dataset}~\citep{fanaee2013bikedata}, where the target is Poisson distributed, in Figure~\ref{bike-sharing-plot}. The prediction task is to predict counts of rental bikes in a specific hour given $114$ features after preprocessing. In Figure~\ref{song-year-plot}, we show results on the \textbf{Song year dataset}~\citep{bertin2011songyear}, where the task is to predict a song's release year by using audio features of the song. The target distribution that is clearly not Gaussian, though it is not clear which generalized linear model is appropriate.

Implicit achieves highly competitive performance on both datasets; although it does slightly worse than MDN on Bike sharing and slightly worse than Huber on Song year dataset when measured by MAE (see~\ref{Sec:additional-experiments}). MDN frequently exhibits more variant performance than other algorithms, and its worst case prediction can be significantly worse than from $\predictglobal(\cdot)$. In contrast, our algorithm seems to learn an accurate set $\predictlocal$, which is only slightly worse that $\predictglobal$.

\begin{figure*}[!htbp]
	\subfigure[Bike sharing dataset]{
		\includegraphics[width=\figwidththreeplus]{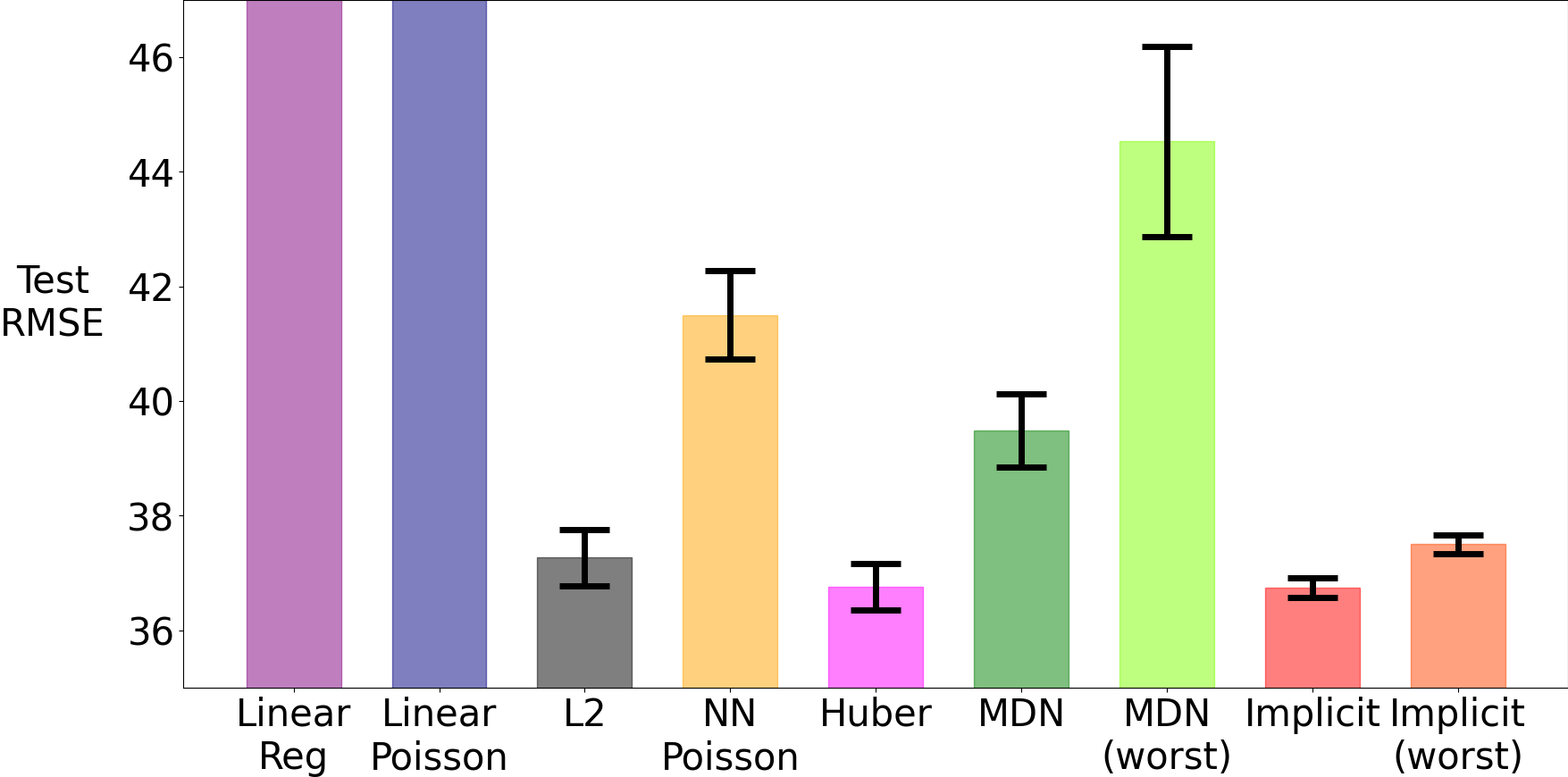} \label{bike-sharing-plot}}
	\subfigure[Song year prediction dataset]{
		\includegraphics[width=\figwidththreeplus]{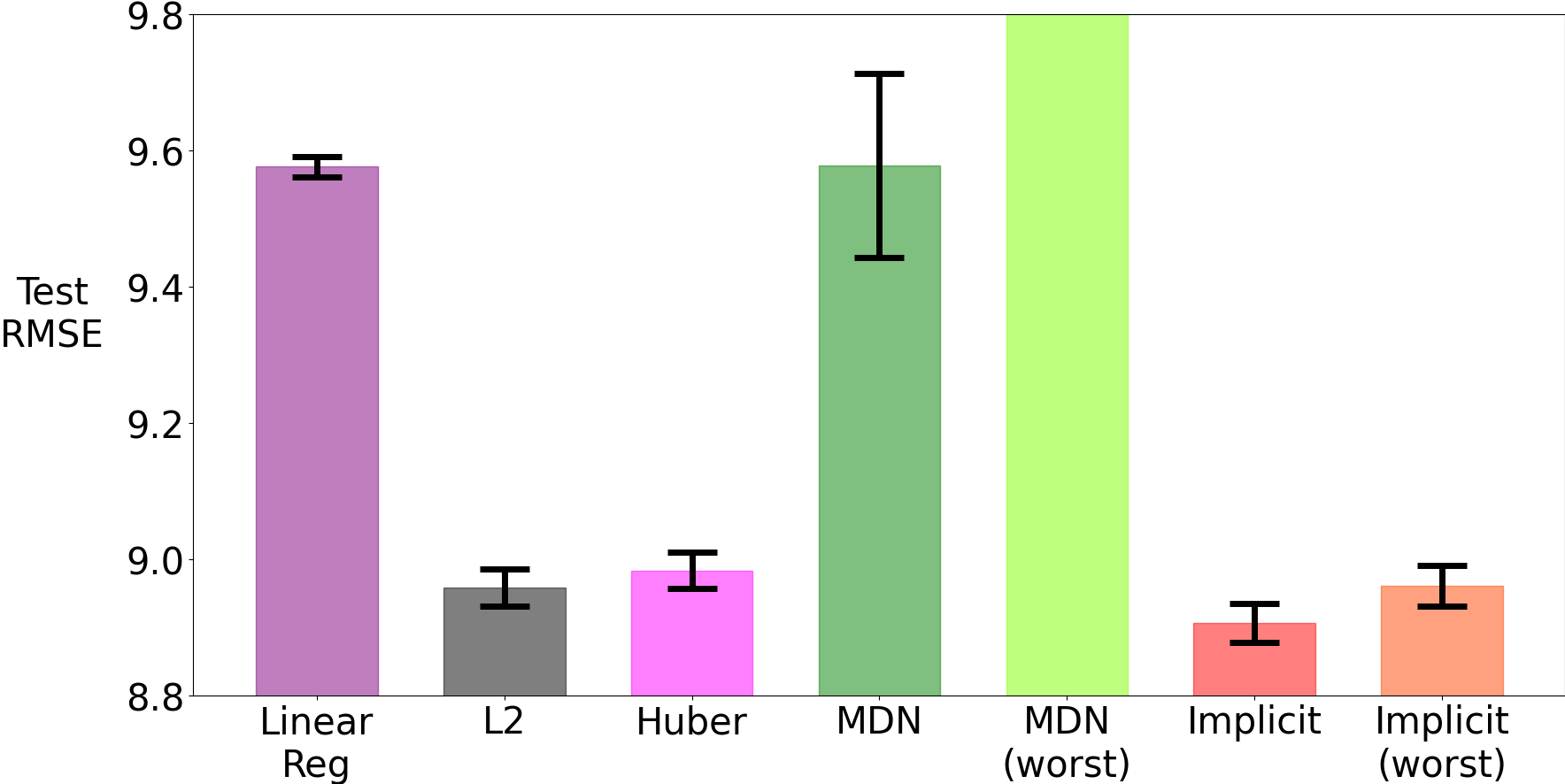} \label{song-year-plot}}
	\begin{minipage}{\figwidthfour}
				\vspace{-2cm}
		\caption{
		\small
		Test RMSE on two standard regression tasks. Error bars show standard errors, for 5 runs. For other error measures and more details, see Appendix ~\ref{Sec:additional-experiments}.
	}\label{fig:standard_reg}
	\end{minipage}
	\vspace{-0.3cm}
\end{figure*}
\section{Conclusion and Discussion}
The paper introduces a parametric modal regression approach, using the idea of implicit functions to identify local modes. We show that it can handle datasets where the conditional distribution $p(y|x)$ is multimodal, and is particularly useful when the underlying true mapping has a large bandwidth limit. We also illustrate that our algorithm achieves competitive performance on real world datasets for both modal regression and regular regression. This work highlights the feasibility, and potentially utility, of more widely using modal regression. However, a remaining potential barrier to such widespread use is the difficulty in evaluating modal regression approaches on real data. We provided a strategy to add modality to datasets for evaluation, but did not identify how to use and evaluate mode predictions for standard datasets, even though these datasets likely already have multiple modes. An important next step is to investigate how to evaluate a set of predicted modes, on standard regression datasets. 

\section{Broader Impact Discussion}

This work is about parametric methods of modal regression. Non-parametric modal regression methods are typically studied in the statistics community; and there is yet little parametric modal regression algorithm suitable in deep learning setting. Hence, our work should be generally beneficial to the machine learning and statistics research community. Potential impact of this work in real world is likely to be further improvement of applicability of modal regression methods, which provide more information to decision makers than popular regression methods which attempt to learn conditional mean. The proposed objective may be also of high interest to the community studying energy-based models. We do not consider any specific application scenario as the goal of this work.

\subsubsection*{Acknowledgements}
We would like to thank all anonymous reviewers for their helpful feedback. Particularly, we thank reviewer 2 and the meta-reviewer for careful reading our paper and providing suggestions to improve the presentation. We also thank Jincheng Mei for reading our theorems and for providing suggestion to improve Theorem~\ref{thm-unique-mode}. 

\subsubsection*{Funding Transparency Statement} 
We acknowledge the funding from the Canada CIFAR AI Chairs program and Alberta Machine Intelligence Institute.
{
	
	\bibliography{paper}
	\bibliographystyle{icml2020}
}

\iftrue
\clearpage
\newpage
\appendix
\section{Appendix}

The appendix includes the proof for Theorem~\ref{thm-unique-mode} and additional theoretical discussions regarding spurious modes in Section~\ref{Sec:theory}. We provide all experimental details for reproducible research in Section~\ref{Sec:reproduce} and provide additional empirical results in Section~\ref{Sec:additional-experiments}. We also include a brief discussion of some possibly related areas in the end. 

\subsection{Proofs and Theoretical Insight}\label{Sec:theory}

In this section, we provide mathematical proof for our Theorem~\ref{thm-unique-mode}. We then provide additional theoretical intuitions regarding why the spurious modes may not easily show up even without regularization by introducing Theorem~\ref{thm-root-existence} and Corollary~\ref{cor-of-nroots}. We conclude this section by some discussions on modal regression algorithm evaluation in Section~\ref{sec:modal-reg-alg-evaluation}.

\subsubsection{Proof for Theorem~\ref{thm-unique-mode}} 
\textbf{Theorem 1.} Let $f(x)$ be a continuously differentiable function s.t. $|f''(x)| \le u$ for some $u>0$. Let $a$ be such that $f(a) = \epsilon, f'(a) = k$. Then, $\nexists b$ such that $ 0 < |b - a| < \frac{2|k|}{u}, f(b)=\epsilon, f'(b) = k$.

\begin{proof}
	Proof by contradiction. Assume $\exists b \neq a$, such that $|b - a| < \frac{2|k|}{u}, f(b)=\epsilon, f'(b) = k$. By applying first order Taylor expansion to $f(a)$, for some $\xi$, we have
	\[
	f(x) =  f(a) + f'(a)(x-a) + \frac{f''(\xi)}{2}(x-a)^2
	\]
	Then, 
	\begin{align*}
	f(b) &=  f(a) + f'(a)(b-a) + \frac{f''(\xi)}{2}(b-a)^2 \\
   |f'(a)(b-a)| &\le \frac{u(b-a)^2}{2} \\
       |b - a| &\ge \frac{2|k|}{u}
	\end{align*}
	This contradicts with $|b - a| < \frac{2|k|}{u}$. This completes the proof. 
\end{proof}


\subsubsection{More discussions regarding the spurious mode} 
We found that in practice, our method does find correct modes even without the regularization of penalizing the second order derivative, as we discussed in Section~\ref{sec-theoretical-insight}. We introduce below theorem to explain why a spurious mode cannot come out naturally. 

\begin{thm}\label{thm-root-existence}
	Let $f(x)$ be a continuously differentiable function on the interval $(a, b)$. Let $x_1 < x_2 \in (a, b)$ and $f(x_1)=f(x_2)$ and $f'(x_1) f'(x_2) > 0$ (i.e. the slopes have the same sign and are nonzero). Then $\exists x_0 \in (x_1, x_2)$ such that $f(x_1)=f(x_2)=f(x_0)$ and $f'(x_0) f'(x_1) \le 0, f'(x_0) f'(x_2) \le 0$.
\end{thm}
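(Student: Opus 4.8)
The plan is to reduce to a normalized problem and then locate one specific zero of $f$ whose derivative has the correct sign. First I would set $c \defeq f(x_1) = f(x_2)$ and pass to $g(x) \defeq f(x) - c$, so that $g(x_1) = g(x_2) = 0$ and, since $g' = f'$, the conclusion becomes: find $x_0 \in (x_1, x_2)$ with $g(x_0) = 0$ and $g'(x_0) \le 0$. Since the hypothesis $f'(x_1) f'(x_2) > 0$ forces the two endpoint slopes to share a common nonzero sign, and since the two cases (both positive, both negative) are interchanged by replacing $g$ by $-g$, I may assume $g'(x_1) > 0$ and $g'(x_2) > 0$. Under this normalization it suffices to produce $x_0$ with $g(x_0) = 0$ and $g'(x_0) \le 0$, because then $f'(x_0) f'(x_1) = g'(x_0) f'(x_1) \le 0$ and likewise $f'(x_0) f'(x_2) \le 0$, as required.

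The second step extracts local sign information from the endpoint derivatives. Because $g'(x_1) > 0$, there is $\delta > 0$ with $g(x) > 0$ on $(x_1, x_1 + \delta)$; because $g'(x_2) > 0$, there is $\delta' > 0$ with $g(x) < 0$ on $(x_2 - \delta', x_2)$. Picking a point where $g$ is positive and one where $g$ is negative, the Intermediate Value Theorem produces at least one zero of $g$ strictly between $x_1$ and $x_2$; moreover these two sign observations show that the zero set $Z \defeq \{x \in (x_1, x_2) : g(x) = 0\}$ is contained in $[x_1 + \delta, x_2 - \delta']$, hence is a nonempty compact set bounded away from both endpoints.

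The key idea is then to take the \emph{largest} such zero, $x_0 \defeq \sup Z$. Compactness guarantees $x_0 \in Z$, so $g(x_0) = 0$ and $x_0 \in (x_1, x_2)$. By maximality $g$ has no zero on $(x_0, x_2)$; since $g$ is continuous and nonvanishing on this connected interval and is negative near $x_2$, it is negative throughout $(x_0, x_2)$. Using continuous differentiability to evaluate the derivative as a one-sided limit,
\[
g'(x_0) = \lim_{x \to x_0^+} \frac{g(x) - g(x_0)}{x - x_0} \le 0,
\]
because the numerator is negative and the denominator positive for $x$ slightly larger than $x_0$. This gives $f'(x_0) = g'(x_0) \le 0$ and finishes the argument.

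I expect the delicate points to be bookkeeping rather than conceptual. The main thing to get right is that the supremum of the zero set is genuinely attained and lies strictly inside $(x_1, x_2)$; this is exactly what the bounded-away-from-endpoints observation secures, ruling out the degenerate possibility that zeros accumulate at $x_2$. The sign of the one-sided derivative in the display is the crux, and it is there that the assumption of continuous differentiability (rather than mere continuity) is genuinely used.
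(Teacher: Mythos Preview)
Your proof is correct and shares the paper's opening moves (normalize to level zero, fix a common sign for the endpoint derivatives, use local monotonicity near $x_1$ and $x_2$ plus the Intermediate Value Theorem to locate an interior zero bounded away from both endpoints), but it diverges in how the derivative sign at $x_0$ is secured. The paper splits into cases on whether the interior zero is unique: in the unique case it argues by contradiction---if $f'(x_0)$ shared the sign of $f'(x_1)$, re-running the existence argument on $[x_0,x_2]$ would produce a second interior zero---and in the non-unique case it passes to the zero closest to $x_1$ and invokes the unique-case reasoning. You instead select the \emph{largest} zero $x_0=\sup Z$, observe that $g$ is negative throughout $(x_0,x_2)$ by maximality and connectedness, and read off $g'(x_0)\le 0$ directly from the right-hand difference quotients. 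Your route is shorter and avoids both the case split and the contradiction; one minor overstatement is your closing remark that continuous differentiability is ``genuinely used'' for the one-sided limit---ordinary differentiability at $x_0$ already forces the one-sided limit to equal $g'(x_0)$.
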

\begin{proof}
	Without loss of generality, let $f'(x_1) < 0, f'(x_2) < 0$. The positive case can be proved in exactly the same way. We prove the existence of such $x_0$ in Part I and prove its derivative property in Part II. 
	
	\textbf{Part I}. We firstly show $\exists x_0 \in (x_1, x_2)$ such that $f(x_1)=f(x_2)=f(x_0)$. Since $f'(x_2) < 0$, one can choose a $\epsilon_1$ such that $0 < \epsilon_1 < \frac{x_2-x_1}{2}$ and $f(x_2 -\epsilon_1) > f(x_2)$. Similarly, since $f'(x_1) < 0$, one can choose a $\epsilon_2$ such that $0 < \epsilon_2 < \frac{x_2-x_1}{2}$ and $f(x_1 + \epsilon_2) < f(x_1) = f(x_2)$. Since the function is a continuously differentiable function, applying intermediate value theorem yields $\exists x_0 \in [x_1 + \epsilon_2, x_2 -\epsilon_1], s.t. f(x_0) = f(x_1) = f(x_2)$.
	
	\textbf{Part II}. To show the second part $f'(x_0) f'(x_1) < 0, f'(x_0) f'(x_2) < 0$, consider two cases depending on whether such $x_0$ proved in part I is unique or not. 
	
	\textbf{Case 1.} When such $x_0$ is unique. Then it implies that there exists no other $x'\in (x_1, x_2), x\neq x_0$ such that $f(x') = f(x_1) = f(x_2) = f(x_0)$. We can prove by contradiction. Assume $f'(x_0) f'(x_1) > 0$. Then we can replace $x_1$ by $x_0$ in \textbf{Part I}'s argument, and this gives another $x_0' \in (x_0, x_2) \subset (x_1, x_2)$ and it contradicts with the prerequisite that $x_0$ is unique. Hence, $f'(x_0) f'(x_1) \le 0$. 
	Hence, we complete the proof of Case 1 when $x_0$ proved in Part I is unique.
	
	\textbf{Case 2.} When such $x_0$ is not unique. Then there are multiple points: $x_1 < x_0' < x_1' < ... < x_2$ such that $f(x_0') = f(x_1') = f(x_1)=f(x_2)=f(x_0)$. Then the first case's result directly applies: one can choose $x_0', x_1'$, i.e. the first closest point and the second closest point to $x_1$; then $f'(x_0') f'(x_1) \le 0$. 
	
	The above two parts complete the proof. 
\end{proof}

This theorem immediately leads to the below corollary. 

\begin{corollary}\label{cor-of-nroots}
	Let $f(x)$ be a continuously differentiable function on the interval $(a, b)$. If there are, in total $n>1, n\in \mathbb{Z}$ points: $a < x_1 < ... < x_n < b$ such that $f(x_1)=f(x_2)=...=f(x_n)$ and they have the same sign of nonzero derivatives, then there must be another $n-1$ points on $(a, b)$ which have an identical function value with $f(x_i), i=1,...,n$ with different derivative signs. 
\end{corollary}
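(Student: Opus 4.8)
The plan is to derive the corollary by a direct, iterative application of Theorem~\ref{thm-root-existence} to each consecutive pair among the $n$ given points, rather than by any fresh analytic argument. Denote the common function value by $c$, so that $f(x_1) = f(x_2) = \dots = f(x_n) = c$, and let $s \in \{+, -\}$ be the common sign of the nonzero derivatives $f'(x_1), \dots, f'(x_n)$.

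First I would fix an index $i \in \{1, \dots, n-1\}$ and examine the consecutive pair $x_i < x_{i+1}$. By hypothesis $f(x_i) = f(x_{i+1}) = c$, and since $f'(x_i)$ and $f'(x_{i+1})$ are both nonzero and of the same sign $s$, we have $f'(x_i) f'(x_{i+1}) > 0$. This is exactly the hypothesis of Theorem~\ref{thm-root-existence} applied on the subinterval $(x_i, x_{i+1})$, so it supplies a point $x_0^{(i)} \in (x_i, x_{i+1})$ with $f(x_0^{(i)}) = c$ and $f'(x_0^{(i)}) f'(x_i) \le 0$; that is, the derivative at $x_0^{(i)}$ is either zero or of the opposite sign to $s$.

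Next I would collect these points over all $i$ and argue that they are distinct. Because the original points are strictly ordered $x_1 < x_2 < \dots < x_n$, the open intervals $(x_1, x_2), (x_2, x_3), \dots, (x_{n-1}, x_n)$ are pairwise disjoint, and each contains exactly one of the constructed points $x_0^{(i)}$. Hence $x_0^{(1)}, \dots, x_0^{(n-1)}$ are $n-1$ distinct points, none of which coincides with any $x_j$, since each lies strictly inside an interval whose endpoints are consecutive members of the list. Each has function value $c$ and a derivative whose sign differs from the common sign $s$, which is precisely the claimed collection of $n-1$ additional points.

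The substantive content is entirely carried by the theorem, so I do not expect a genuine obstacle; the one item deserving care is the interpretation of the phrase ``different derivative signs.'' Theorem~\ref{thm-root-existence} yields only $f'(x_0^{(i)}) f'(x_i) \le 0$, which permits a vanishing derivative at $x_0^{(i)}$, so strictly the new points have derivatives that are of opposite sign \emph{or} zero. I expect this mild gap between the $\le 0$ conclusion and a strict sign reversal to be the only thing worth flagging, and it is resolved simply by reading the corollary's phrase as ``not of the same sign,'' requiring no further analytic work.
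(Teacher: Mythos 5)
Your proposal is correct and is essentially the paper's own proof: the paper likewise dispatches the corollary by applying Theorem~\ref{thm-root-existence} consecutively across the disjoint intervals $(x_1,x_2), (x_2,x_3), \dots, (x_{n-1},x_n)$, with disjointness giving distinctness of the $n-1$ new points. Your flag about the $f'(x_0)f'(x_i)\le 0$ conclusion permitting a vanishing derivative is a fair (and slightly more careful) reading than the paper's, which silently glosses ``different derivative signs'' over the zero case.
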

\begin{proof}
	This is a direct consequence of applying the above Theorem~\ref{thm-root-existence} consecutively across $(x_1, x_2)$, $(x_2, x_3)$ etc. 
\end{proof}

The corollary tells us that if we have two roots for $f(x)=0$ on the interval $(a, b)$ and the tangent lines on the two points have the same derivative direction, there must be another root between the two with different derivative sign. Consider the following simple example. We compose a training set by uniformly sampling $4000$ data points from a circle: $\{(x, y)| x^2 + y^2 = 1\}$ and train our implicit function $f_\theta(x, y)$. Consider $f_\theta(0, y)$ as the function $f(x)$ in the above Theorem~\ref{thm-root-existence}. Figure~\ref{fig:circle_fx0y} shows the function $f_\theta(x=0, y), y\in [-1.2, 1.2]$ after training. The correct target values at $x=0$ should be $+1$ and $-1$.
The slope of the function around each of these modes is optimally -1, to satisfy the regularizer $(\frac{\partial f_\theta(x^\ast, y)}{\partial y} + 1)^2 $ which encourages a slope of -1 through all observed $y$. As one can see that, by Theorem~\ref{thm-root-existence}, there is one root for $f_\theta(0, y) = 0$ at $y=0$. However, the slope of the function here incurs a high cost for the derivative part. 

Now, consider that if there is a wrong prediction $y' \in (-1, 1)$ but $f_\theta(0, y')$ is almost zero and $\frac{\partial f_\theta(0, y)}{\partial y}$ is very close to $-1$---it has the same sign as the other two \emph{correct} predictions. By our Corollary~\ref{cor-of-nroots}, this indicates that there are another two roots for $f_\theta(0, y)=0$ on $(-1, y'), (y', 1)$ respectively---which leads to two more waves on the interval $(-1, 1)$. That is, every time the objective gives one more incorrect prediction, the function $f_\theta(0, y)$ has two more waves. This would result in a high frequency function, which could be difficult to approximate even with training data. Hence, unless the training data reflects such a high frequency pattern, it is unlikely to choose to do so.

\begin{figure}[!thbp]
	\centering
	\includegraphics[width=\figwidthtwo]{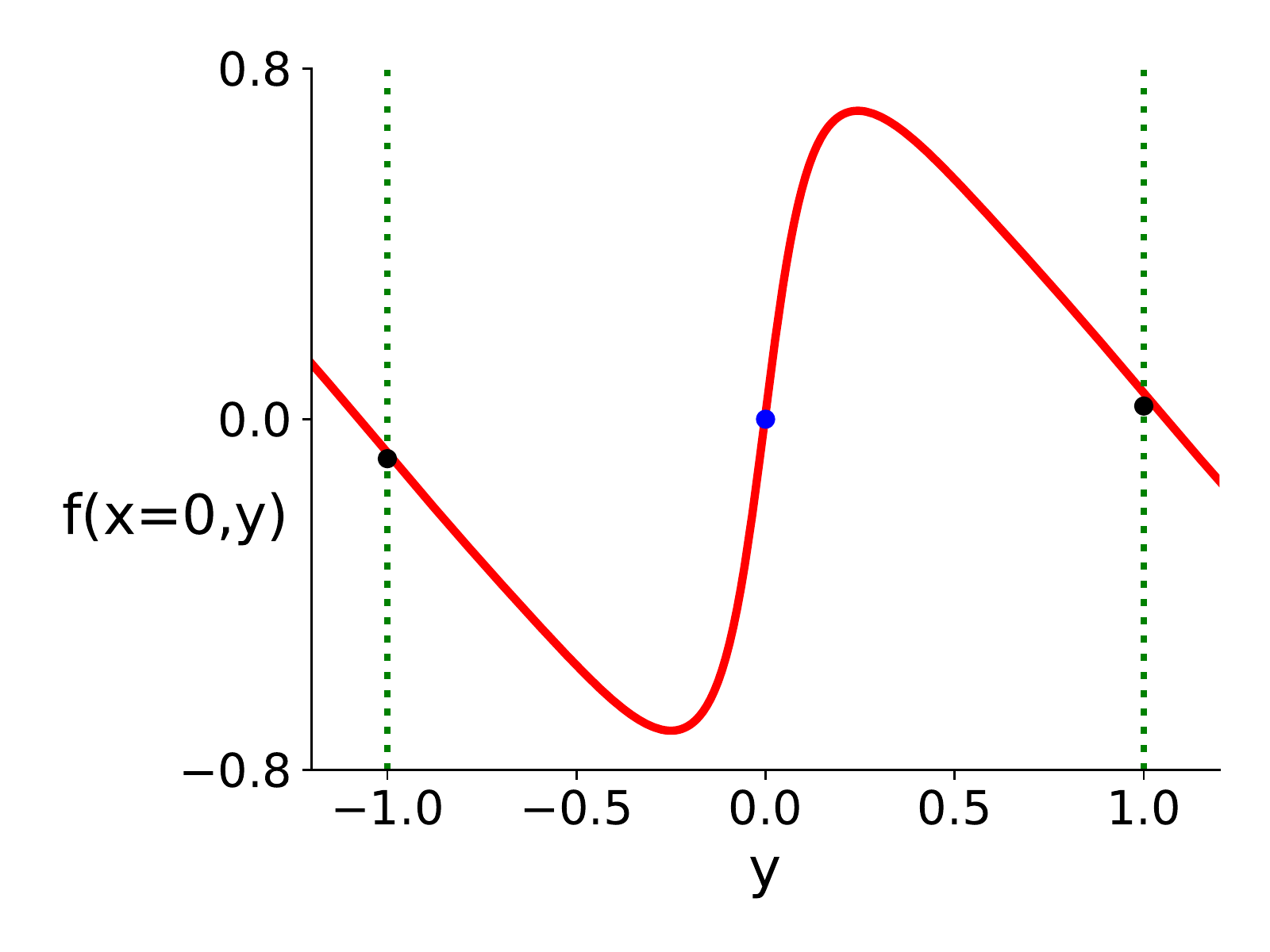}
	\caption{
		The trained function $f_\theta(x=0, y)$. The \textcolor{blue}{\textbf{blue point}} is $(0, 0)$; and the other two \textcolor{black}{\textbf{black points}} are the predicted points by $\argmin_y f_\theta(0, y)^2 + (\frac{\partial f_\theta(0, y)}{\partial y} + 1)^2$. 
	}\label{fig:circle_fx0y}
\end{figure}

\subsubsection{Modal Regression Algorithm Evaluation}\label{sec:modal-reg-alg-evaluation}

To our best knowledge, there is no common standard to evaluate modal regression algorithms when there are multiple true modes. In our main paper, we introduced two ways for evaluation. First, we randomly pick a predicted mode from $\predictglobal(\cdot)$ and compute RMSE/MAE to the closest true mode. Second, on the real world datasets, we consider the worst case prediction from the set $\predictlocal(\cdot)$. 

We now introduce an arguably more formal, but less intuitive way to evaluate our predictions---Hausdorff distance, which is a natural way to compute distance between two sets. The distance is defined as:
\begin{equation}\label{haus-dist}
	d_{\mathrm H}(X,Y) = \max\left\{\,\sup_{x \in X} \inf_{y \in Y} d(x,y), \sup_{y \in Y} \inf_{x \in X} d(x,y) \right\}
\end{equation}
where $X, Y$ are two non-empty subsets of some metric space. It should be noted that, when $|Y|=1$, then this distance is equivalent to computing the worst case prediction as introduced for the regular regression datasets Songyear and Bike sharing in Section~\ref{general-usage}. This is because $d_{\mathrm H}(X,Y)|_{Y=\{y\}} = \max\left\{\,\sup_{x \in X} \inf_{y \in Y} d(x,y), \sup_{y \in Y} \inf_{x \in X} d(x,y) \right\} = \max\left\{\,\sup_{x \in X} d(x,y),  \inf_{x \in X} d(x,y) \right\} = \sup_{x \in X} d(x,y)$. This also indicates that it is only suitable to use this method to evaluate distance between the set $\predictlocal(\cdot)$ and the set of true modes when $|\predictlocal(\cdot)|>1$, because when $|\predictlocal(\cdot)|=1$, the evaluation becomes meaningless in that even if it predicts the correct highest likelihood mode, the evaluation will choose to compute its distance to some other true mode as long as the set contains a single prediction. In section~\ref{Sec:additional-experiments}, we include learning curves in terms of Hausdorff distance between the set $\predictlocal$ and the set of true modes. We do not include the learning curve of Hausdorff distance based on $\predictglobal$ as this set frequently contains only a single prediction. 

\subsection{Reproduce experiments in the paper}\label{Sec:reproduce}

In this section, we provide additional information about datasets we used and experimental details for reproducing all results in this paper. We introduce common settings below.

\textbf{Common settings.} Our implementation is based on Python 3.3.6. Our deep learning implementation is based on Tensorflow 1.14.0~\citep{tensorflow2015-whitepaper}. 
All of our algorithms are trained by Adam optimizer~\citep{Kingma2015AdamAM} with mini-batch size $128$ and all neural networks are initialized by Xavier~\citep{xavier2010trainnn}. For our implicit function learning algorithm, we use tanh units for all nodes in neural network. 
We search over $200$ evenly spaced values for prediction for Implicit, MDN, and KDE. When evaluating the algorithms, the data is randomly split into training and testing sets and we evaluate the testing error every $200$ training steps for each random seed unless otherwise specified. 
The core part of the code is shared at \url{https://github.com/yannickycpan/parametricmodalregression.git}.

\subsubsection{Reproduce results from Section~\ref{sec-theoretical-insight}}

\textbf{Algorithm implementation.} We use a relatively larger neural network size than those used on other circle datasets to highlight the effect of our regularization methods. We use $64\times64$ tanh units neural network and for each regularization weight $\eta$ value, and optimize learning rate from $\{0.01, 0.001, 0.0001\}$. 

\textbf{Optimal parameter choice.} The best learning rate is $0.0001$.

\textbf{Dataset generation.} We generate a training set by uniformly sampling $x\in [-1, 1]$ and training target $y$ by sampling $y\sim 0.8\mathcal{N}(\sqrt{1-x^2}, 0.1^2)+0.2\mathcal{N}(-\sqrt{1-x^2}, 0.1^2)$ if $x<0$ and $y\sim 0.2\mathcal{N}(\sqrt{1-x^2}, 0.1^2)+0.8\mathcal{N}(-\sqrt{1-x^2}, 0.1^2)$ if $x>=0$. We generate $4$k training data points and $1$k testing points. For the purpose of plotting prediction in~Figure~\ref{fig:biased_circle_prediction}, we use the numpy function numpy.arange$(-1, 1, 0.01)$ to generate evenly spaced $x$ values. 

\subsubsection{Reproduce results from Section~\ref{Sec:circle-example}}

\textbf{Algorithm implementation.} We keep the neural network size as $16\times 16$ tanh units and sweep over learning rate from $\{0.01, 0.001, 0.0001\}$ in all circle experiments from Section~\ref{Sec:circle-example}. For mixture density network (\textbf{MDN}), the maximization is done by using MLE, the method described in the original paper~\citep{bishop1994mixture}. For \textbf{KDE} model, we use the implementation from~\url{https://github.com/statsmodels/statsmodels}. On the high-dimensional double-circle dataset, we consider the binary features as categorical. We input the whole training data to build KDE model. We use normal reference rule of thumb for bandwidth selection. For both KDE and MDN, we use grid search in the target space to find out the mode. That is, $\hat{y} = \argmax_y \hat{p}(y|x) = \argmax_y \hat{p}(x, y)$ given testing point $x$. Note that the classic mean-shift algorithm for mode seeking attempts to do hill climbing (i.e. gradient ascent) on a KDE model with multiple initial values, which may still get a local optimum. We opt to directly search from $200$ evenly spaced values in the target space to find out the mode given a KDE model. 

\textbf{Optimal parameter choice.} The best learning rate is $0.01$ for both MDN and Implicit except on double circle dataset, the optimal learning rate is $0.001$ for Implicit on the high dimensional double circle dataset. Note that Implicit performs only slightly worse when learning rate is $0.01$. 

\textbf{Dataset generation.} Circle dataset is generated by uniformly sampling $x \in [-1, 1]$ first and then $y = \sqrt{1 - x^2}$ or $y = -\sqrt{1 - x^2}$ with equal probability. Double circle dataset is generated by uniformly sampling an angle $\alpha \in [0, 2\pi]$ then use polar expression to compute $x = r \cos\alpha, y = r \sin\alpha$ where $r = 1.0$ or $r = 2.0$ with equal probability. High dimensional double dataset is generated by mapping the original $x$ to $\{0,1\}^{128}$ dimensional space. We refer to \url{http://www.incompleteideas.net/tiles.html} for tile coding software. The setting of tile coding we used to generate feature is: memory size $=128$, $8$ tiles and $4$ tilings. 

\subsubsection{Reproduce results from Section~\ref{Sec:high-freq}}

\textbf{Algorithm implementation.} We use $16\times 16$ tanh units neural network for both algorithm. We sweep over $\{0.1, 0.01, 0.001, 0.0001, 0.00001\}$ to optimize stepsize for both the $l_2$ regression and our algorithm, while we additionally sweep over hidden unit and output unit type for the $l_2$ regression from \{tanh, relu\} and found tanh to be better. The best parameter is chosen to optimize the second half of the learning curve, and the testing error is averaged over $30$ random seeds. For the purpose of doing prediction, we use $\predictglobal(\cdot)$. 

\textbf{Optimal parameter choice.} The best learning rate chosen by the $l_2$ regression is $0.01$ while our algorithm chooses $0.001$. As a result, in Figure~\ref{fig:l2_small_learning_rate}, we also plot the learning curve with learning rate $0.001$ for the $l_2$ regression to make sure that the performance difference is not due to a slower learning rate of our algorithm. 

\textbf{Dataset generation.} The dataset is generated by uniformly sampling $x \in [-2.5, 2.5]$ and then compute targets according to the equation:
\begin{align*}
y = \begin{cases} 
\sin(8\pi x) & x \in [-2.5, 0) \\
\sin(0.5\pi x) & x \in [0, 2.5]
\end{cases}
\end{align*}

\textbf{Figure generation.} To generate the learning curves from Figure~\ref{fig:highfreq-sin} in Section~\ref{Sec:high-freq} and Figure~\ref{fig:l2_small_learning_rate}, we evaluate the testing error every $10$k number of training iterations (i.e. mini-batch updates). We compute the testing errors on the entire testing set, on high frequency area ($x\le0$) and low frequency area ($x\ge0$) respectively.

\begin{figure*}[t]
	\centering
	\subfigure[$l_2$ with learning rate $0.01$]{
		\includegraphics[width=\figwidththreeplus]{figures/noise_p1_sintoy}}
	\subfigure[$l_2$ with learning rate $0.001$]{
		\includegraphics[width=\figwidththreeplus]{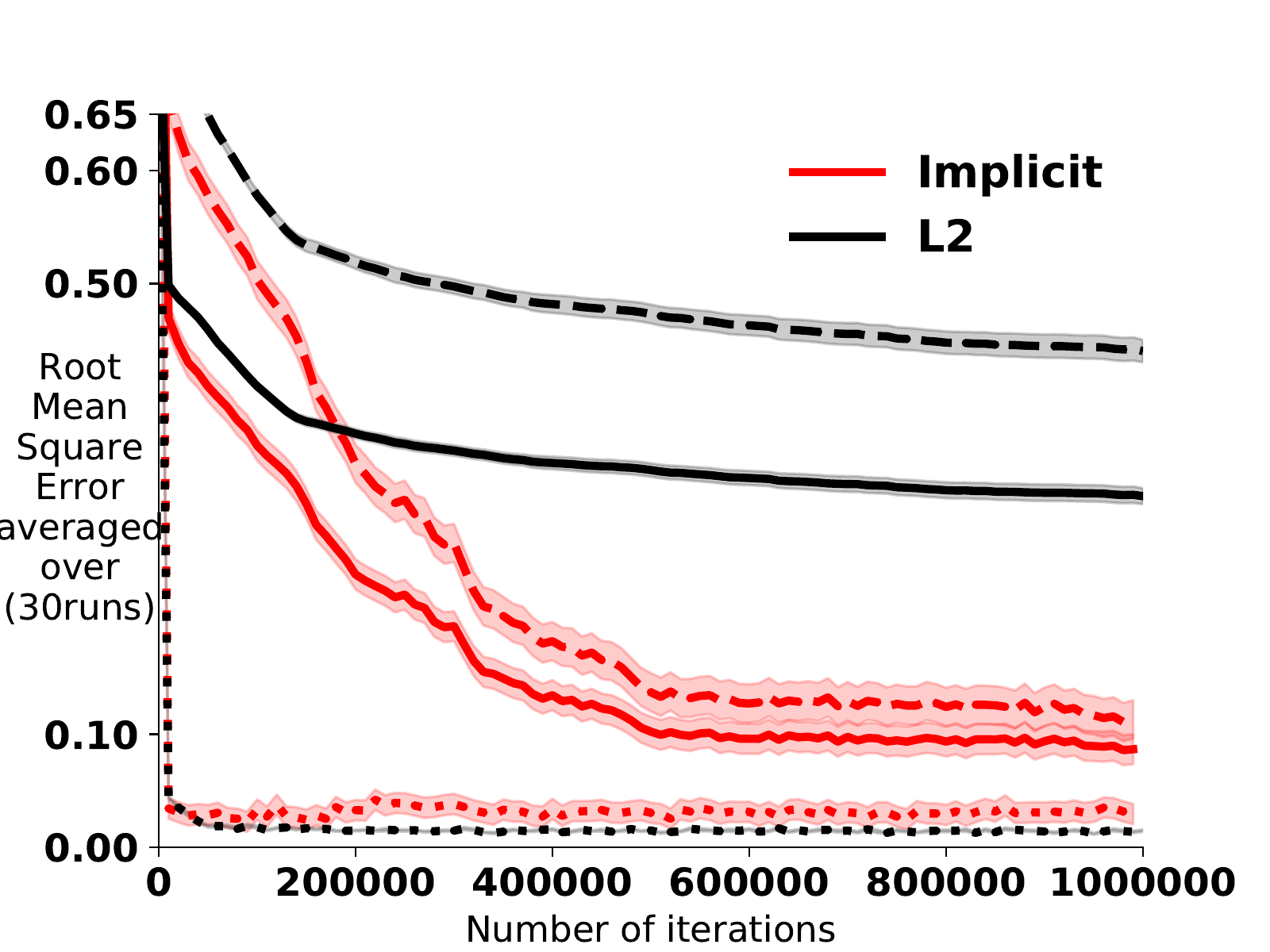}} 
	\caption{
		In (a) we repeat the figure shown in previous Section~\ref{Sec:high-freq}. 
	}\label{fig:l2_small_learning_rate}
\end{figure*} 

\subsubsection{Reproduce results from~Section~\ref{general-usage}}\label{Sec:benchmark}

\textbf{Algorithm implementation.} We use $64\times 64$ tanh units neural network for all algorithms except for NNPoisson where the Poisson mean is using linear activation. We optimize learning rate from $\{0.01, 0.001, 0.0001, 0.00001\}$. For MDN, we set number of mixture components as $6$. When we report the RMSE/MAE on those real world datasets, we choose the average of the best $5$ consecutive evaluation testing errors to report. At each run, we evaluate the testing errors every $10$k training steps (i.e. mini-batch updates) and we train each algorithm up to $200$k steps. 

\textbf{Optimal parameter choice.} All algorithms choose learning rate $0.0001$ except PoissonReg chooses $0.01$ on Bike sharing dataset. 

\textbf{Dataset generation.} The Medical Cost Personal Datasets by~\citet{brett2013mlr} can be downloaded from \url{https://github.com/stedy/Machine-Learning-with-R-datasets/blob/master/insurance.csv}. We standardize the \emph{age} and \emph{bmi} variables and perform logarithmic transformation for the target variable \emph{insurance charge} due to its wide range and large values; then we further scale the target variable to $[0, 1]$ when training. We report the errors after converting the predictions back to logarithmic scale. We upload the dataset processed by the steps described in Section~\ref{general-usage} through anonymous file sharing and can be downloaded at \url{https://anonymousfiles.io/WkojSn5F/}, the last two columns are two possible targets/modes. 

The bike sharing dataset~\citep{fanaee2013bikedata} (\url{https://archive.ics.uci.edu/ml/datasets/bike+sharing+dataset}) and song year dataset~\citep{bertin2011songyear} (\url{https://archive.ics.uci.edu/ml/datasets/yearpredictionmsd}) information are presented in figure~\ref{fig:benchmarkdata}. Note that the two datasets have very different target distributions as shown in Figure~\ref{fig:targetdist}. 

\begin{figure*}[!htpb]
	\centering
	\subfigure[Bike sharing dataset target distribution]{
		\includegraphics[width=\figwidththree]{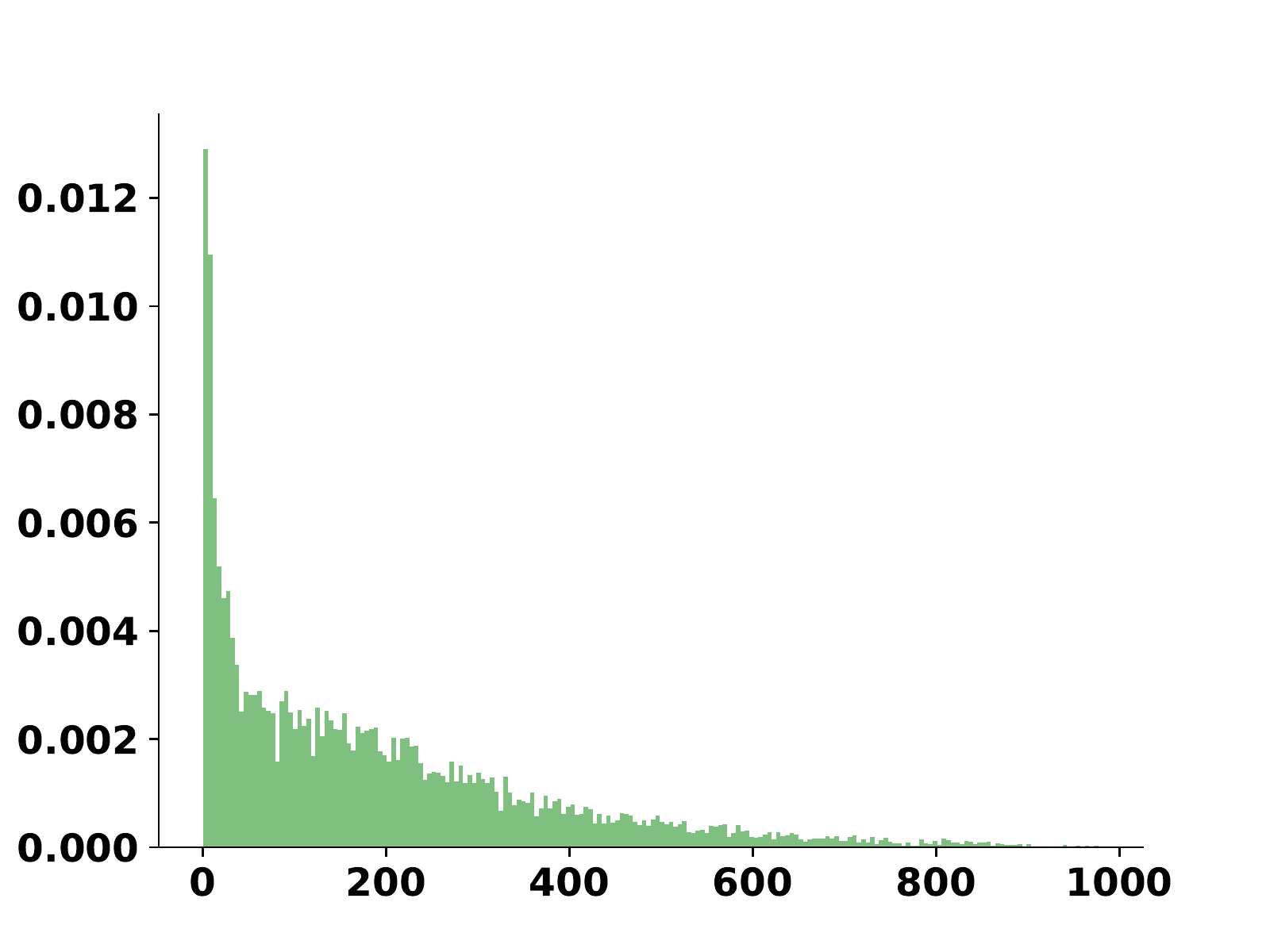}}
	\subfigure[Song year dataset target distribution]{
		\includegraphics[width=\figwidththree]{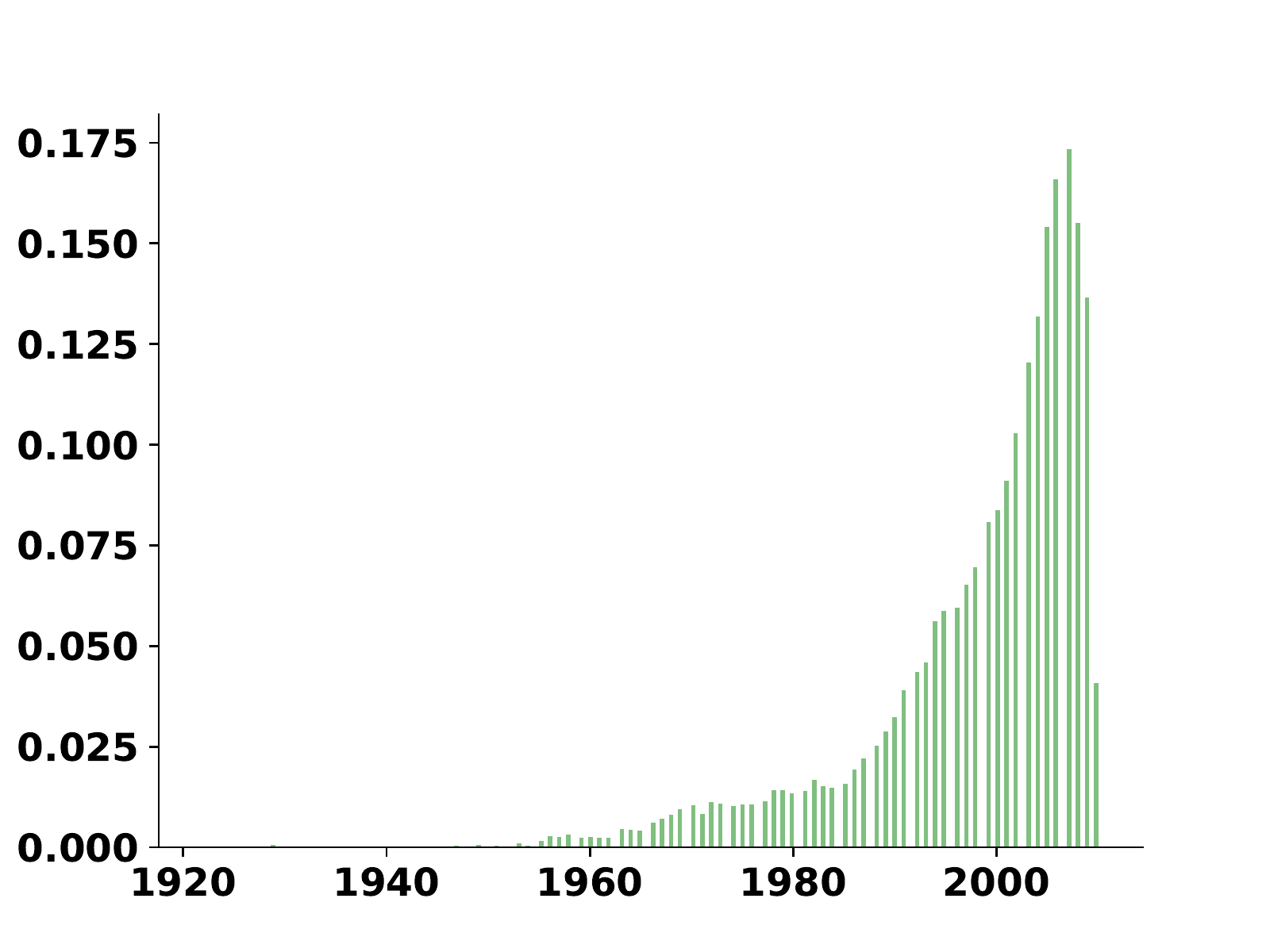}} 
	\caption{
		Bike sharing targets show a clear Poisson distribution while song year dataset's target distribution is not intuitive. 
	}\label{fig:targetdist}
\end{figure*}

\begin{figure*}[!htpb]
	\centering
	\includegraphics[width=1.02\textwidth]{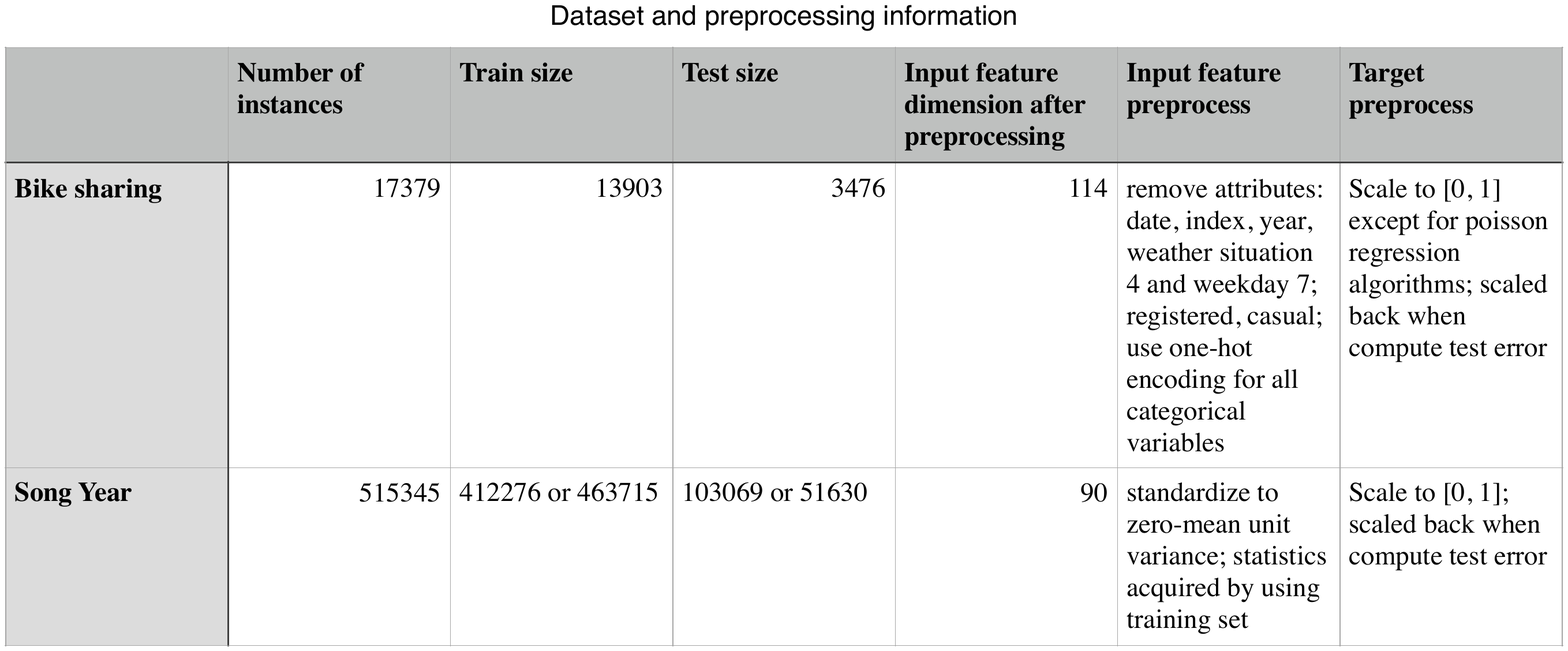}
	\caption{
		Data preprocessing information.
	}\label{fig:benchmarkdata}
\end{figure*}

\subsection{Additional experimental results}\label{Sec:additional-experiments}

In this section, we provide the following additional experiments.

\begin{enumerate}
	\item \ref{Sec:additional-error-function}: Visualization of the empirical density function $f_\theta(X, Y)$ after training.
	\item \ref{Sec:additional-predictions-circle}: Predictions on the single-circle and double circle datasets corresponding to learning curves from Section~\ref{Sec:circle-example}.
	\item \ref{Sec:additional-hausdorff-circle}: Learning curves in terms of Hausdorff distance for experiments in Section~\ref{Sec:circle-example}.
	\item \ref{Sec:additional-sin}: Additional experiments to show that our Implicit objective learns a finer neural network representation on the high frequency data as introduced in Section~\ref{Sec:high-freq}.
	\item \ref{Sec:additional-hausdorff-insurance}: Learning curves in terms of Hausdorff distance for experiments on the Modal regression real world dataset in Section~\ref{general-usage}.
	\item The concrete training and testing RMSE and MAE on all regular regression real world datasets. 
	\item \ref{Sec:classic-inverse}: Predictions on a classic inverse problem~\citep{bishop1994mixture} in \ref{Sec:classic-inverse}.
\end{enumerate}

\subsubsection{Examining the learned error function}\label{Sec:additional-error-function}

It should be noted that our learning objective is based on the assumption that the error $\epsilon(X,Y) \sim \mathcal{N}(\mu = 0, \sigma^2) = \frac{1}{\sigma \sqrt{2\pi}} \exp{\left(-\frac{\epsilon(X, Y)^2}{2\sigma^2}\right)}$. As a sanity check, we visualize the empirical distribution of the trained $f_\theta(X, Y)$ by showing histograms constructed with all training samples. From Figure~\ref{fig:empirical-fxy}, one can see that the trained errors do look centering around zero. The one trained with a noise-contaminated dataset show a larger variance (i.e. heavier tail) than the one trained with a noise-free dataset. 

\begin{figure*}[!htpb]
	\centering
	\subfigure[Targets in training set are noise-contaminated]{
		\includegraphics[width=\figwidthtwo]{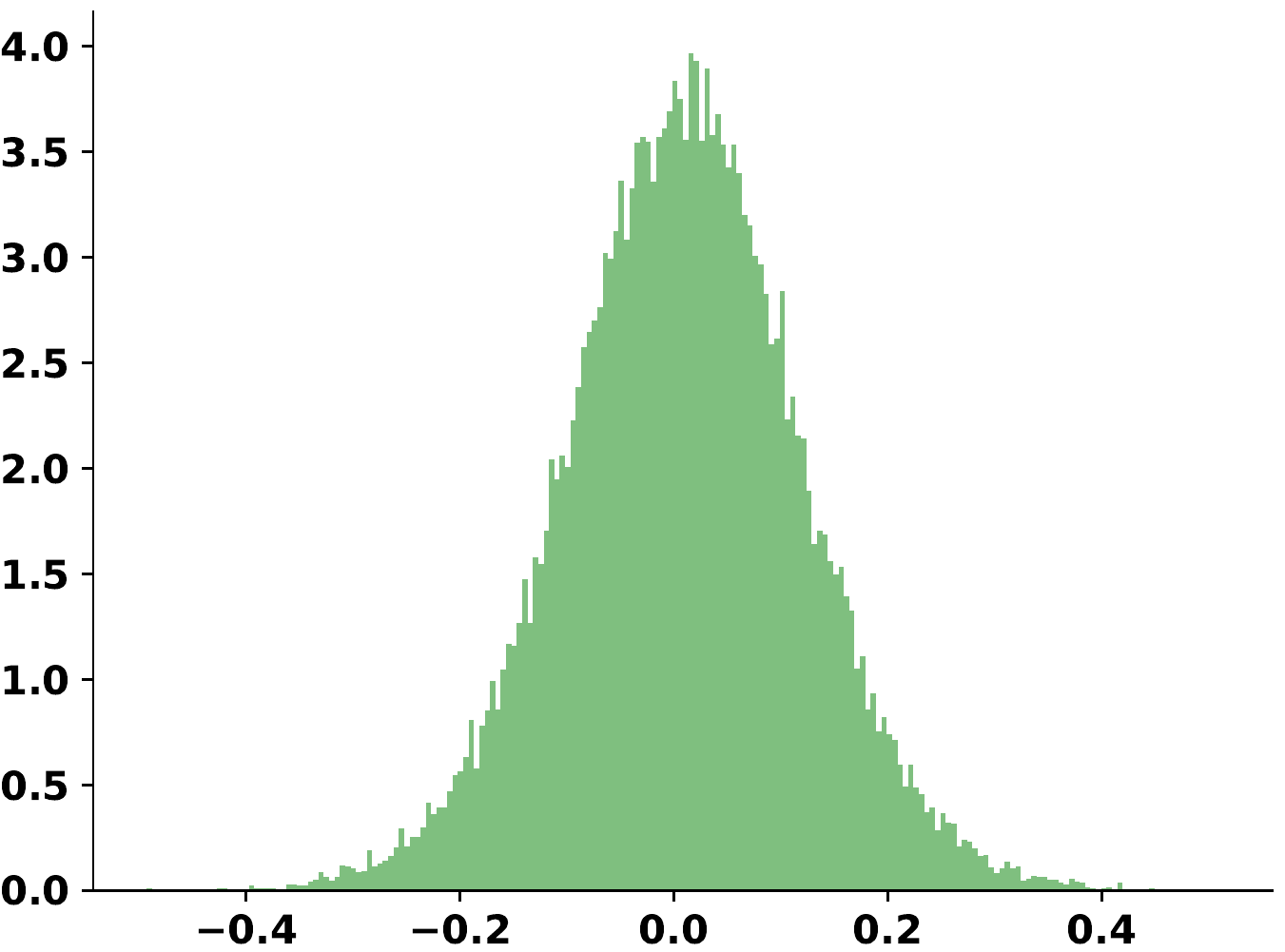}}
	\subfigure[Targets in training set are not noise-contaminated]{
		\includegraphics[width=\figwidthtwo]{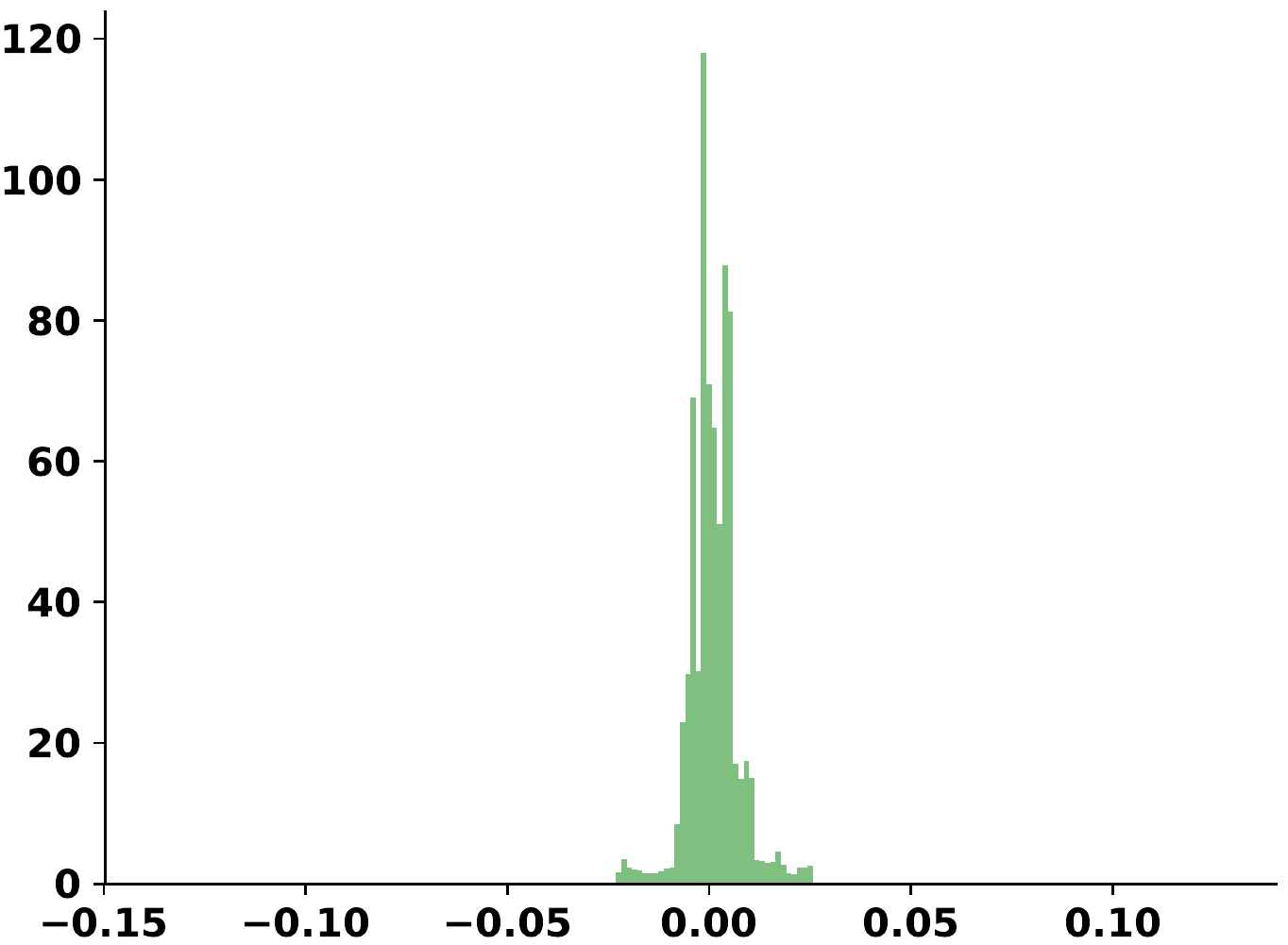}} 
	\caption{
		Figure (a) shows the empirical distribution of $f_\theta(X, Y)$ trained with adding zero-mean Gaussian noise with standard deviation $\sigma=0.1$ and (b) shows the one trained without adding noise to the target. 
	}\label{fig:empirical-fxy}
\end{figure*} 

\subsubsection{Predictions on circle datasets}\label{Sec:additional-predictions-circle}

We further verify the effectiveness of our algorithm by plotting the predictions from $\predictglobal(\cdot)$ of evenly spaced $200$ $x$ values from $[-1, 1]$ for the single-circle dataset and from $[-2, 2]$ for the double-circle dataset. We do not include predictions from $\predictlocal$ as the true modes have equal likelihood. The predictions are shown in Figure~\ref{fig:circle_predictions}.
 
\begin{figure}[t]
 	\centering
 	\subfigure[single-circle]{
 		\includegraphics[width=\figwidththree]{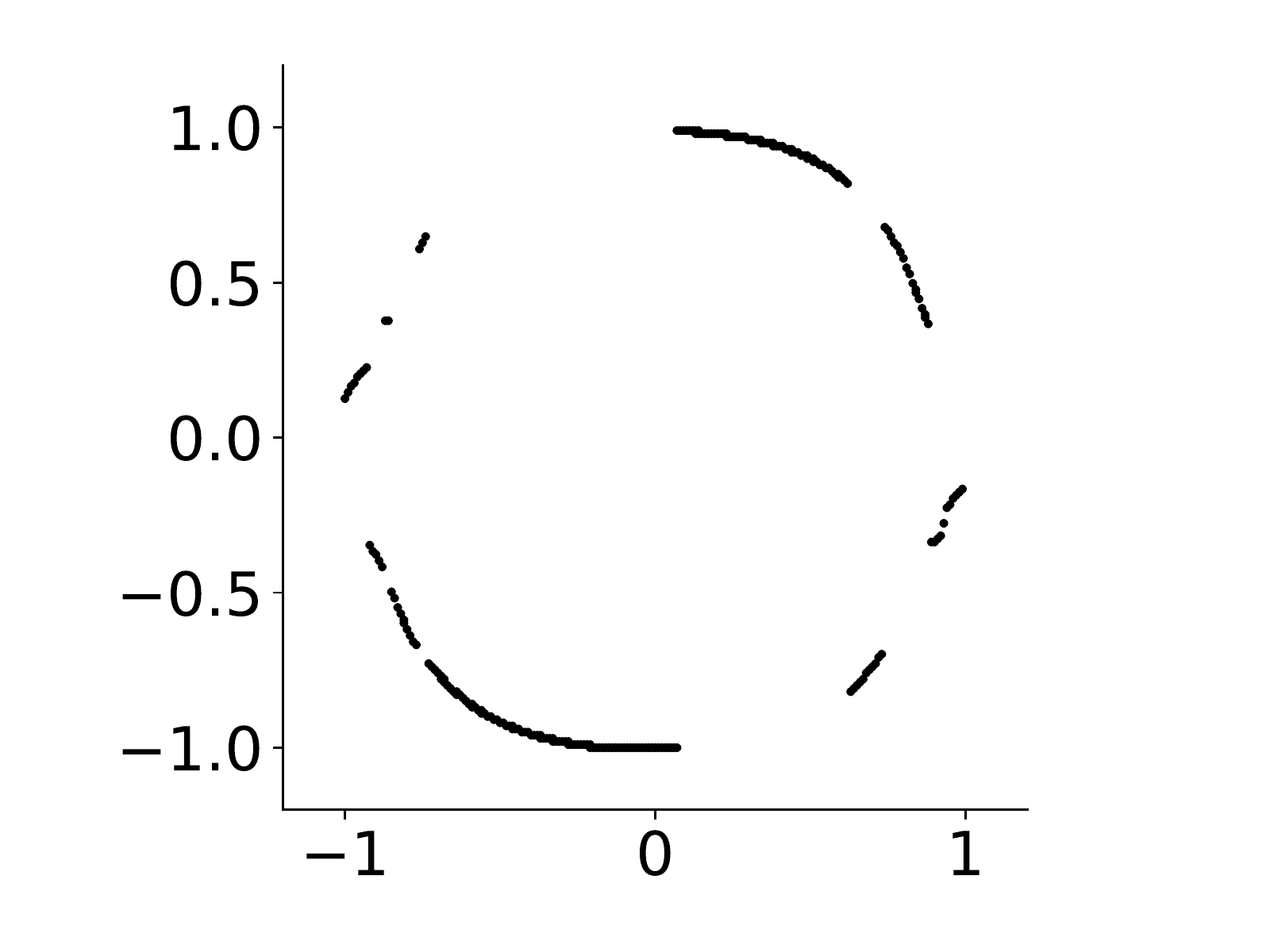}}
 	\subfigure[double-circle]{
 		\includegraphics[width=\figwidththree]{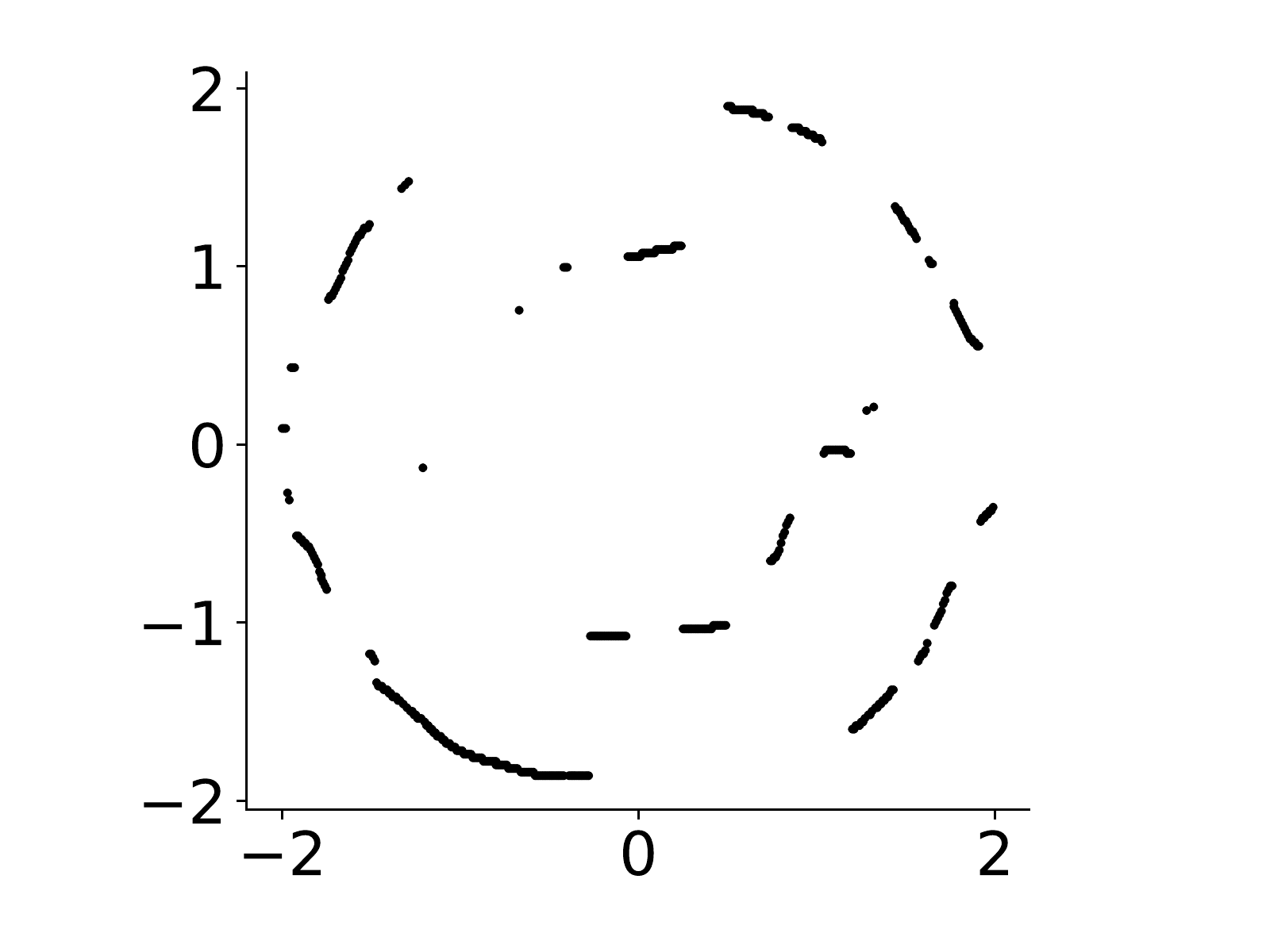}}
 	\caption{
 		(a)(b) shows the predictions of our algorithm on the single-circle dataset by plotting $\predictglobal(\cdot)$ for each $x$. 
 	}\label{fig:circle_predictions}
\end{figure}

\subsubsection{Hausdorff distance learning curves on circle datasets}\label{Sec:additional-hausdorff-circle}

Figure~\ref{fig:circle_hausdorff_lc} shows the result. One can see that: 1) similar to the result from our main paper, MDN is highly sensitive to number of mixture components and seems to have many spurious predictions even on the single circle dataset; 2) KDE is highly sensitive to datatype (i.e. numerical or categorical) and feature dimension; 3) our algorithm Implicit performs stably and reasonably well across different feature dimension and datatype. 

\begin{figure*}[!htpb]
	\centering
	\subfigure[Single-circle]{
		\includegraphics[width=\figwidththree]{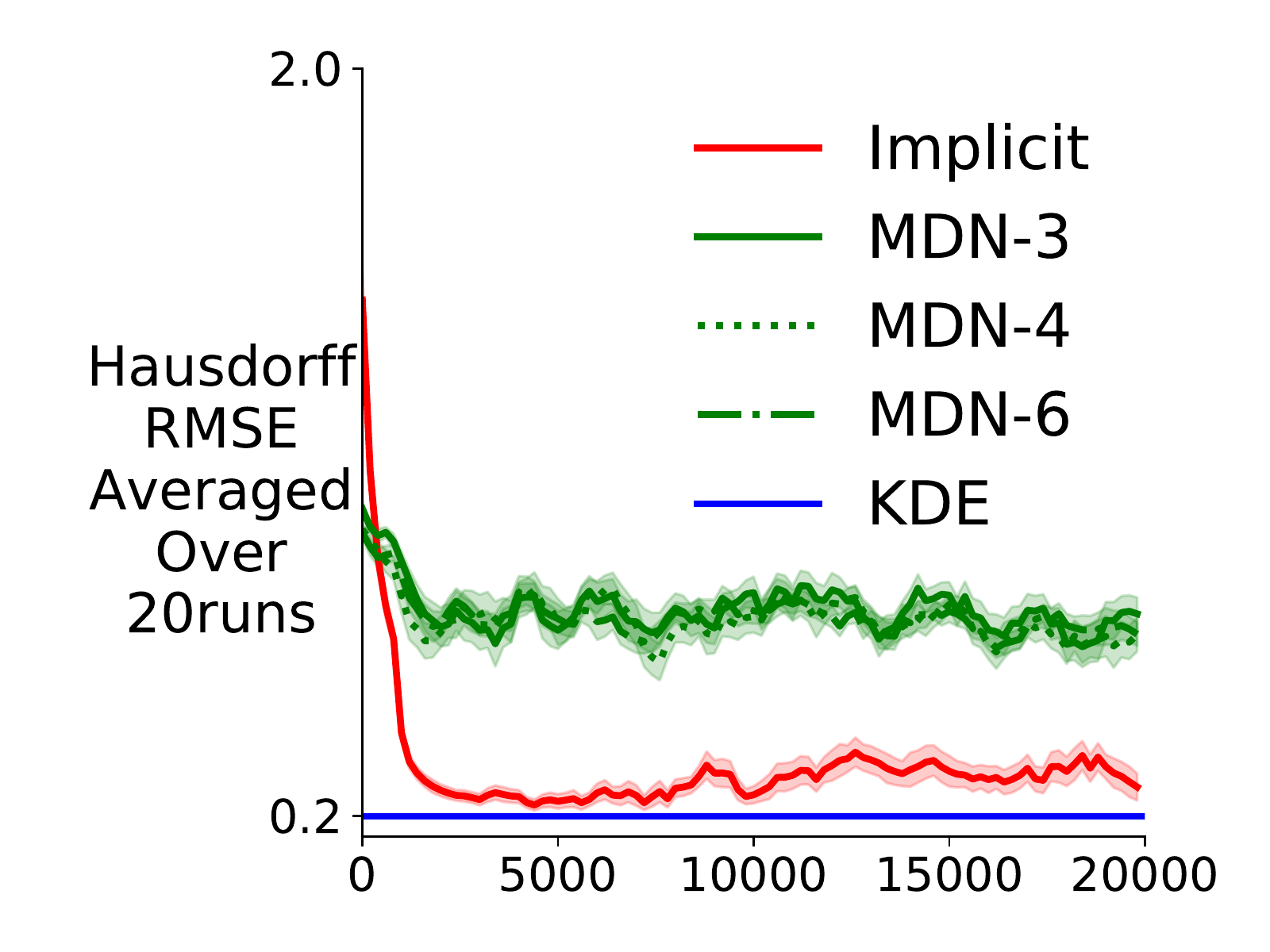}}
	\subfigure[Double-circle]{
		\includegraphics[width=\figwidththree]{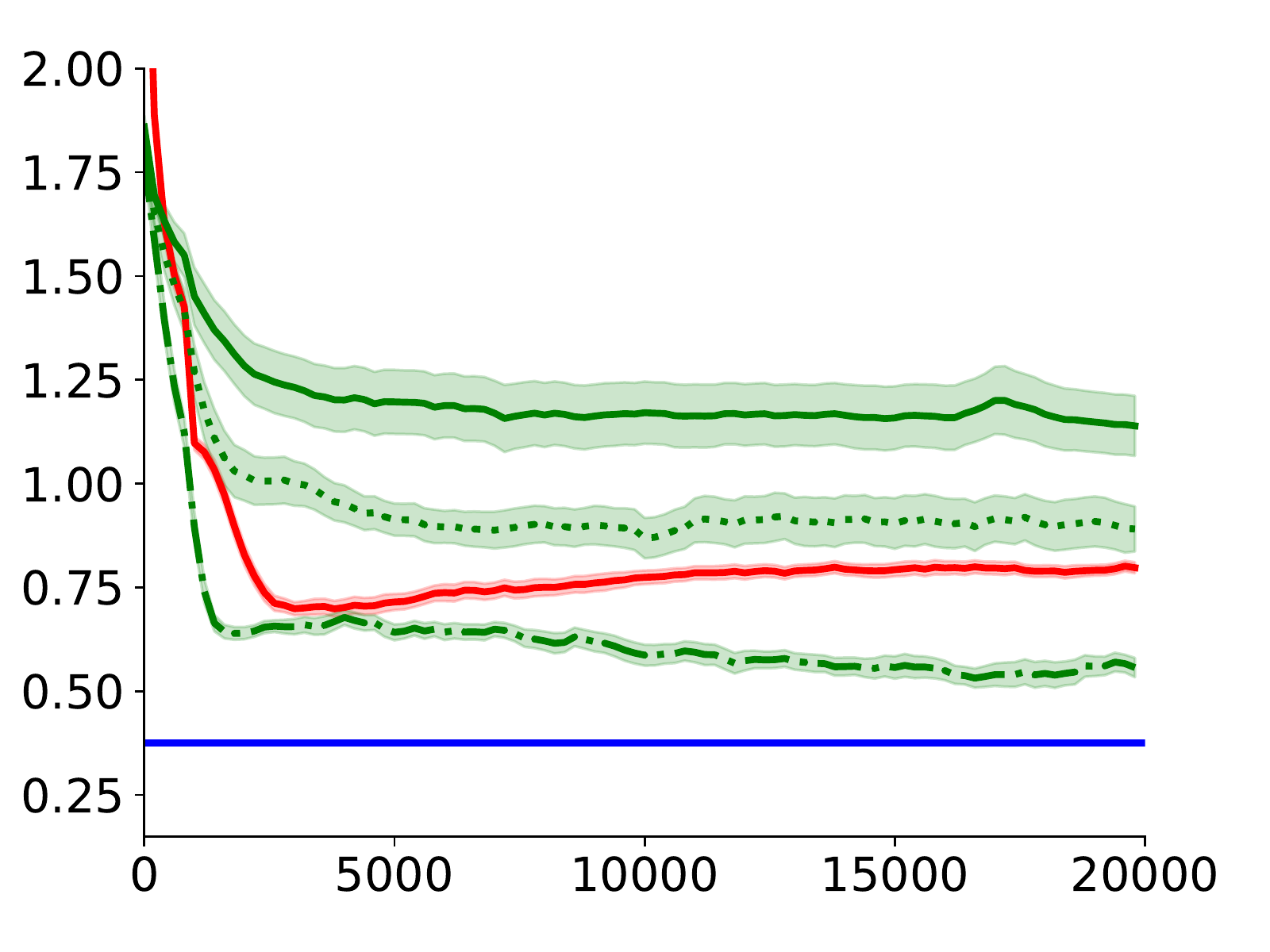}}
	\subfigure[high dimension double-circle]{
		\includegraphics[width=\figwidththree]{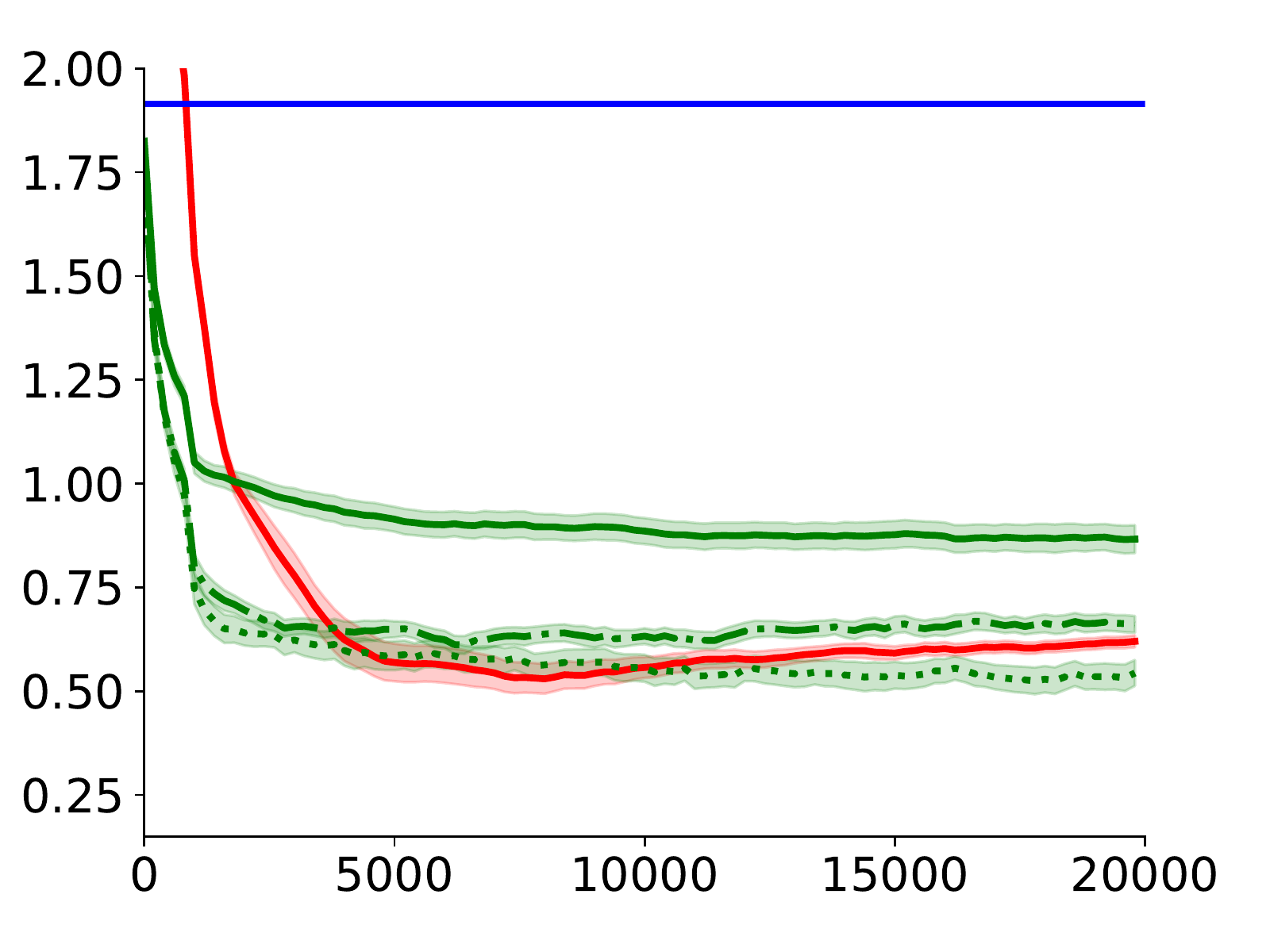}}
		\caption{
			\small
			(a)(b)(c) show testing Hausdorff RMSE as a function of training steps. MDN-3 indicates MDN trained with $3$ components. All results are averaged over $20$ random seeds and the shaded area indicates standard error. 
		}\label{fig:circle_hausdorff_lc}
\end{figure*}

\subsubsection{Examining the neural network representation}\label{Sec:additional-sin}

We further investigate the performance gain of our algorithm by examining the learned neural network representation. We plot the predictions in figure~\ref{fig:nnfeature}(a) and the corresponding NN representation through heatmap in figure~\ref{fig:nnfeature}(b). In a trained NN, we consider the output of the second hidden layer as the learned representation, and investigate its property by computing pairwise distances measured by $l_2$ norm between $161$ different evenly spaced points on the domain $x\in[-2.5, 2.5]$. That is, a point $(x, x')$ on the heatmap in figure~\ref{fig:nnfeature}(b) denotes the corresponding distance measured by $l_2$ norm between the NN representations of the two points (hence the heatmap shows symmetric pattern w.r.t. the diagonal). For our algorithm, the target input is given by minimizing our implicit learning objective.

The representations provide some insight into why Implicit outperformed the $l_2$ regression. In figure~\ref{fig:nnfeature}(a), the $l_2$ regression fails to learn one part of the space around the interval $[-2.25, -1.1]$. This corresponds to the black area in the heatmap, implying that the $l_2$ distance between NN representations among those points are almost zero. Additionally, one can see that the heatmap of our approach shows a clearly high resolution on the high frequency area and a low resolution on the low frequency area, which coincides with our intuition for a good representation: in the high frequency region, the target value would change a lot when $x$ changes a little, so we expect those points to have finer representations than those in low frequency region. This experiment shows that given the same NN size, our algorithm can better leverage the representation power of the NN.

\begin{figure}[!thbp]
	\centering
	\subfigure[Predicted functions]{
		\includegraphics[width=0.38\textwidth]{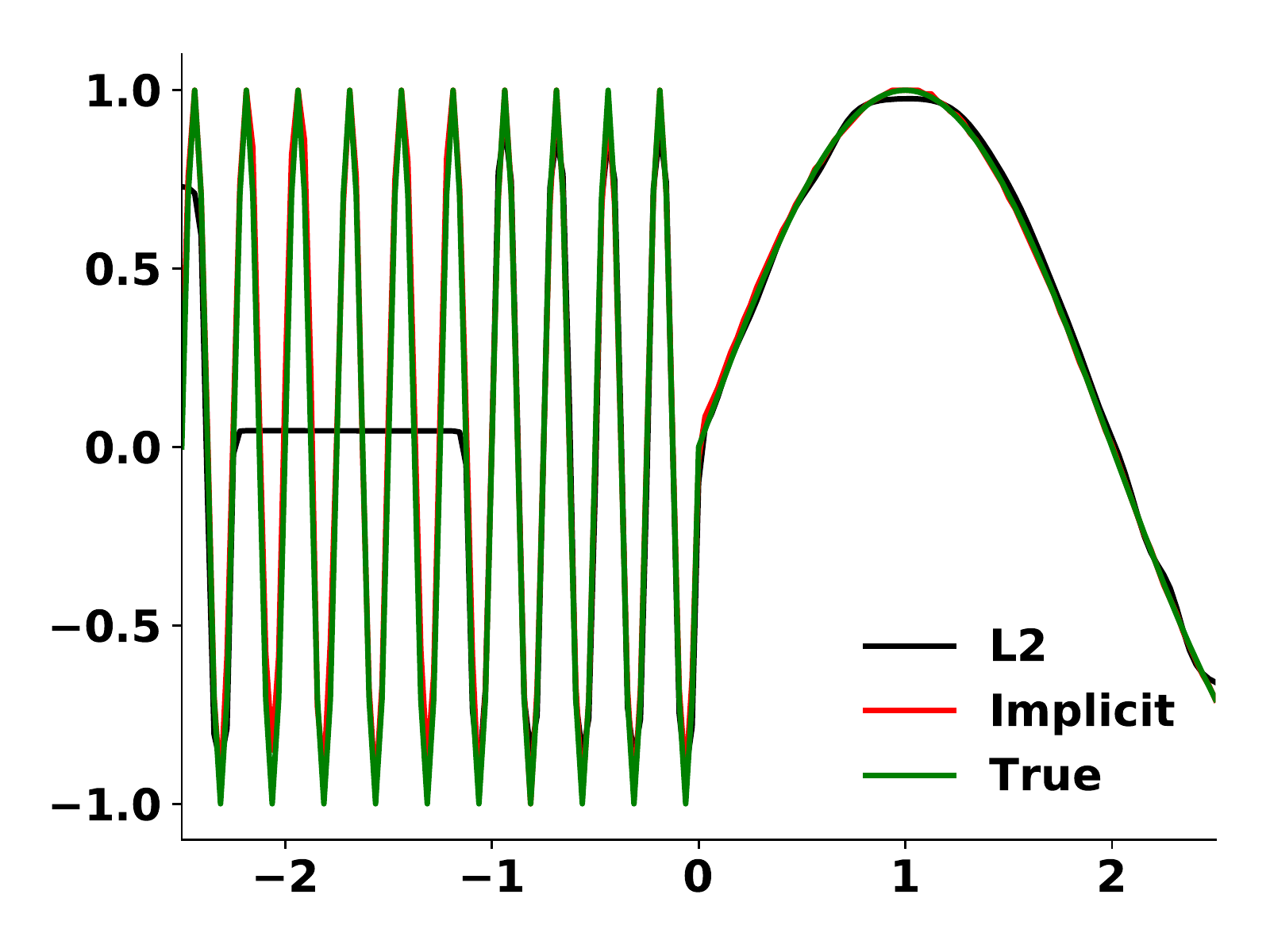}}\label{fig:toysinprediction}
	\subfigure[Distance heatmap]{
		\includegraphics[width=0.58\textwidth]{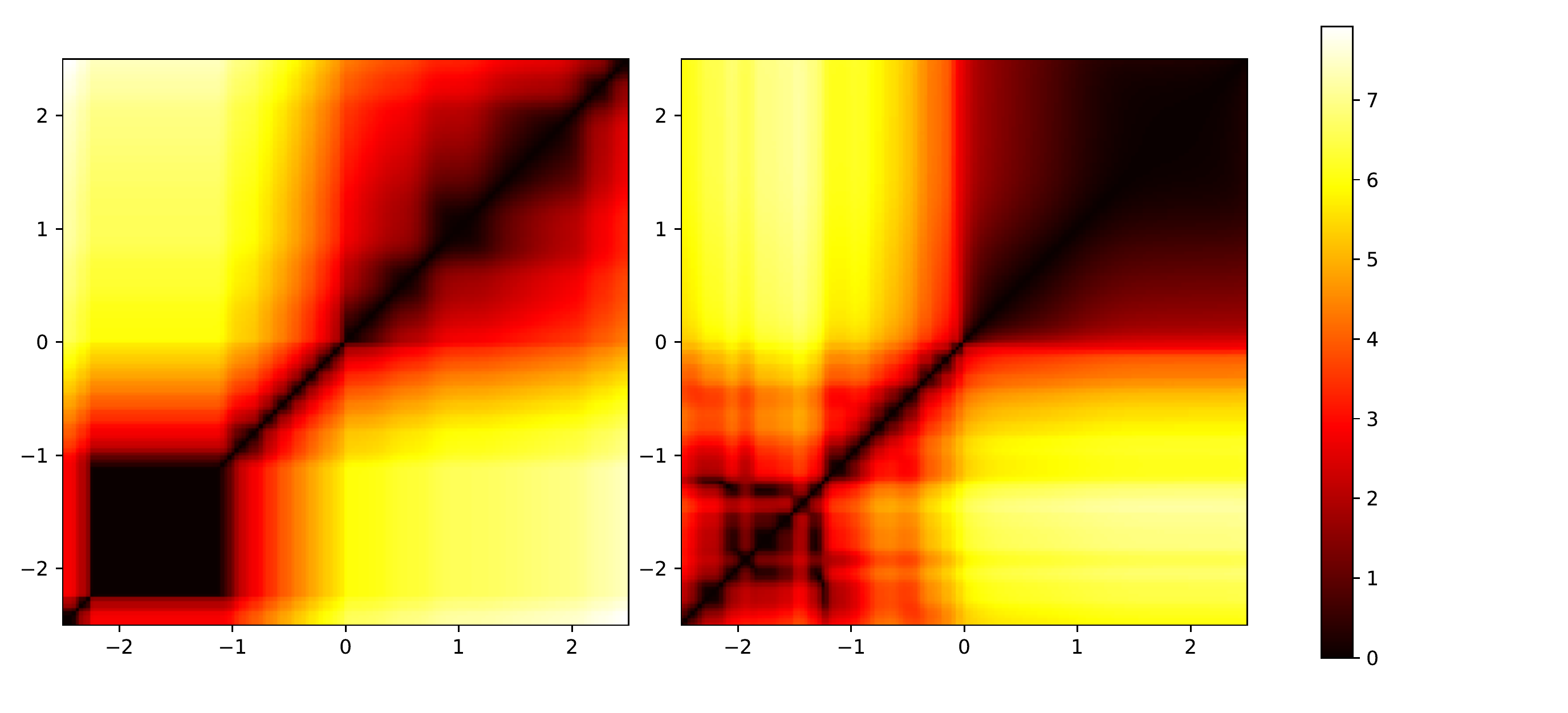}} \label{fig:hiddenl2dist}
		\caption{
			(a) Approximated and true functions. (b) The distance matrix showed in heat map computed by hidden layer representation learned by \textbf{L2} (left) and \textbf{Implicit} (right) method. 
		}\label{fig:nnfeature}
\end{figure}

\subsubsection{Hausdorff distance learning curves for predicting insurance cost}\label{Sec:additional-hausdorff-insurance}

Figure~\ref{fig:insurance-hausdorff} shows the result. Note that Hausdorff-RMSE means that for each testing point, we use square error for $d(x, y)$ in~\eqref{haus-dist} and then take the root of the mean Hausdorff distance of all testing points. Similarly, for Hausdorff-MAE, we use absolute difference for $d(x, y)$ then we take the mean Hausdorff distance of all testing points. 

\begin{figure*}[!htpb]
	\centering
	\subfigure[RMSE]{
		\includegraphics[width=\figwidththree]{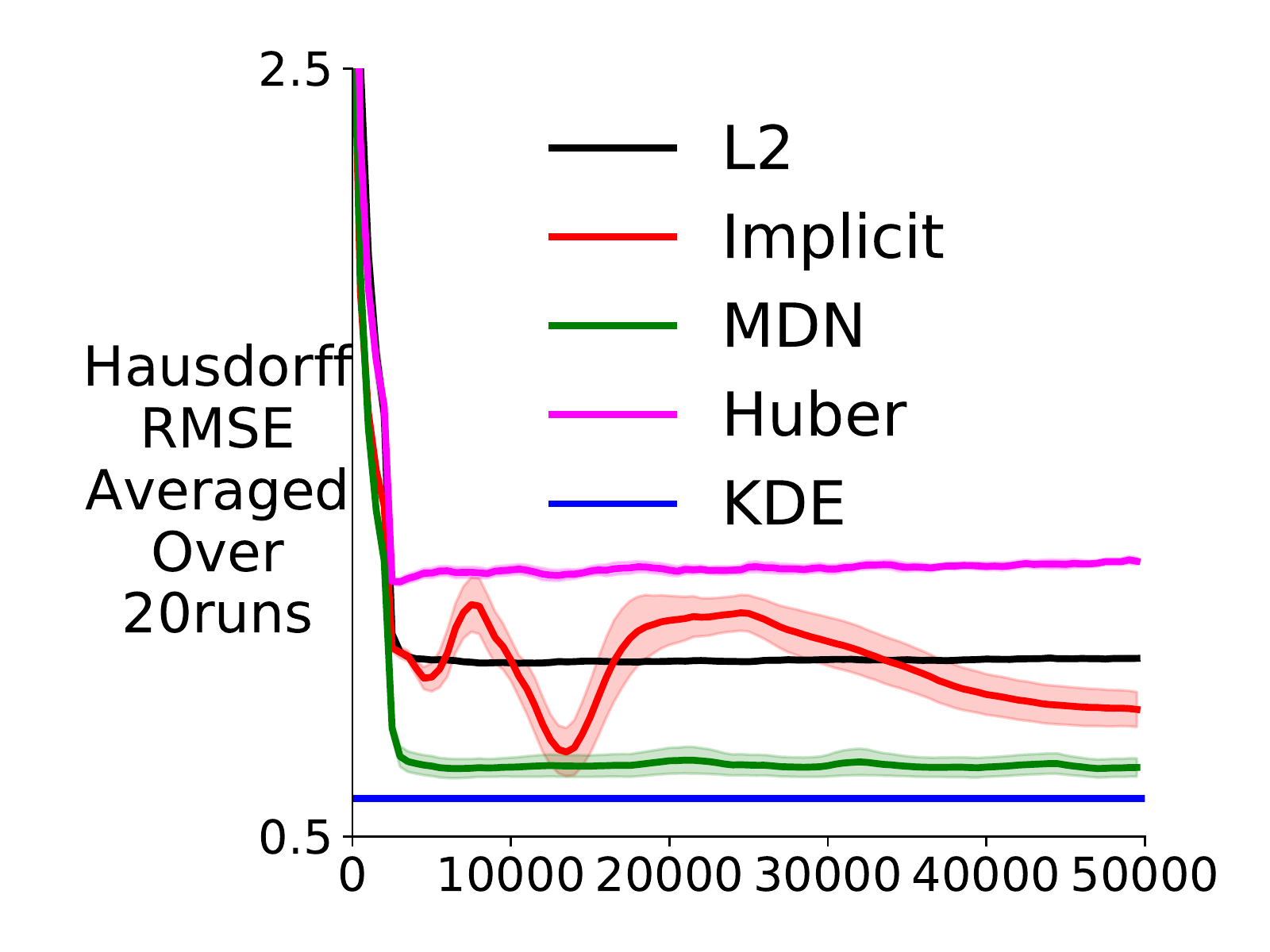}}
	\subfigure[MAE]{
		\includegraphics[width=\figwidththree]{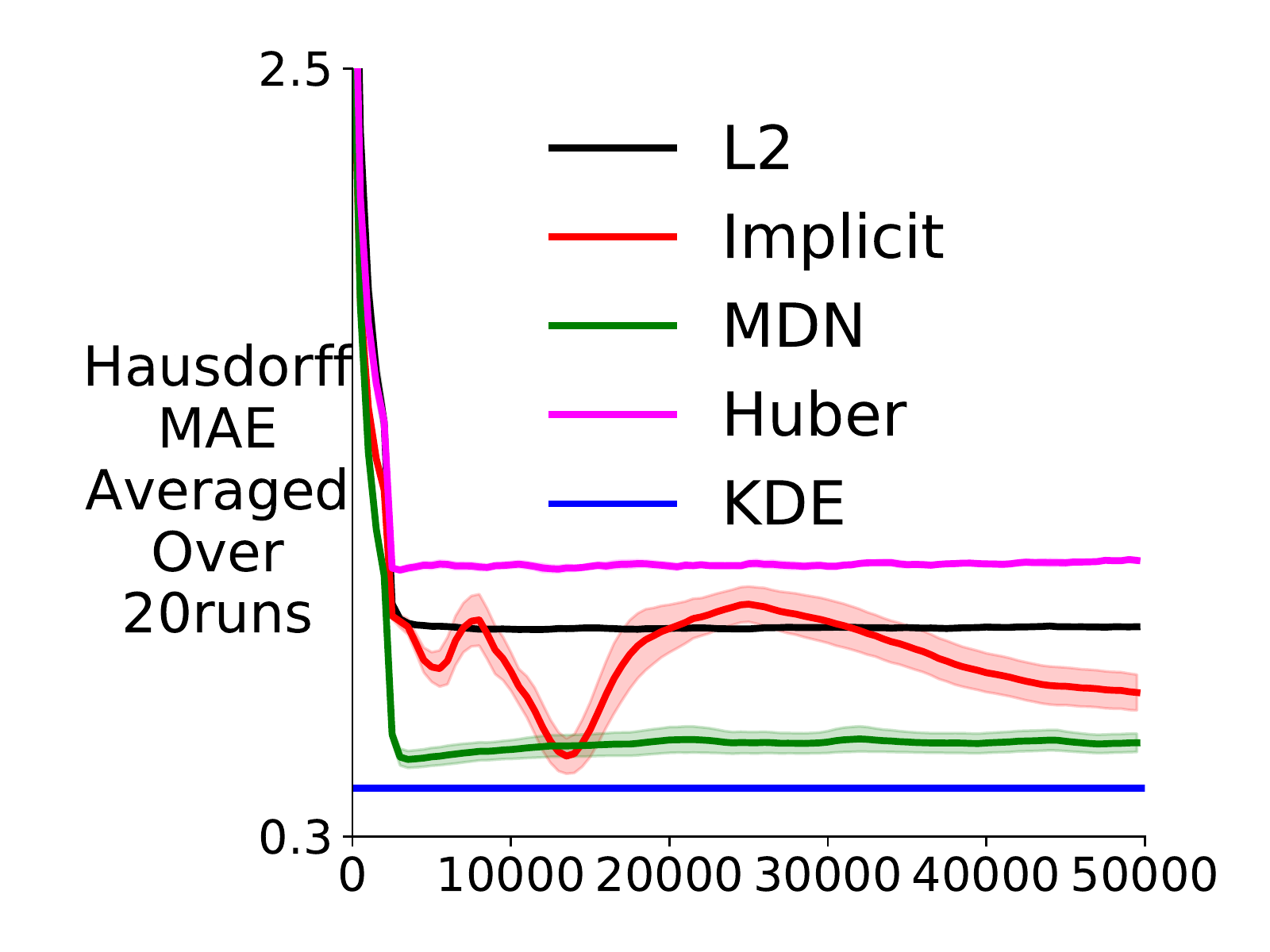}}
	\caption{
		Figure (a) (b) shows Hausdorff w.r.t. RMSE and MAE respectively. The results are averaged over $20$ random seeds. 
	}\label{fig:insurance-hausdorff}
\end{figure*} 

\subsubsection{Detailed results on standard regression tasks}

Tables~\ref{bike-sharing} and~\ref{song-year} show the performance of the algorithms measured by root mean squared error and mean absolute error.

\begin{table*}[h]
		\centering
	\caption{Prediction errors on bike sharing dataset. All numbers are multiplied by $10^2$.}
	\label{bike-sharing}
	\begin{tabular}{l  l  l  l  l }
		\cmidrule{1-5}
		 Algorithms   			& Train RMSE     			& Train MAE 				& Test RMSE 					& Test MAE \\
		\midrule
		LinearReg    & 10094.40($\pm$13.60) & 7517.64($\pm$19.95) & 10129.40($\pm$59.26) & 7504.22($\pm$ 44.20)  \\
		LinearPoisson        & 8798.26($\pm$14.58)   & 5920.99($\pm$13.66)  & 8864.90($\pm$66.07) & 5935.00($\pm$ 38.32)\\ 
		NNPoisson & 1620.46($\pm$47.71) & 1071.39($\pm$29.55) & 4150.03($\pm$77.76) & 2616.49($\pm$ 20.45)  \\
		L2     			& 1919.74($\pm$23.12) & 1421.36($\pm$21.35) & 3726.40($\pm$49.51) & 2526.77($\pm$27.89) \\
		Huber & 1914.34($\pm$33.87) & 1398.41($\pm$21.11) & 3675.03($\pm$40.67) & 2487.61($\pm$11.05)\\
		MDN     	& 2888.06($\pm$64.55) & 1456.31($\pm$76.21) & 3948.48($\pm$63.95) & \textbf{2298.47}($\pm$36.37) \\
		MDN(worst)    & 3506.32($\pm$166.30) & 1604.43($\pm$35.90) & 4452.52($\pm$166.65) & 2431.40($\pm$41.31)\\
		Implicit    			&	2075.33($\pm$7.01) & 1504.63($\pm$4.34) & \textbf{3674.08}($\pm$16.82) & 2419.54($\pm$12.22)\\
		Implicit(worst) & 2154.70($\pm$7.55) & 1602.03($\pm$5.35) & 3749.95($\pm$16.36) & 2514.99($\pm$13.51) \\
		\bottomrule
	\end{tabular}
\end{table*}

\begin{table*}[h]
	\caption{Prediction errors on song year dataset. All numbers are multiplied by $10^2$.}
	\label{song-year}
	\centering
	\begin{tabular}{l  l  l  l  l }
		\cmidrule{1-5}
		Algorithms   			& Train RMSE     			& Train MAE 				& Test RMSE 					& Test MAE \\
		\midrule
		LinearReg    & 956.40($\pm$0.37) & 681.56($\pm$0.68) & 957.56($\pm$1.49) & 681.66($\pm$1.52) \\
		L2     						& 850.20($\pm$2.50) & 590.18($\pm$1.63) & 895.77($\pm$2.75) & 608.42($\pm$1.03)\\
		Huber & 872.56($\pm$5.00) & 569.49($\pm$1.75) & 898.33($\pm$2.67) & \textbf{581.48}($\pm$1.37) \\
		MDN	&	955.85($\pm$12.97) & 605.58($\pm$4.02) & 957.76($\pm$13.53) & 615.57($\pm$4.37) \\
		MDN(worst)   & 1699.48($\pm$239.09) & 1212.82($\pm$230.93) & 1700.11($\pm$240.09) & 1215.60($\pm$231.33)\\
		Implicit    & 876.71($\pm$1.88) & 598.68($\pm$4.11) & \textbf{890.61}($\pm$2.87) & 606.91($\pm$3.48)\\
		Implicit(worst) & 886.83($\pm$2.92) & 604.16($\pm$2.08) & 896.08($\pm$3.00) & 612.25($\pm$2.53)\\
		\bottomrule
	\end{tabular}
\end{table*}

\subsubsection{A classic inverse problem}\label{Sec:classic-inverse}

One important type of applications of multi-value function prediction is inverse problem. We now show additional results on a classical inverse function domain as used in~\citep{bishop1994mixture}. The learning dataset is composed as following. 
\begin{equation}\label{bishop-inverse-sin}
x = y + 0.3\sin(2\pi y) + \xi, y\in[0, 1]
\end{equation}
where $\xi$ is a random variable representing noise with uniform distribution $U(-0.1, 0.1)$. We generate $80$k training examples. In Figure~\ref{fig:inversesin}, we plot the training dataset, and predictions by our implicit function learning algorithm with ($\argmin_y f_\theta(x, y)^2 + (\frac{\partial f_\theta(x, y)}{\partial y} + 1)^2$). We search over $200$ evenly spaced $y$s in $[0, 1]$ for $200$ evenly spaced $x \in [0, 1]$ to get points in the form of $(x, y)$s. 

\begin{figure*}[!htpb]
	\centering
	\subfigure[Training data]{
		\includegraphics[width=\figwidththree]{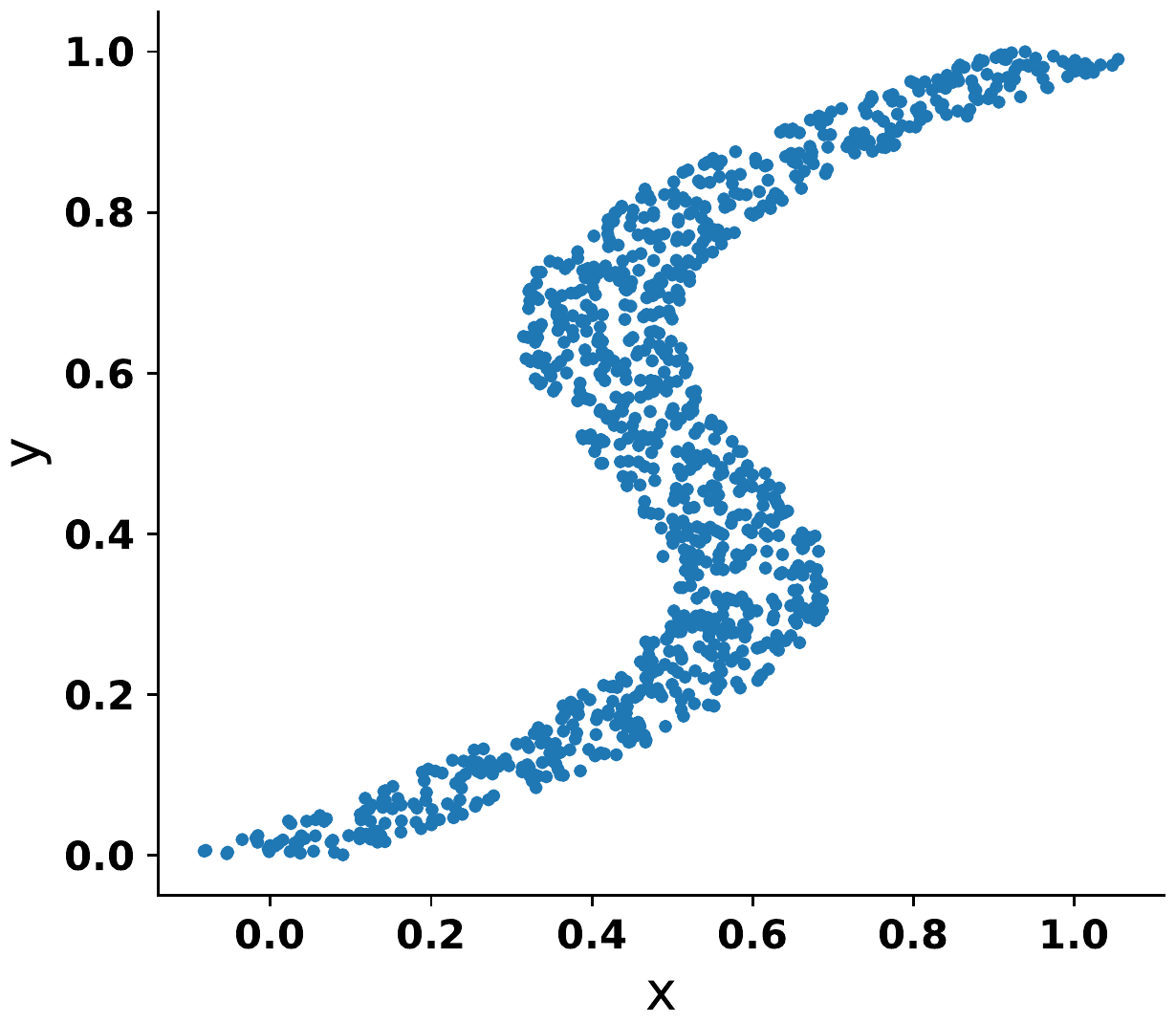}}
	\subfigure[Predictions by Implicit]{
		\includegraphics[width=\figwidththree]{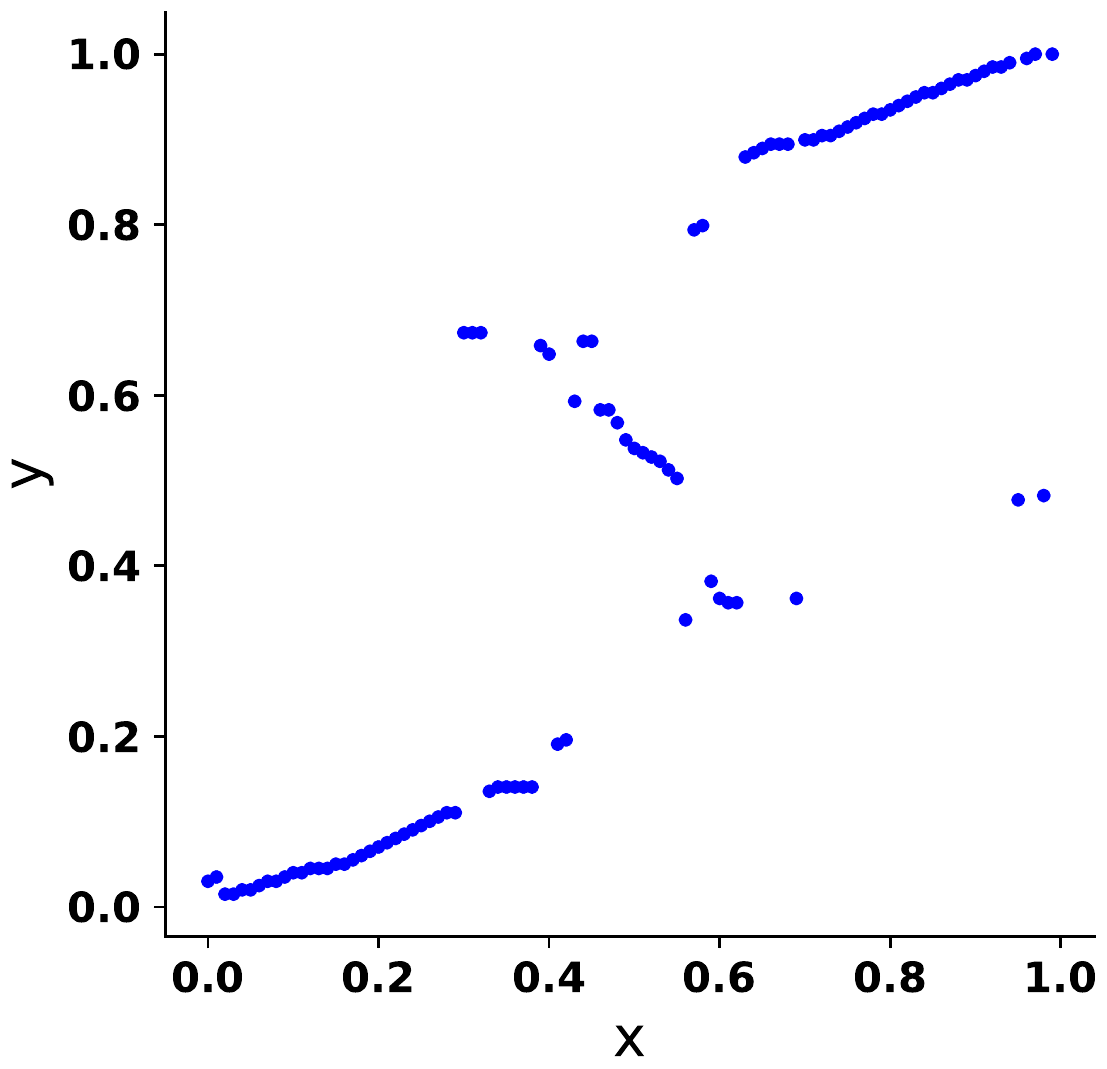}}
	\caption{
		Figure (a) shows what the training data looks like. (b) shows the predictions of our implicit learning approach.
	}\label{fig:inversesin}
\end{figure*}

\subsection{Related Areas}\label{Sec:appendix-relatedarea}

We would like to include a brief discussion about the related areas to our parametric modal regression approach. For the purpose of estimating modes, generative models~\citep{jebara2002thesis,li2020multimodal} can be also used. However, learning a generative model may be a more difficult task. It is worth doing a rigorous comparison between our method and some classical generative models in terms of sample efficiency. M-best MAP inference~\citep{batra2012mbest} may be also adapted to predict conditional modes. A conceptually similar but distinct work is score matching~\citep{aapo2005scorematching}, where the score function can be the derivative of the probability density function w.r.t. data and the score function evaluated at observations in the training set would be pushed to zero. 

\fi

\end{document}